\def\eqref#1{equation~\ref{#1}}
\def\1{\bm{1}}
\def\eps{{\epsilon}}
\DeclareMathAlphabet{\mathsfit}{\encodingdefault}{\sfdefault}{m}{sl}
\SetMathAlphabet{\mathsfit}{bold}{\encodingdefault}{\sfdefault}{bx}{n}
\DeclareMathOperator*{\argmin}{arg\,min}
\pgfplotsset{compat=1.3}
\definecolor{mydarkblue}{rgb}{0,0.08,0.85}
\definecolor{mylightblue}{rgb}{0.06,0.56,1.0}
\definecolor{mylightorange}{rgb}{1.0,0.62,0.12}
\definecolor{mylightred}{rgb}{0.99,0.00,0.04}
\definecolor{mygreen}{HTML}{2F9E44}
\definecolor{myred}{HTML}{E03131}
\definecolor{myblue}{HTML}{1971C2}
\definecolor{myred}{HTML}{E03131}
\colorlet{Changes@Color}{myred}
\def\l{\left}
\def\r{\right}
\title{Towards Principled Evaluations of Sparse Autoencoders for\\
Interpretability and Control}
\author{Aleksandar Makelov$^*$\\
\texttt{aleksandar.makelov@gmail.com}
\And 
Georg Lange$^*$\\
\texttt{mail@georglange.com}\\
\And
Neel Nanda  \\
\texttt{neelnanda27@gmail.com} \\
}
\newtheorem{theorem}{Theorem}[section]
\newtheorem{lemma}[theorem]{Lemma}
\begin{document}
\maketitle
\begin{abstract}

Disentangling model activations into meaningful features is a central problem in
interpretability. However, the absence of ground-truth for these features in
realistic scenarios makes validating recent approaches, such as sparse
dictionary learning, elusive. To address this challenge, we propose a framework for
evaluating feature dictionaries in the context of specific tasks, by comparing
them against \emph{supervised} feature dictionaries. First, we demonstrate that
supervised dictionaries achieve excellent approximation, control, and
interpretability of model computations on the task. Second, we use the
supervised dictionaries to develop and contextualize evaluations of unsupervised
dictionaries along the same three axes.

We apply this framework to the indirect object identification (IOI) task using
GPT-2 Small, with sparse autoencoders (SAEs) trained on either the IOI or
OpenWebText datasets. We find that these SAEs capture interpretable features for
the IOI task, but they are less successful than supervised features in
controlling the model. Finally, we observe two qualitative phenomena in SAE
training: feature occlusion (where a causally relevant concept is robustly
overshadowed by even slightly higher-magnitude ones in the learned features),
and feature over-splitting (where binary features split into many smaller,
less interpretable features). We hope that our framework will provide a useful
step towards more objective and grounded evaluations of sparse dictionary
learning methods.

\end{abstract}
\begingroup
\renewcommand{\thefootnote}{}
\footnotetext{$^*$: Joint contribution. Correspondence to: \texttt{aleksandar.makelov@gmail.com}.}
\addtocounter{footnote}{-0}
\endgroup

\section{Introduction}
\label{sec:intro}

While large language models (LLMs) have demonstrated impressive
\citep{vaswani2017attention, bert, radford2019language, brown2020language, gpt4}
results, the mechanisms behind their successes and failures
largely remain a mystery \citep{olah2023interpretability}. A central problem in
this area is how to \emph{disentangle} internal model representations into
meaningful concepts or \emph{features}. If successful at scale, this research
could provide significant scientific and practical value, enabling enhanced
model robustness, controllability, interpretability, and debugging
\citep{gandelsman2023interpreting,nanda2023emergent,marks2024sparse}.

A leading hypothesis for how LLMs represent and use features is the \emph{linear
representation hypothesis}
\citep{Mikolov2013LinguisticRI,grand2018semantic,li2021implicit,abdou2021can,nanda2023emergent}.
A strong version of this hypothesis posits that individual activations of a model can be
decomposed into sparse linear combinations of features from a large,
shared \emph{feature dictionary}. Recently, a series of works has proposed
applying the (unsupervised) \emph{sparse autoencoder} (SAE) framework to find
such dictionaries
\citep{Olshausen1997SparseCW,Faruqui2015SparseOW,goh2016decoding,arora2018linear,Yun2021TransformerVV,cunningham2023sparse,bricken2023monosemanticity}.

\begin{figure}[ht]
    \centering
    \includegraphics[width=\textwidth]{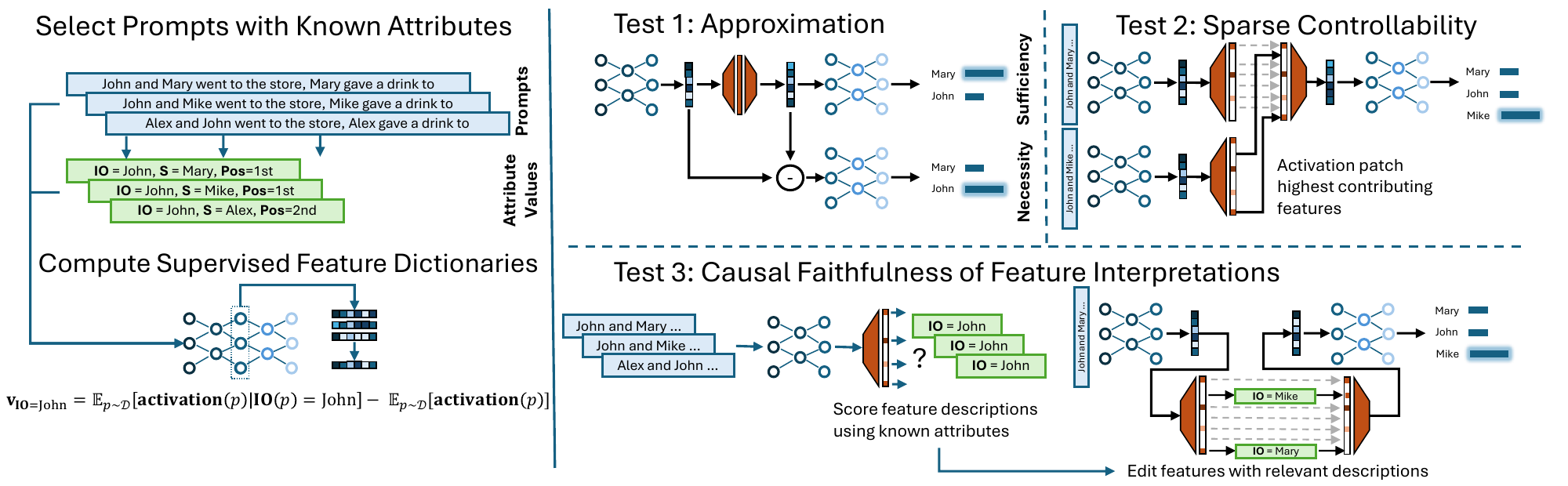}
    \caption{Overview of our evaluation pipeline. We begin by selecting a
    specific model capability and then disentangling model activations into
    capability-relevant features using supervision. Then, we evaluate a given
    feature dictionary w.r.t.\ this capability, using the supervised features as
    a benchmark. We test the extent to which (1) the feature dictionary's
    reconstructions of the activations are necessary and sufficient for the
    capability, (2) the features can be used to edit capability-relevant
    information in internal model representations (agnostic of feature
    interpretations), and (3) the features can be interpreted w.r.t. the
    capability in a manner consistent with their causal role.}
    \label{fig:graphical-abstract}
\end{figure}

While there are promising initial results \citep{bricken2023monosemanticity},
this research area faces a key obstacle: we cannot directly evaluate the
usefulness of features learned by an SAE, as we do not know the hypothetical
`true' features to begin with; indeed, finding them is the reason we use
SAEs in the first place. This has two consequences: (1) the training objective
of an SAE -- balancing $\ell_2$ reconstruction with an $\ell_1$ penalty on
feature coefficients -- is only a substitute for the true goal of finding meaningful features, and (2)
the metrics used to evaluate SAEs are indirect, relying on proxies for the
features, success in toy models, or non-trivial assumptions on SAE learning
\citep{elhage2022toy,bricken2023monosemanticity,sharkey2023taking}. This hampers
progress in the field, as there is no simple, objective and direct way to
compare trained SAEs. In this paper, we take steps towards addressing this
challenge. Our contributions are outlined as follows:
\begin{itemize}
\item We propose a principled method for computing sparse feature dictionaries to
decompose language model activations on realistic tasks, using supervision from
attributes describing the prompts.
\item We apply this method to the IOI task, demonstrating that these dictionaries exhibit
three desirable properties in the task's context: (1) sufficiency and necessity of
activation reconstructions, (2) sparse controllability of model behavior via
feature editing, and (3) interpretability of the features consistent with their
causal role\footnote{
As a by-product, our supervised feature dictionaries also demonstrate that
task activations can be usefully disentangled in a way that adheres to
the linear representation hypothesis and exhibits superposition. To the best of our knowledge, this is the
first time such a disentanglement has been achieved in a realistic LLM task.}.
\item We use these feature dictionaries to design and contextualize evaluations
of \emph{unsupervised} feature dictionaries along the same three axes.
Importantly, we aim to develop evaluations that are agnostic to whether the
unsupervised dictionaries use the same concepts as the supervised
ones. 
\item We apply this methodology to feature dictionaries learned by SAEs trained on
either the IOI dataset (task-specific SAEs) or the LLM's pre-training dataset
(full-distribution SAEs). We find that both types of SAEs find 
interpretable features for the task, but task-specific SAEs allow us to edit
attributes by changing fewer features compared to full-distribution SAEs.
However, both kinds of SAEs fall short of supervised dictionaries.  
\item Finally, we dive deeper into some qualitative phenomena observed in
task-specific SAEs: feature occlusion (where SAEs have a tendency to learn only
the higher-magnitude of two task-relevant attributes, even when the attributes
have slightly different magnitudes), and feature over-splitting (where SAEs tend
to learn $\gg2$ features without a clear interpretation for binary concepts). We
reproduce both phenomena in simple toy models, suggesting they may generalize
beyond our specific setting.
\end{itemize}

Our results underscore the need for more principled training and evaluation
methods in this active area and suggest that supervised feature dictionaries
can be a valuable tool for automating aspects of this process.

\section{Preliminaries}
\label{sec:prelims}

\textbf{The linear representation hypothesis and sparse autoencoders.} A
central hypothesis in interpretability is the \emph{linear representation
hypothesis}. A strong variant of this hypothesis posits that model activations can be decomposed into meaningful features
using a \emph{sparse feature dictionary}: given a location in the model (e.g., an
attention head output), there exists a set of vectors
$\{\mathbf{u}_i\}_{i=1}^{m}$ such that each activation $\mathbf{a}$ at this location can
be approximated as a sparse linear combination of the $\mathbf{u}_j$ with
non-negative coefficients. In particular, recent work suggests that
$n$-dimensional activations $\mathbf{a}\in \mathbb{R}^n$ may be best described by $m\gg n$ such features in
\emph{superposition} \citep{elhage2022superposition,gurnee2023finding}.
Recently, SAEs have been proposed as a way to disentangle these features. Following the
setup of \citet{bricken2023monosemanticity} here and in the rest of this work, a sparse autoencoder (SAE) is an
unsupervised model which learns to reconstruct activations
$\mathbf{a}\in\mathbb{R}^n$ as a weighted sum of $m$ features
with non-negative weights. Specifically, the autoencoder computes a hidden
representation
\begin{align*}
    \mathbf{f} = \operatorname{ReLU}\l(W_{enc} \l(\mathbf{a}-\mathbf{b}_{dec}\r) +
    \mathbf{b}_{enc}\r)
\end{align*}
and a reconstruction
\begin{equation}
\label{eq:sae-reconstruction}
    \widehat{\mathbf{a}} = W_{dec} \mathbf{f} + \mathbf{b}_{dec} = \sum_{j=1}^{m} \mathbf{f}_j (W_{dec})_{:,j} + \mathbf{b}_{dec}
\end{equation}
where $W_{enc}\in \mathbb{R}^{m\times n}$, $W_{dec}\in \mathbb{R}^{n\times m},
\mathbf{b}_{dec} \in \mathbb{R}^n, \mathbf{b}_{enc}\in \mathbb{R}^m$ are learned
parameters. The rows of $W_{enc}$ are the \emph{encoder directions}, and the columns
of $W_{dec}$ are the \emph{decoder directions}. 
Similarly, $\mathbf{b}_{enc}$ is the encoder bias and $\mathbf{b}_{dec}$ is the decoder bias. 
The decoder directions determine the features we decompose the activations into,
while the encoder directions compute the coefficients of these features for a 
given activation. The decoder directions are constrained to have unit norm:
$\l\|(W_{dec})_{:,i}\r\|_2=1$. The training objective over examples
$\{\mathbf{a}^{(k)}\}_{k=1}^N$ is the sum of the MSE between the
activations $\mathbf{a}^{(k)}$ and their reconstructions
$\widehat{\mathbf{a}}^{(k)}$, and the $\ell_1$ regularization term
$\lambda\sum_{k=1}^{N} \left\|\mathbf{f}^{(k)}\right\|_1$, where $\lambda$ is
the $\ell_1$ regularization coefficient.

\textbf{The IOI task.}
In \citet{wang2022interpretability}, the authors analyze how the decoder-only
transformer language model GPT-2 Small \citep{radford2019language} performs the
Indirect Object Identification (IOI) task. 
In this task, the model is required to complete sentences of the form `When Mary
and John went to the store, John gave a book to' (with the intended
completion in this case being ` Mary'). We refer to the repeated name (John) as
\textbf{S} (the subject) and the non-repeated name (Mary) as \textbf{IO} (the
indirect object). For each choice of the \textbf{IO} and \textbf{S} names, there
are two patterns the sentence can have: one where the \textbf{IO} name comes
first (we call these `ABB examples'), and one where it comes second (we call
these `BAB examples'). We refer to this binary attribute as the \textbf{Pos}
attribute (short for position). Additional details on the data distribution,
model and task performance are given in Appendix \ref{app:ioi-dataset-details}.

\citet{wang2022interpretability} discover several classes of attention heads in GPT2-Small that
collectively form the \emph{IOI circuit} solving the IOI task. Specifically,
\citet{wang2022interpretability} argue that the circuit implements the
algorithm: 
\begin{enumerate}
\item detect the (i) position in the sentence and (ii) identity of the repeated
name in the sentence (i.e., the \textbf{S} name). This information is computed
and moved by \emph{duplicate token}/\emph{induction} and \emph{S-Inhibition}
heads;
\item based on the two signals (i) and (ii), exclude this name from the
attention of the \emph{name mover heads}, so that they copy the remaining name
(i.e., the \textbf{IO} name) to the output.
\end{enumerate}
We refer the reader to Appendix \ref{app:ioi-circuit-details} and Appendix
Figure \ref{fig:ioi-circuit} for more details on the IOI circuit.

\emph{The logit difference metric.}
To discover the circuit, \citet{wang2022interpretability} used the logit
difference: the difference in log-probabilities assigned by the model to the
\textbf{IO} and \textbf{S} names. This metric is more sensitive than accuracy,
which makes possible the detection of individual model components with a
consistent but non-pivotal role in the task. Accordingly, we also use the logit
difference throughout this work to evaluate the causal effect of fine-grained
model interventions.

\section{Methods for Evaluating Feature Dictionaries}
\label{section:methods}

\subsection{Overview and Motivation}
\label{subsection:}
In this section, we describe and motivate our methodology for evaluating sparse
feature dictionaries in the context of a specific task an LLM can perform.
Throughout, let $\mathcal{D}$ be a distribution over input prompts for the task.

\textbf{Step 1: Parametrize inputs via (task-relevant) attributes.} First, we
choose attributes $a_i: \operatorname{support}(\mathcal{D})\to S_i$ taking values in some finite sets
$S_i$.  For example, in the IOI task we will focus on the attributes
\textbf{IO}, \textbf{S}, and \textbf{Pos} described in Section
\ref{sec:prelims}, with \textbf{IO} and \textbf{S} taking values in the set of
names in our dataset, and \textbf{Pos} taking values in the set $\{\text{ABB},
\text{BAB}\}$.

\textbf{Step 2: Compute high-quality supervised dictionaries using the attributes.}
The second step is to learn and validate \emph{supervised} dictionaries computed
using the attribute labels $a_i(p),p\sim \mathcal{D}$ (methodology described in
Section \ref{section:manual-sae}). In particular, not all attribute
configurations will result in dictionaries with good approximation, control and
interpretability for the task\footnote{
To get high-quality supervised dictionaries, we should choose attributes that
completely capture the task-relevant information in the prompt, and are moreover
compatible with the intermediate states of the model's computation on the task.
Validating that our attributes satisfy this is a non-trivial but key
prerequisite for our evaluation to be meaningful. We do this for the IOI task in
Section \ref{section:manual-sae}, where we show the supervised dictionaries
score highly on all tests described in the current section.
}.

\textbf{Step 3: Evaluate (unsupervised) dictionaries using the supervised
dictionaries as a benchmark.} Finally, we evaluate the usefulness of a given
feature dictionary for the chosen task, using the supervised features as a
reference point in various ways (described in the next subsections). The
supervised dictionaries are a necessary part of this evaluation because they
allow us to \emph{contextualize} performance differences between feature
dictionaries for the chosen task. 

\textbf{Why are supervised dictionaries a meaningful benchmark?} On the one
hand, supervised dictionaries' features are hard-coded to
correspond to the `human-salient' attributes $a_i$. So, our evaluations risk
overlooking useful feature dictionaries that do not align with this human-chosen
ontology. On the other hand, the saliency of the attributes is precisely what
makes them desirable targets for controlling and interpreting the model; for
example, in the IOI task it is natural to want to change e.g. the representation
of the \textbf{IO} name in the model's computation. 
Thus, our evaluations aim to balance two opposing goals: establish properties
useful from a human standpoint, while still being agnostic to the exact concepts
represented in the unsupervised dictionary.

\subsection{Test 1: Sufficiency and Necessity of Dictionary Reconstructions for the Task}
\label{subsection:test-1-sufficiency-necessity}

\begin{figure}[ht]
    \centering
    \includegraphics[width=0.48\textwidth]{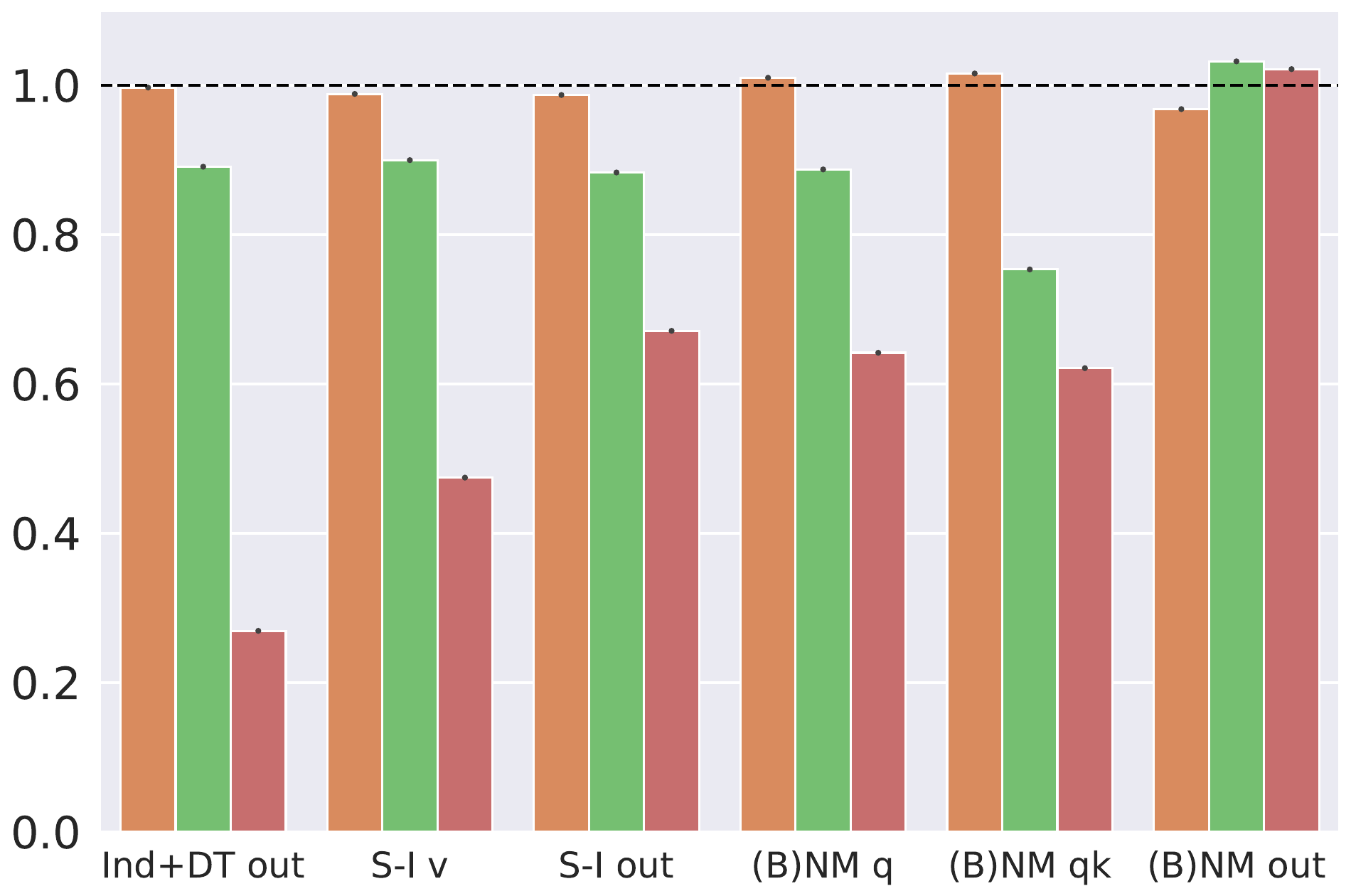}
    \hfill
    \includegraphics[width=0.48\textwidth]{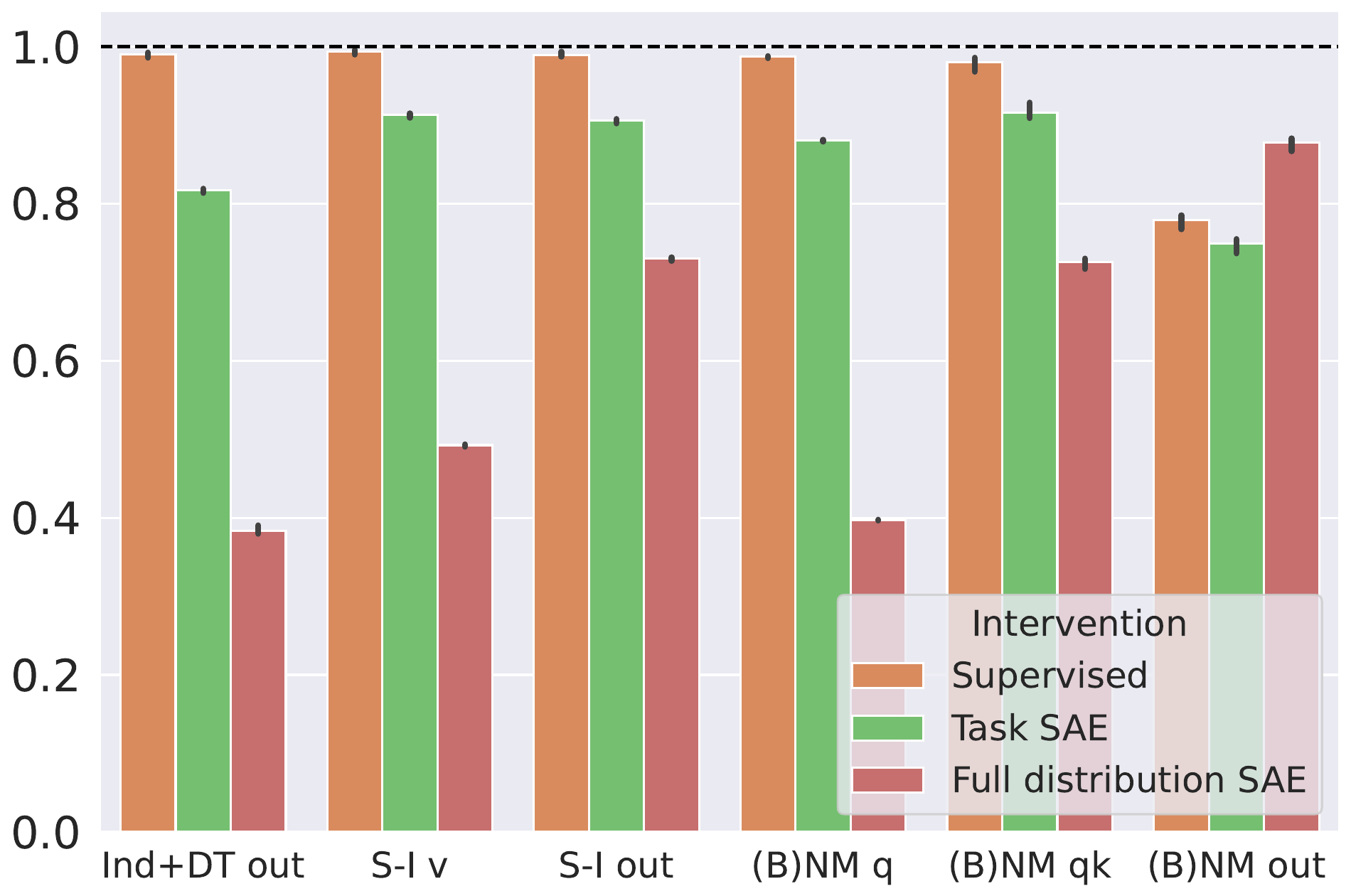}
    \caption{
        Sufficiency (left) and necessity (right) evaluations of reconstructions
        of cross-sections of the IOI circuit computed using supervised feature
        dictionaries, task- and full-distribution SAEs. 
        \textbf{Left}: average logit difference when replacing activations in
        cross-sections of the IOI circuit with their reconstructions, normalized
        by the average logit difference over the data distribution in the
        absence of intervention (a $y$-axis value of $1$ is best).
        \textbf{Right}: drop in logit difference when deleting reconstructions,
        normalized by the respective drop when performing mean-ablation, and
        linearly rescaled so that values close to 1 are best. See Appendix 
        \ref{app:ioi-supervised-details} for details.}
    \label{fig:logitdiff-faithfulness-completeness}
\end{figure}

Our first test checks if activation reconstructions $\widehat{\mathbf{a}}$ \emph{as
a whole} are sufficient and necessary for the model to perform the
task\footnote{This test is analogous to the faithfulness/completeness test from
\citet{wang2022interpretability}; the sufficiency test is also widely used in
the literature to evaluate feature dictionaries.}.  
To evaluate \textbf{sufficiency}, we intervene by replacing
internal activations $\mathbf{a}$ with their reconstructions
$\widehat{\mathbf{a}}$, and measure the drop in performance on the task. To evaluate
\textbf{necessity}, we intervene by replacing activations
$\mathbf{a}$ with $\mathbb{E}_{\mathcal{D}}\l[\mathbf{a}\r] + \l(\mathbf{a} -
\widehat{\mathbf{a}}\r)$, and compare the resulting drop in performance to the
effect of replacing activations with $\mathbb{E}_{\mathcal{D}}\l[\mathbf{a}\r]$
alone (also known as \emph{mean ablation}; see Appendix
\ref{app:methods-details} for motivation)
\footnote{
This test has very few assumptions: it is independent of the attributes we have
chosen to describe inputs with, and of any interpretation assigned to the
features. However, it only measures the quality of the dictionary as a whole,
and not how effectively and sparsely it disentangles any hypothetical concepts
in the activations. We include this test because poor performance indicates a
fundamental failure of the feature dictionary relative to the task.
}. 

\subsection{Test 2: Sparse Controllability of Attributes}
\label{subsection:test-2-sparse-controllability}
With this test, we want to measure the degree to which the feature dictionary
can be used to control the model's behavior on the task by editing intermediate
representations of attribute values.
To use a feature dictionary for editing, we write an activation as a linear combination of dictionary vectors, and try to remove/add dictionary elements to the combination in order to achieve the desired change.
To evaluate such an intervention, we consider both (1) the number of features in the dictionary that need to be changed
in order to achieve a given change in outputs (editing is trivial if e.g.\
activations are dense sums of random features and we change many of them), and
(2) the geometric magnitude of the edit (editing is trivial if you are willing
to throw away the entire activation vector and replace it with the target). Jump ahead to Figure \ref{fig:pareto-combined} to see what trade-offs for these parameters look like in our experiments.

\textbf{Being agnostic to feature interpretations.}
Importantly, we require this test to be agnostic to any human interpretation of
the feature dictionary. This is valuable for several reasons: (1) even if our
attributes are compatible with model computations, the feature dictionary may
not be directly interpretable in terms of these
attributes (see Appendix \ref{app:possible-decompositions} for hypothetical
examples in the IOI task); (2) human-generated feature interpretations, even
with meticulous methodology, may be subjective; (3) sparse
control may be possible to a useful degree even if the features are inherently
non-interpretable from a human perspective. 
Thus, this test should evaluate the degree to which the feature dictionary
sparsely disentangles the attributes, but is independent of whether the features
are interpretable in terms of the attributes (or at all)
\footnote{Note that, to meaningfully conclude disentanglement of attributes from
this test, the task must be one where at least some of the activations represent
multiple attributes simultaneously. This is true in the IOI task, where many
circuit locations represent the \textbf{S} and \textbf{Pos} information
simultaneously. We also find that one particular location, the queries of the L10H0 name mover head, represent all three attributes \textbf{IO}, \textbf{S} and \textbf{Pos}.}.

\textbf{Implementation design space.}
To achieve independence of interpretations, we frame the problem as a
combinatorial optimization problem over the feature dictionary: given an
activation $\mathbf{a}_s$ for prompt $p_s$ and a \textbf{counterfactual} activation
$\mathbf{a}_t$ for a prompt $p_t$ which differs from $p_s$ only in the values of
the attributes we wish to edit, we can optimize over subsets of features active
in $\widehat{\mathbf{a}_s}$ to subtract from $\mathbf{a}_s$, and subsets of
features active in $\widehat{\mathbf{a}_t}$ to add to $\mathbf{a}_s$. There are
multiple optimization objectives possible, such as making the edited activation
similar to $\mathbf{a}_t$ geometrically, or making model behavior on the edited
activation similar to model behavior on $\mathbf{a}_t$.
The optimization may be constrained by the number of features changed and/or the
magnitude of edits
\footnote{
There are many possible ways to instantiate such an optimization; in Section
\ref{section:sae-evaluation} we present one such way based on a greedy algorithm
minimizing $\ell_2$ distance in activation space. Further methodological notes, 
such as how we implement edits in multiple model components at once, are given
in Appendix \ref{app:methods-details}.
}.
Importantly, this optimization proceeds on a
\textbf{per-prompt} basis, which means that it can select different features to
edit depending on the prompt being edited. This means it will not disadvantage
dictionaries that do not dedicate a uniform w.r.t.\ all prompts set of features
to each attribute (which would be a major assumption to impose).

\textbf{Evaluation.} To measure how well an edit changed model behavior, we
compare against the `ground truth' change in behavior that would be achieved by
intervening on the model to replace $\mathbf{a}_s$ with $\mathbf{a}_t$ directly.
To contextualize the magnitude of the edit, we can compare the contribution of
the changed features to the reconstruction against the analogous quantity for
our supervised feature dictionary (see Appendix \ref{app:methods-details} for
details), as we do later in Section \ref{section:manual-sae}.

\subsection{Test 3: Interpretability}
\label{subsection:test-3-interpretability}
With our final test, we want to assess the degree
to which the feature dictionary can be interpreted in terms of the task-relevant
attributes and the features are causally relevant for the model's
behavior in a way consistent with their interpretations. While failing this test
does not necessarily mean that the feature dictionary is not useful (e.g., it
could still be useful for control, or interpretable with respect to different
attributes), passing it would increase our confidence in the feature
dictionary's utility and our understanding of the model's computation. More
broadly, passing this test on a wide range of important tasks would represent a
qualitative improvement over control alone in applications such as auditing,
debugging and verification.

\textbf{Assigning interpretability scores.} Given a feature $\mathbf{u}_j$
from the dictionary, its \emph{active set} $F\subset
\operatorname{supp}\l(\mathcal{D}\r)$ is the subset of the support of
$\mathcal{D}$ where $\mathbf{f}_j >0$. Given
a binary property of inputs $P:\mathcal{D}\to\{0,1\}$, following
\citet{bricken2023monosemanticity}, we say that $P$ is a good interpretation of
the feature if $F$ has high precision and recall relative to $P$
\footnote{Clearly, the best interpretation of $F$ according to these metrics
alone is the indicator $\mathbf{1}_F$ itself; for $P$ to be a useful
interpretation from a human perspective, it must in addition be `simple' enough,
a property harder to formalize.}.
We combine the precision and recall metrics into a single number using the $F_1$
score. 

For example, each attribute defines $\left|S_i\right|$ binary properties
$\mathbf{1}_{a_i(p)=v}$ for $v\in S_i$ that we can use to try to interpret the
feature dictionary. When considering a set of binary properties $\{P_j\}_{j\in
J}$ as possible interpretations for a feature, we pick the one with the highest
$F_1$ score and assign this as the interpretation of the feature. Other
interpretability measures are possible; for a discussion of the limitations of
the $F_1$ score, see Appendix \ref{app:methods-details}.

\textbf{Which interpretations to consider?} As with controllability, we should
not assume that the features correspond 1-to-1 with the attributes we have
chosen. Thus, evaluating the $F_1$ scores only for the binary properties
$\mathbf{1}_{a_i(p)=v}$ corresponding to a single value for a single attribute
may be subjective. On the other hand, when the chosen attributes result in
a good supervised dictionary (required by Step 2 of our method), the attribute
values will be sufficient to deduce the causally important information in model
activations. This suggests that causally-relevant features in an arbitrary
feature dictionary will correspond to particular subsets of the cartesian
product $\prod_i S_i$ of the values the attributes can take.

This motivates us to look for interpretations that are expressible as
intersections and unions of the indicator sets $\mathbf{1}_{a_i(p)=v}$ of the
chosen attributes. We use heuristics to navigate this search space. We present
one possible implementation of this in Section \ref{section:sae-evaluation} for
the IOI task.  As we will see, this choice, while possibly arbitrary/subjective,
is to some extent validated empirically: we find that many features can be
interpreted in this way for a significantly high $F_1$ score threshold. We further discover some
interpretable structure in the ways attribute values group together in unions.

\textbf{Causal evaluation of interpretations.} Finally, while our previous
interpretability methods look for correlations with properties of the input
prompts, we also want to know if the interpretations we assign to features in
the dictionary are consistent with their causal role in the model's computation.
There are two increasingly demanding ways to check this that mirror our first
and second tests (Subsections \ref{subsection:test-1-sufficiency-necessity} and \ref{subsection:test-2-sparse-controllability}):
\begin{itemize}
\item \textbf{interpretation-aware sufficiency/necessity of reconstructions}:
similar to our first test (Subsection
\ref{subsection:test-1-sufficiency-necessity}), we can (1) subtract
non-interpretable features from activations and see if the model is still able
to perform the task; (2) subtract highly-interpretable features and see if
the model's performance degrades to the same extent as with mean ablation. This
evaluates whether our interpretability method as a whole flags the important
features for the task;
\item \textbf{interpretation-aware sparse controllability}: like our second test
(Subsection \ref{subsection:test-2-sparse-controllability}), but explicitly
using highly-interpretable features (with respect to a given attribute) to
remove/add in order to edit an activation. This evaluates whether
interpretations that specifically relate features to attributes find the
attribute-relevant features.
\end{itemize}

\section{Computing and Validating Supervised Feature Dictionaries}
\label{section:manual-sae}

We first present our methods and results for computing \emph{supervised} feature
dictionaries, in which features correspond 1-to-1 with the possible values of
attributes we have chosen. This is a key prerequisite for our evaluation of SAEs
later on, as it (1) verifies that the attributes we have chosen are compatible
with the model's internal states; (2) demonstrates the existence of high-quality
sparse feature dictionaries for the task; and (3) provides an `ideal'
reference for SAE evaluation.
Specifically, we will establish that feature dictionaries for all locations in
the IOI circuit exist with the following properties:
\begin{itemize}
\item they approximate activations well using as few as 3 active features per
activation;
\item any of the \textbf{IO}, \textbf{S}, and \textbf{Pos} attributes can be
edited precisely by replacing only 1 active feature with a different one;
\item the features in the dictionary are by construction interpretable w.r.t.\
the attributes, and furthermore, interactions in the IOI circuit exhibit 
sparsity when decomposed into feature-to-feature interactions.
\end{itemize}

\subsection{Computing Supervised Feature Dictionaries}
\label{subsection:}
\textbf{Algorithms.} Motivated by the linear representation hypothesis, we
conjecture that given activations $\mathbf{a}\l(p\r)\in \mathbb{R}^d$ of a given
model component (e.g., outputs of an attention head) for prompts $p\sim
\mathcal{D}$, there exists a choice of attributes $\{a_i:\mathcal{D}\to
S_i\}_{i\in I}$ such that
\begin{equation}
    \label{eq:manual-sparse-codes}
    \mathbf{a}(p) \approx \mathbb{E}_{p\sim \mathcal{D}}\l[\mathbf{a}(p)\r] + \sum_{i\in I}\mathbf{u}_{a_i(\cdot)=v} := \widehat{\mathbf{a}}
\end{equation}
where $\widehat{\mathbf{a}}$ is the \emph{reconstruction} of $\mathbf{a}$, and
$\mathbf{u}_{a_i(\cdot)=v}\in \mathbb{R}^d$ is a feature corresponding to the
$i$-th attribute having value $v\in S_i$
\footnote{This formulation is less expressive
than SAE reconstructions, as it effectively requires a given feature to always
appear with the same coefficient in reconstructions; we discuss this choice in
Appendix \ref{app:ioi-supervised-details}.}. 

Given a dataset of prompts $\{p_k\}_{k=1}^N$ with associated activations
$\{\mathbf{a}\l(p_k\r)\}_{k=1}^N$, how should we compute `good' values for the
vectors $\mathbf{u}_{a_i(\cdot)=v}$?  While we considered several ways to do so,
our best method for the IOI task is simple: we average all activations for
prompts for which $a_i(p)=v$. Formally,
\begin{align*}
    \mathbf{u}_{a_i(\cdot)=v} := \frac{1}{\left|\{k: a_i(p_k)=v\}\right|} \sum_{k: a_i(p_k)=v}^{} \mathbf{a}\l(p_k\r) - \overline{\mathbf{a}}
\end{align*}
where $\overline{\mathbf{a}} = \frac{1}{N}\sum_{k=1}^{N} \mathbf{a}\l(p_k\r)$ is
the empirical mean activation. We refer to these dictionaries as \textbf{mean
feature dictionaries}. One can prove that, in the limit of infinite data, the
mean features for an attribute not linearly detectable in the activations will
converge to zero (Appendix \ref{app:mean-codes-properties}).

The main alternative method we considered was \textbf{MSE feature dictionaries},
which use a least-squares linear regression to predict the activations from the
attribute values. We note that mean feature dictionaries work well in our
setting because the attributes we choose in the IOI task are
probabilistically independent in the IOI distribution; we recommend using MSE
dictionaries in general (and see Appendix \ref{app:mse-math} for more details on
MSE dictionaries and comparisons to mean dictionaries).

\textbf{Choosing attributes for the IOI task.} Not every set of attributes
will result in a good approximation of the model's internal activations; in
fact, we find that the choice of attributes is a crucial modeling decision.
Recall that, according to \citet{wang2022interpretability}, each IOI prompt $p$
is described by three properties influencing how $p$ is processed in the IOI
circuit: the subject (\textbf{S}) and indirect object (\textbf{IO}) names, and
their relative position (\textbf{Pos}). 
Motivated by this,
we chose the attributes \textbf{S}, \textbf{IO}, and \textbf{Pos} to describe
each prompt. We experimented with other choices of attributes, but did not find
them to be more successful in our tests (see Appendix
\ref{app:alternative-ioi-parametrizations} for details\footnote{ In particular,
we found that another set of attributes, though more expressive in principle,
learns to approximate the features we get using the \textbf{S}, \textbf{IO} and
\textbf{Pos} attributes through a change-of-variables-like transformation.}). We
emphasize that there are many other imaginable choices of attributes; see
Appendix \ref{app:possible-decompositions} for further discussion.

\begin{figure}[ht]
    \centering
    \includegraphics[width=\linewidth]{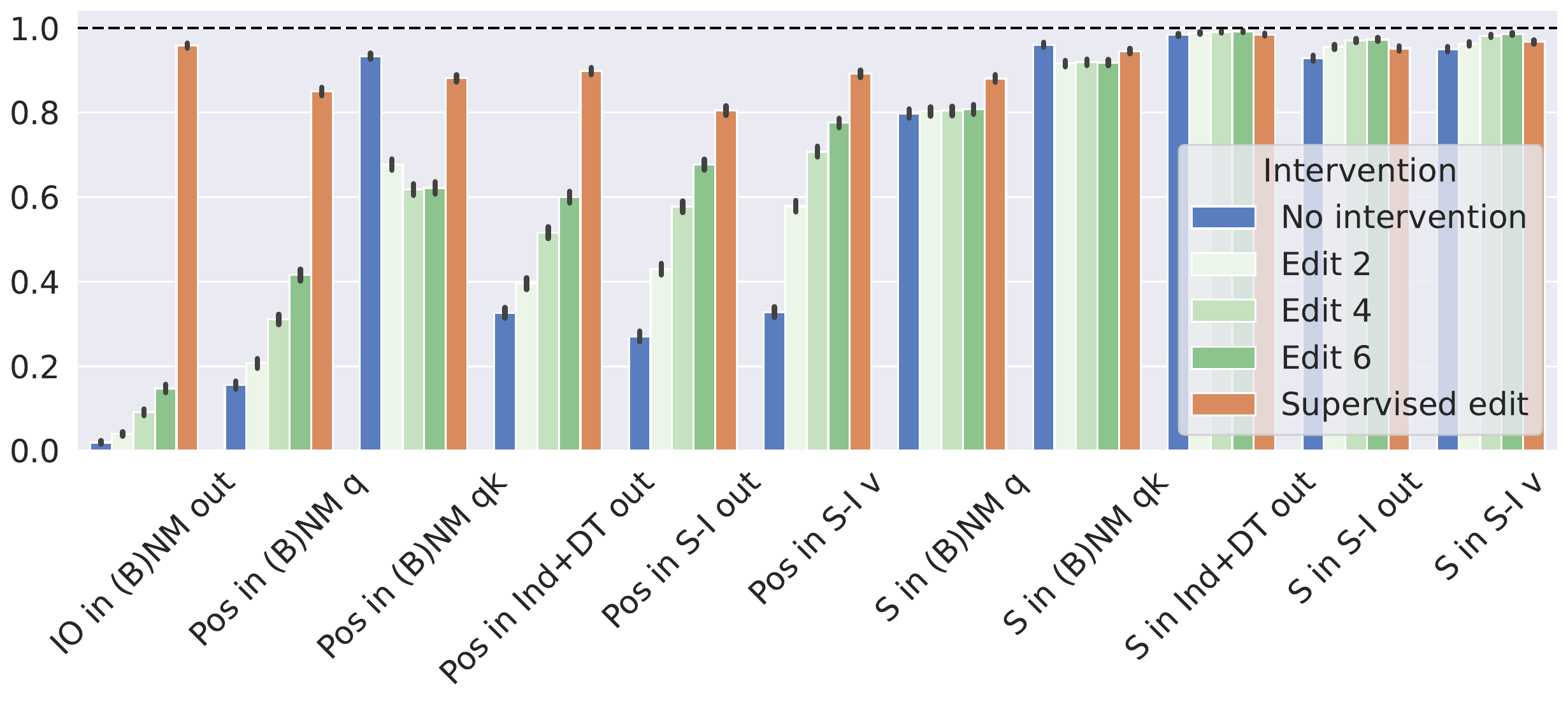}
    \caption{Accuracy when editing \textbf{IO}, \textbf{S} and \textbf{Pos} for
    circuit cross-sections using our supervised feature dictionaries and
    task-specific SAEs; the outcome in the absence of intervention is shown in
    blue for reference. When using task-specific SAEs, we edit either 2, 4 or 6
    features (which means we in total add and/or remove up to that many features
    from activations). For comparison, supervised edits always involve removing
    1 feature and adding 1 feature. Accuracy is measured as the proportion of
    examples for which the model's prediction agrees with the ground-truth
    prediction for the edit; see Section
    \ref{subsection:sae-evaluation-methodology} and Appendix
    \ref{app:ioi-supervised-details} for details.}
    \label{fig:edit-accuracy}
\end{figure}

\subsection{Evaluation Results}
\label{subsection:}

Our evaluations are carried out at the main cross-sections of the IOI circuit.
These cross-sections, as well as the information processing they perform and the
expected effect of editing attributes in them, are described in detail in
Appendix \ref{app:ioi-supervised-details}. These considerations guide our choice
of the attributes to edit in each given cross-section.

Sufficiency/necessity plots using the mean feature dictionaries are shown in
Figure \ref{fig:logitdiff-faithfulness-completeness} (orange bars); we find that
our supervised dictionaries are quite successful. For controllability, note
that attribute editing can naturally be defined in closed form, because we have
a 1-to-1 correspondence between attribute values and features by construction.
We find that simple feature arithmetic works quite well. Formally, given prompt
$p$ with $a_i(p)=v$, we can edit the attribute $a_i$ to have value $v'$ via
$\mathbf{a}_{edited}(p) := \mathbf{a}(p) - \mathbf{u}_{a_i(\cdot)=v} +
\mathbf{u}_{a_i(\cdot)=v'}$. Results for editing in cross-sections of the IOI
circuit are shown in Figure \ref{fig:edit-accuracy} (orange bars), where we
report the fraction of the time our intervention predicts the same token as the
ground-truth intervention that substitutes counterfactual activations in the
corresponding cross-section (a value of 1 is best). We observe that, when
editing the \textbf{S} attribute, not performing any intervention often already
mostly agrees with the ground-truth edit, effectively reducing the resolution of
our evaluation results for this attribute. Otherwise, we find our supervised
dictionaries to perform well, always achieving $>80\%$ agreement with the
ground-truth intervention.

Finally, the supervised feature dictionaries tautologically pass the
interpretability test, as they were defined to have a single feature activating
for each possible attribute value, achieving perfect $F_1$ scores. Accordingly, we
performed a more demanding test of interpretability: decomposing internal model
computations in terms of interactions between individual features. We find that
pre-softmax attention scores and composition between heads can be decomposed in
terms of feature-level interactions, such that many interactions are close to
zero, and the few non-zeros correspond to those expected based on the high-level
IOI circuit description from \citet{wang2022interpretability}; see Appendix
\ref{app:feature-level-mech-methodology} for details.

\section{Evaluating Task-Specific and Full-Distribution Sparse Autoencoders}
\label{section:sae-evaluation}

\subsection{Methodology}
\label{subsection:sae-evaluation-methodology}

\textbf{SAE training on the task-specific and full pretraining distributions.}
We trained SAEs on all IOI circuit locations, using activations from either the
IOI dataset (`task SAEs') or \textsc{OpenWebText} distribution
 \citep{Gokaslan2019OpenWeb} (`full-distribution SAEs'). While performance varied
strongly across circuit locations, most full-distribution SAEs had an
$\ell_0$-loss between 2 and 12 and a recovered loss fraction (against a mean ablation baseline)
between $0.4$ and $0.9$ (both measured on \textsc{OpenWebText}). Similarly, most
task-SAEs had an $\ell_0$-loss below $25$ and a recovered logit difference
fraction against mean ablation $>0.8$ (both measured on the IOI dataset)
\footnote{Importantly, we did not perform exhaustive hyperparameter tuning to
train these SAEs, as our main goal was to evaluate the methodology and how it
can distinguish between different classes of feature dictionaries, rather than
to achieve state-of-the-art performance. Thus it is possible that significantly
better performance could be achieved with more tuning. Indeed, it is our hope
that the methods we present here will be useful for tuning SAEs in the future.}.
Further details are given in Appendix \ref{app:sae-training-details}.

\textbf{Sparse controllability implementation in the IOI task.}
We instantiate the sparse controllability test from Subsection
\ref{subsection:test-2-sparse-controllability} as follows. Suppose our SAE has a dictionary of decoder
vectors $\{\mathbf{u}_j\}_{j=1}^m$, and the original and counterfactual
activations $\mathbf{a}_s, \mathbf{a}_{t}$ have reconstructions respectively
\begin{align*}
    \widehat{\mathbf{a}}_s = \sum_{i\in S} \alpha_i\mathbf{u}_i + \mathbf{b}_{dec}, \quad 
    \widehat{\mathbf{a}}_{t} = \sum_{i\in T} \beta_i\mathbf{u}_i + \mathbf{b}_{dec}
\end{align*}
for $S, T \subset \{1, \ldots, m\}$ and $\alpha_i, \beta_i > 0$. Consider the
optimization problem
\begin{align*}
    \min_{R\subset S, A\subset T, \left|R\cup A\right|\leq k} \l\|\mathbf{a}_s - \sum_{i\in R}^{}\alpha_i \mathbf{u}_i + \sum_{i\in A}^{} \beta_i \mathbf{u}_i - \mathbf{a}_t\r\|_2
\end{align*}
In words, this problem asks for at most $k$ features to remove ($R$) from and/or 
add ($A$) to the original activation\footnote{We also experimented with using the reconstructions instead of the actual activations in this algorithm, but results were worse.} to bring it as close as possible to the
counterfactual activation, where the features to add are taken directly from the
counterfactual one. In general, this problem has no polynomial-time (in
$k,\left|S\right|,\left|T\right|$) solution, as the NP-hard problem
\textsc{SubsetSum} reduces to it; instead, we use a greedy algorithm to find a
solution. 

\emph{Measuring the magnitude of edits.}
To measure the magnitude of the edit, we compare the contribution of the changed
features to the reconstruction against the analogous quantity for our `ideal'
supervised feature dictionary. Namely, for each summand in the reconstruction 
we assign a measure of its contribution $ \operatorname{weight}(i) =
(f_i\mathbf{u}_i)^\top
\l(\widehat{\mathbf{a}} - \mathbf{b}_{dec}\r)/\left\|\widehat{\mathbf{a}} - \mathbf{b}_{dec}\right\|_2^2$ so that
$\sum_{i=1}^k \operatorname{weight}(i) = 1$\footnote{
While weights can in general take any real value, we find
that in practice they are almost always approximately in $[0, 1]$; see Appendix
\ref{app:sae-evaluation-details} for empirical details.}. Note that weights are
additive in the features, so that the sum of weights of some subset of features
is the weight for these features' total contribution to the reconstruction. We
then measure the magnitude of an edit by the total weight of the features
removed during the edit. 

\textbf{Interpretability implementation for the IOI task.} We instantiate the
interpretability test from Section \ref{section:methods} as follows. Starting
with our primary attributes \textbf{IO}, \textbf{S}, and \textbf{Pos}, we
consider intersections of each of \textbf{IO} and \textbf{S} with the
\textbf{Pos} attribute. Then, for each resulting attribute that represents
variation over the \textbf{IO} or \textbf{S} name, we also consider unions of up
to 10 attribute values (30 for \textsc{OpenWebText}). Finally, we add some other
attributes of interest, such as the gender commonly associated with names in our
dataset. Details on all the possible interpretations we consdered are given in
Appendix
\ref{app:sae-interp-methodology}.
Additionally, we evaluate the causal role of highly interpretable features as
described in Subsection \ref{subsection:test-3-interpretability}:
\begin{itemize}
\item \textbf{sufficiency/necessity of interpretable features}: for
sufficiency, we intervene by subtracting from
activations only the features for which \emph{no} interpretation has an $F_1$
score above some threshold. To test necessity, we instead subtract the features
for which their chosen interpretation has an $F_1$ score above a threshold. 
This
experiment is directly comparable with the sufficiency/necessity of
reconstructions experiments, and uses the same baselines.
\item \textbf{interpretability-aware sparse control}: we attempt to edit a given
attribute by removing/adding only $\leq k$ features with highest $F_1$ score for
the value of this attribute in the original/counterfactual prompt.  
This experiment is directly comparable to the (interpretation-agnostic) sparse
control experiment and uses the same baseline. 
\end{itemize}

Additional details are given in Appendix \ref{app:sae-interp-methodology}.

\begin{figure}[ht]
    \centering
    \includegraphics[width=0.5\linewidth]{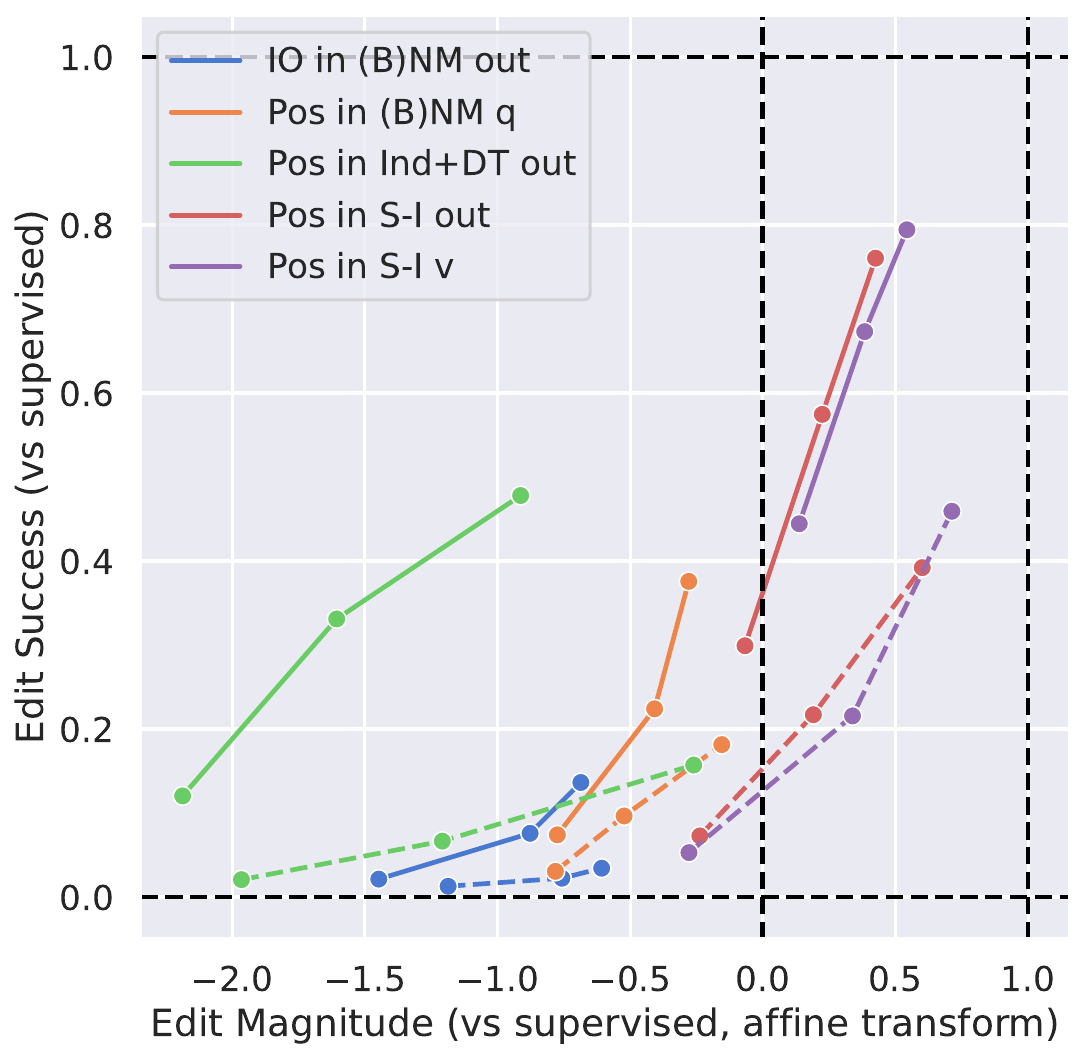}
    \caption{Trade-offs between edit magnitude and edit success for attribute
    editing using task-specific SAEs for select IOI circuit cross-sections. The
    x-axis measures the weight (see Subsection
    \ref{subsection:sae-evaluation-methodology})  of the features removed by the
    edit (features added are not reported in this plot), averaged over the
    attention heads in the cross-section. This metric is affine-transformed so
    that a value of 0 indicates the weight removed by the corresponding
    supervised edit, and a value of 1 indicates that the edit removed all
    features in the reconstruction. The y-axis is an affine transform of the
    fraction of examples for which the edit results in the same next-token
    prediction as the ground-truth edit, with a value of $0$ corresponding to no
    intervention, and a value of $1$ corresponding to the supervised edits.
    Results for our interpretation-agnostic/interpretation-aware editing methods
    are shown as thick/dashed lines respectively. For both methods, we edit 2, 4
    or 6 features (a higher magnitude score indicates editing more features).}
    \label{fig:pareto-combined}
\end{figure}

\subsection{Results}
\label{subsection:}

\textbf{Test 1: Sufficiency/necessity of reconstructions.} The
sufficiency/necessity plots for the task-specific and full-distribution SAEs are
shown in Figure \ref{fig:logitdiff-faithfulness-completeness} (green and red).
We find that the task-specific SAEs offer a significantly worse, but not
catastrophically bad, approximation compared to the supervised feature
dictionaries. Meanwhile, the full-distribution SAEs fare notably worse than the
task SAEs.

\textbf{Test 2: (Interpretation-agnostic) sparse controllability.} 

\emph{Task-specific SAEs.} Results for sparse controllability using our
task-specific SAEs are shown in Figure \ref{fig:edit-accuracy} as well as Figure \ref{fig:pareto-combined} for $k=2,4,6$. We
find that our SAEs can edit the \textbf{Pos} attribute and, to a small extent,
the \textbf{IO} attribute, even though this requires changing more features
compared to the supervised dictionaries. For the \textbf{S} attribute, the
results are less clear, because the range of performance between `no
intervention' and the supervised edit is often within the margin of error; the
exception is the queries of the name mover heads, where results indicate failure
of the SAE features. Regarding the magnitude of edits, the most successful edits
introduce higher-magnitude changes as measured by the weight (recall Subsection
\ref{subsection:sae-evaluation-methodology}) than the corresponding supervised
edits (results in Figure \ref{fig:pareto-combined} as well as Appendix Figure \ref{fig:removed-weight-sae}). On the positive
side, the edits don't overwrite the SAE features entirely.

\emph{Full-distribution SAEs.} Full-distribution SAEs require a significantly
larger number of features to achieve a statistically significant level of
control compared to task-specific ones (often 32 or more); results are shown in 
Appendix Figure \ref{fig:agnostic-editing-accuracy-webtext}. We also found that
the magnitude of these edits surpsasses that of supervised edits significantly,
often up to the point of throwing away total weight approaching $1$.

\emph{A baseline: task-specific SAEs with decoder directions frozen at
initialization.} Do these results demonstrate \emph{any} non-trivial
controllability afforded by the SAE features? To check, we run the same
controllability pipeline on task-specific SAEs which were trained with frozen at
initialization decoder directions; thus, these SAEs are forced to approximate
activations using sparse sums of random features. Results in Appendix Figure
\ref{fig:agnostic-editing-accuracy-freeze-decoder} show that our evaluation
distinguishes between these two types of SAEs: the frozen-decoder task SAEs perform
much worse than the ordinary task SAEs. In fact, we find that the
frozen-decoder task SAEs perform on par with the full-distribution SAEs.

\textbf{Test 3: Interpretability.}

\begin{figure}[ht]
    \centering
    \includegraphics[width=\textwidth]{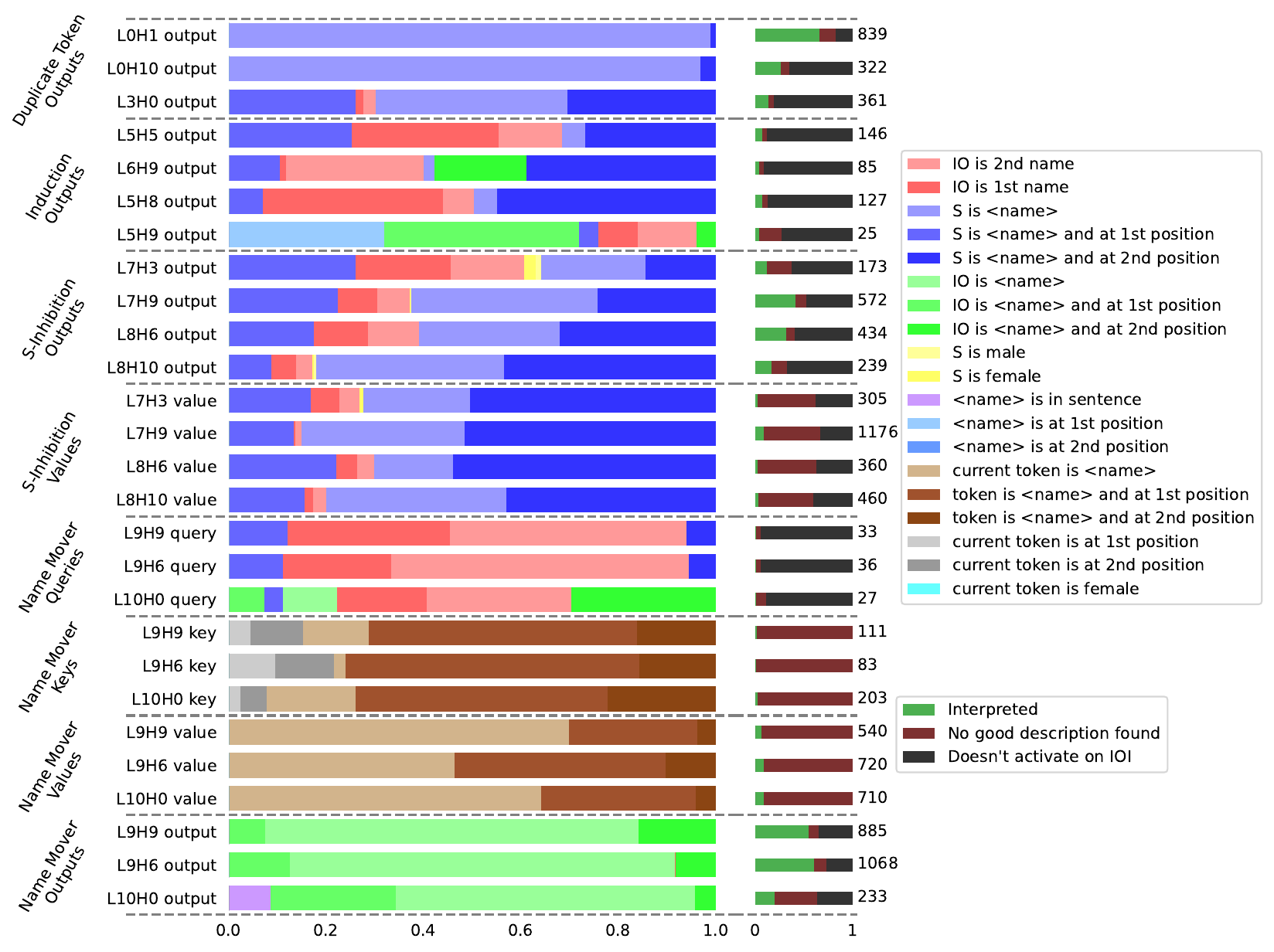}
    \caption{Interpreting the IOI features learned by SAEs trained on
    \textsc{OpenWebText}. For each node in the IOI circuit, we show the
    distribution of interpretations for the features which have any interpretation with $F_1$ score
    above a threshold. The numbers in the right column indicate the number of
    features with an assigned interpretation by our method, and the color bars
    show the overall distribution of the SAE features (conditioned on the
    feature not being dead on the SAE training distribution). See Section
    \ref{section:sae-evaluation} for methodology; details on the
    interpretations considered are given in Appendix
    \ref{app:sae-interp-methodology}.}
    \label{fig:full-sae-interp-most-interp}
\end{figure}

\emph{Correlational evaluation.} Full-distribution SAEs must capture variation
in activations across a large set of text, of which IOI-like prompts are only a
small subset. Consistent with this, we found that only a subset of
full-distribution SAE features activates on IOI prompts, with the number of
features that fire on IOI prompts varying strongly between components. We scored
the features that do fire on IOI prompts and found a significant amount of
feature descriptions with high $F_1$-score. We summarize the number of
high-$F_1$-score features per type in Figure
\ref{fig:full-sae-interp-most-interp}. Remarkably, we find that the
interpretable features in the full-distribution SAEs and the task-specific SAEs
are qualitatively similar; corresponding task-specific results are given in
Appendix Figures \ref{fig:task-sae-interp-most-interp} (showing the most
interpretable SAEs at each node of the IOI circuit) and
\ref{fig:task-sae-interp-best-metrics} (showing the SAEs chosen to optimize the
tradeoff between the $\ell_0$ loss and the logit difference reconstruction, which we use throughout the main body of the paper).

In practice, we want to use feature explanations to get insight into the more
general computation of a component. Thus, we investigated whether the features
found are consistent with the previously established function of the heads from
\citet{wang2022interpretability}. We found that this was true for all heads and
that simply looking at the number of features with a given interpretation draws
a clear picture; examples are provided in Appendix
\ref{app:interp-extra-observations}. 

Our results also suggests several details about the IOI circuit that weren't
reported previously, which we summarize in Appendix
\ref{app:interp-extra-observations}. We were also curious about how the detected
features behave on arbitrary text of the model's training distribution. As
creating a rigorous test for this is difficult, we report some anecdotal
evidence in support of feature generalization in Appendix
\ref{app:interp-extra-observations}.

\emph{Causal evaluation: sufficiency/necessity of interpretable features.}
Results here are encouraging: keeping/removing the features with $F_1$ score
$\geq 0.6$ often goes a long way towards preserving/degrading the model's
performance on the task. Appendix Figures \ref{fig:interp-sufficiency} and
\ref{fig:interp-necessity} show the results of these experiments for task SAEs.

\emph{Causal evaluation: sparse control via interpretable features.} Here,
results are also moderately encouraging. We find that, for the task-specific
SAEs, editing using the high-$F_1$-score features w.r.t.\ a given attribute as a
guide performs not much worse than the interpretation-agnostic editing method. Results
are provided in Figure \ref{fig:pareto-combined} in the form of a comparison
with interpretation-agnostic editing, as well as Appendix Figure
\ref{fig:editing-accuracy-sae-interp}. However, for full-distribution SAEs, we
again need to edit a high number of features to achieve a noticeable effect
(results in Appendix Figure \ref{fig:editing-accuracy-sae-interp-webtext}).

\section{Qualitative Phenomena in SAE Learning}
\label{section:qualitative-phenomena}
\begin{figure}[ht]
    \centering
    \begin{subfigure}{.52\textwidth}
        \centering
        \includegraphics[width=\linewidth]{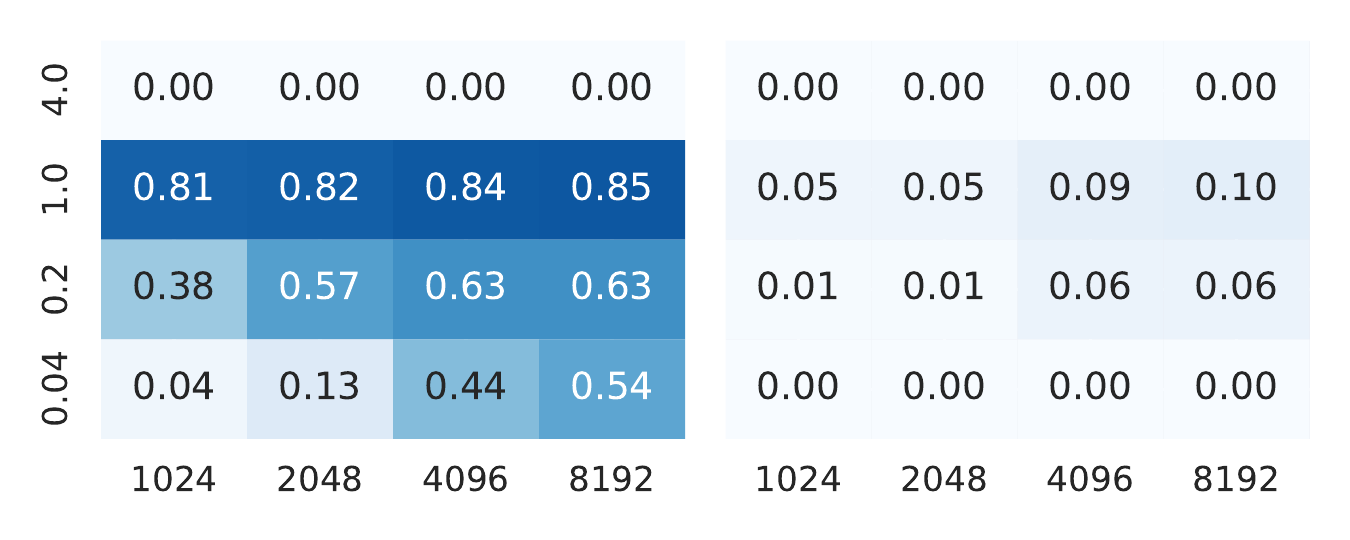}
        \label{subfig:occlusion-io}
    \end{subfigure}%
    \hfill
    \begin{subfigure}{.45\textwidth}
        \centering
        \includegraphics[width=\linewidth]{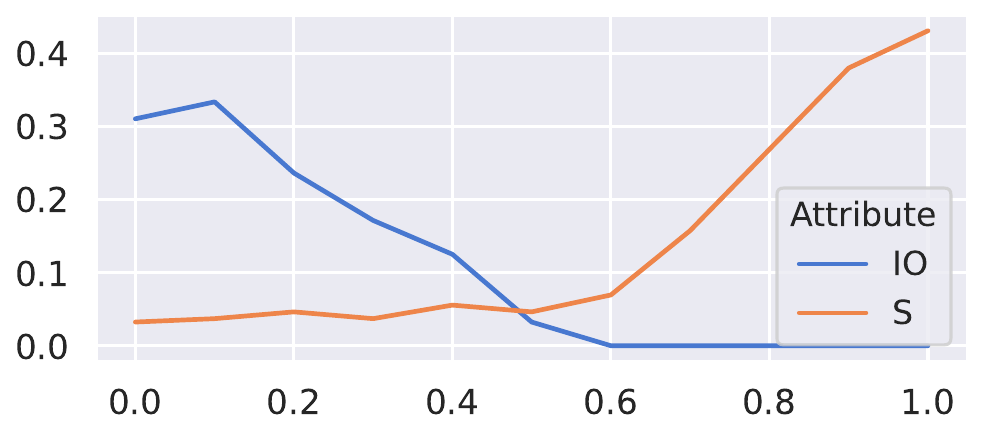}
        \label{subfig:occlusion-s}
    \end{subfigure}%
    \caption{\textbf{Left}: Fraction of \textbf{IO} (left subplot) and
    \textbf{S} (right subplot) names in our dataset for which a feature with
    $F_1$ score $\geq0.5$ is found, as a function of dictionary size (x-axis)
    and effective $\ell_1$ regularization coefficient (y-axis), over a wide
    hyperparameter sweep for the queries of L10H0. 
    \textbf{Right}: fraction of \textbf{IO} and \textbf{S} names in our dataset
    for which a feature with $F_1$ score $\geq 0.5$ is found, as a function of
    $\alpha$ ($x$-axis), the fraction of supervised \textbf{IO} features we
    subtract from the activations.}
    \label{fig:occlusion-main}
\end{figure}

\subsection{Feature Occlusion}
\label{subsection:}
Our experiments suggest that when two (causally relevant)
attributes are represented in the same activation, but one attribute's (supervised) features have overall
higher magnitude, SAEs have a tendency to robustly learn more interpretable
features for the attribute with higher magnitude. We observe this in the queries
of the L10H0 name mover head at the END token in the IOI circuit, where the
\textbf{IO} and \textbf{S} attributes are both represented and causally
relevant (ablating either leads to a change of $\approx0.5$ in logit difference). We find that our task SAEs consistently find features with high $F_1$ score
for individual \textbf{IO} names, but fail to find a significant number of
features for individual \textbf{S} names. In Figure \ref{fig:occlusion-main}
(left), we show interpretability results from training SAEs over a wide grid of
hyperparameters that confirm this observation; more details in Appendix
\ref{app:occlusion}.

\emph{Hypothesis: feature magnitude is a driver of occlusion.} We noticed that
the supervised features for \textbf{IO} and \textbf{S} names in the L10H0
queries have a small but significant difference in norm (see Appendix Figure
\ref{fig:occlusion-extra} (left)). We then hypothesized that feature magnitude
is a factor in this phenomenon. To check this, we surgically reduce the
magnitude of \textbf{IO} features in the activations using our supervised
feature dictionaries, and observe that the number of \textbf{S} features
discovered in these modified activations monotonically increases as we remove
larger fractions of the \textbf{IO} features (Figure \ref{fig:occlusion-main}
(right); see Appendix \ref{app:occlusion} for methodology). We furthermore
constructed a simple toy model based on i.i.d. isotropic random features that
mimic the norms of supervised features in the L10H0 queries, and find that a
similar phenomenon occurs for the distribution of features with high $F_1$ score
(e.g. higher than $0.9$; see Appendix \ref{app:occlusion}).

\subsection{Feature Over-splitting}
\label{subsection:}
We also found in our experiments that task SAEs have a tendency to split a single binary
attribute into multiple features, even when the number of features available
could in principle be spent on other attributes. Note that, while this behavior
may be counter-intuitive from a human standpoint, it does not necessarily mean
that the SAE failed; it may be that the binary attribute is not part of an
optimal sparse description of the model's internal states. We observe this
phenomenon with the \textbf{Pos} attribute in the IOI task (again in the queries
of the L10H0 name mover), which is robustly split into many (e.g.  $\geq10$)
features by our SAEs that activate for small, mostly non-overlapping subsets of
examples sharing the same \textbf{Pos} value that have no clear semantic
interpretation.

\emph{Is over-splitting a form of over-fitting?} To investigate whether this is due
to overfitting, we compared \textbf{Pos} features between (1) different random
seeds for the same training dataset and (2) different training datasets. In both
cases, we found that the \textbf{Pos} features discovered are similar above
chance levels, suggesting that the over-splitting is not due solely to
overfitting to randomness in the training algorithm and/or dataset. 

\emph{Reproducing the over-splitting phenomenon in a simple toy model.} On the
other hand, we show empirical evidence that in a toy model where activations are
a uniform mixture of two isotropic Gaussian random variables, an appropriately
\emph{randomly initialized} SAE with enough hidden features will achieve lower
total loss than an ideal SAE with just two features corresponding to the two
components of the mixture. Such randomized constructions exist for \emph{any}
$\ell_1$ regularization coefficient, even in the limit of infinite training
data. Details are given in Appendix \ref{app:over-splitting}.

\section{Related Work}
\label{sec:related-work}

\textbf{Learning and evaluating SAE features.} The SAE paradigm predates the recent surge of interest in LLMs. Early work
in ML focused on the analysis of word embeddings \citep{DBLP:conf/nips/MikolovSCCD13},
with works such as
\citet{Faruqui2015SparseOW,Subramanian2017SPINESI,arora2018linear} finding
sparse linear structure. \citet{elhage2022superposition} proposed the use
of sparse autoencoders to disentangle features in LLMs.
\citet{sharkey2023taking} used SAEs to learn an over-complete dictionary in a
toy model and in a one-layer transformer, and follow-up work by
\citet{cunningham2023sparse} applied this technique to residual stream
activations\footnote{We adopt the terminology of \citep{elhage2021mathematical}
when discussing internal activations of transformer-based language models.} of a
6-layer transformer from the Pythia family \citep{Biderman2023PythiaAS}.
\citet{bricken2023monosemanticity} trained SAEs on the hidden MLP activations of
a 1-layer language model, and performed several thorough evaluations of the
resulting features. Similarly to us, \citet{Gould2023SuccessorHR} trained SAEs
on a \emph{narrow} data distribution instead of internet-scale data. \citet{tamkin2023codebook} incorporated sparse feature
dictionaries (without per-example weights for the features) into the model
architecture itself, and fine-tuned the model on its pre-training distribution
to learn the dictionaries. More recently, \citet{gpt2_attention_saes} used SAEs
on attention layer outputs of GPT-2 Small and found learned features that are
consistent with the IOI circuit from \citet{wang2022interpretability}.

Sparse autoencoders seek to (approximately) decompose
activations into meaningful features, and are thus a stronger form of
interpretability than linear probing for individual concepts
\cite{Alain2016UnderstandingIL}, or finding individual subspaces with causal
effect \citep{geiger2023finding}.

Throughout this line of work, the evaluation of learned SAE features has been a
major challenge. The metrics used so far can be broadly categorized as follows:
\begin{itemize}
\item \textbf{indirect geometric measures}: \citet{sharkey2023taking} proposed
use of the mean maximum cosine similarity (MMCS) between two different SAEs'
learned features to evaluate their quality. However, this metric relies on the
assumption that convergence to the same set of features is equivalent to
interpretability and having found the `true' features.
\item \textbf{auto-interpretability}:
\citet{bricken2023monosemanticity,bills2023language,cunningham2023sparse} used a
frontier LLM to obtain natural-language descriptions of SAE features based on
highly activating examples, and use the LLM to predict feature activations on
unseen text; the prediction quality is then used as a measure of
interpretability. However, the use of maximum (or even stratified by activation
value) activating examples has been criticized as potentially giving an illusory
and subjective sense of interpretability \citep{bolukbasi2021interpretability}.
\item \textbf{manually crafted proxies for ground truth}:
\citet{bricken2023monosemanticity}  manually formed hypotheses about a handful
of SAE features and defined computational proxies for the ground truth features
based on these hypotheses. This method may be less prone to blind spots than
auto-interpretability, but still relies on the correctness of the computational
proxy.
\item \textbf{toy models}: \citet{sharkey2023taking} used a toy model where
ground-truth features are explicitly defined; however, it is unclear whether toy
models miss crucial aspects of real LLMs. Similar objections apply to manually
injecting ground-truth features into a real model.

\item \textbf{direct logit attribution}: \citet{bricken2023monosemanticity}
additionally considered the direct effect of a feature on the next-token
distribution of the model; this method is valuable because it tells us about the
causal role of a feature, but it cannot detect its indirect effects via other
features.
\end{itemize}

Beyond the evaluation challenges, there is debate about whether SAEs find
computationally non-trivial, compositional features, or merely clusters of
similar examples in the data \citet{olah2024circuits}.

\textbf{Mechanistic interpretability and circuit analysis.} Mechanistic interpretability (MI) aims to reverse-engineer the internal workings
of neural networks \citep{olah2020zoom,elhage2021mathematical}. In particular,
MI frames model computations as a collection of \emph{circuits}: narrow,
task-specific algorithms \citep{olah2020zoom}. So far, circuit analyses of
LLMs have focused on the component level, mapping circuits to collections of
components such as attention heads and MLP layers
\citep{wang2022interpretability,docstring}. 

However, the linear representation hypothesis suggests that component
activations can be broken down further into (sparse) linear combinations of
meaningful feature vectors; thus, the eventual goal of MI is to give a precise,
\emph{subspace-level} understanding of the model's circuits. Initial steps in
this direction have been taken using methods distinct from SAEs.
\citet{geiger2023finding} propose finding meaningful subspaces using an
optimization-based method; \citet{nanda2023emergent} discover linear subspaces
in emergent world-models on a toy task; \citet{tigges2023linear} discover linear
subspaces corresponding to sentiment in a LLM. However, while these works focus
on finding individual subspaces representing specific concepts, the SAE paradigm
is more ambitious, as it aims to fully decompose activations as a sum over
meaningful features. This is a stronger property than identifying individual 
meaningful subspaces, and would accordingly provide a more exhaustive form of 
interpretability.

More broadly, MI has found applications in several downstream tasks: removing
toxic behaviors from a model \citep{li2023circuit}, changing factual knowledge
encoded by models \citep{meng2022locating}, improving the truthfulness of LLMs
at inference time \citep{li2023inference}, studying the mechanics of gender bias
in language models \citep{vig2020causal}, and reducing spurious correlations by 
intervening on model internals \citep{gandelsman2023interpreting}.

\section{Limitations and Conclusion}
\label{section:}

We have taken steps towards more principled and objective evaluations of the
usefulness of sparse feature dictionaries for disentangling LLM activations. In
particular, we have demonstrated that:
\begin{itemize}
\item simple supervised methods can be used as a principled way to compute
high-quality feature dictionaries in a task-specific context;
\item these dictionaries can be used as `skylines' to evaluate and contextualize
the performance of unsupervised methods, such as SAEs.
\end{itemize}

\textbf{Limitations.}
The central conceptual limitation of our work is that our method relies on
supervision in the form of a potentially subjective choice of variables used to
parametrize task-relevant information in model inputs. We mitigate this to some
extent by requiring this parametrization to be \emph{consistent} with the
internal computations of the model, as quantified by our tests for
approximation, control and interpretability of model computations on the task.
However, in principle there could be many parametrizations that are just as
consistent, but fundamentally different (recall the discussion in Appendix
\ref{app:alternative-ioi-parametrizations} and
\ref{app:possible-decompositions}). Thus, we risk making the proverbial `judging
a fish by its ability to climb a tree' mistake. We have mitigated this problem
further by devising evaluations that are, when possible, agnostic to the precise
features in a dictionary, as long as they allow us to disentangle and control
our chosen variables in a sparse manner. 

Finally, we find it likely that features used by LLMs in tasks of practical
interest will be quite complex from a human standpoint, and we believe it is
useful to be able to assess the degree to which these features can be harnessed
towards controlling model behavior along more top-down, concise and
human-understandable concepts. Our evaluations provide such an assessment.

Our work is also limited in that we only consider a single task, and a single
language model. We hope that in future work we will reduce these limitations.

\textbf{Conclusion.} Sparse dictionary learning, and SAEs in particular,
represent an interesting and promising avenue for disentangling internal
representations of LLMs. However, in order to measure progress in this area, it
is imperative to have nuanced conceptual understanding of the goals and
downstream applications of such disentangling, and benchmarks faithful to this
understanding. We hope that our work will inspire further research in this
direction, and that our methods will be useful for practitioners in the field.

\section*{Acknowledgements}
We thank Sonia Joseph and Can Rager for feedback on earlier versions of this
manuscript. We thank the SERI MATS program for making this collaboration possible. We
also thank Atticus Geiger for valuable discussions and supervision during some
early versions of this project.  AM was supported by grants from AI Safety
Support and Effective Ventures. GL was supported by grants from AI Safety
Support. 

\section*{Author Contributions}

AM designed and ran the algorithms for computing supervised feature dictionaries
and observed their theoretical properties; trained task-specific SAEs; designed
the tests for sufficiency/necessity of reconstructions, sparse controllability
and causal faithfulness of feature descriptions; ran these tests on full-distribution and task-specific SAEs; observed the occlusion/oversplitting phenomena in task-specific SAEs and developed the toy models for them; wrote all sections of the paper unless otherwise specified.

GL trained full-distribution SAEs; designed and ran the automatic feature scoring algorithms for them, and ran most of the interpretability evaluations for full-distribution SAEs; contributed to the sections on interpretability of full-distribution SAE features, created Figures \ref{fig:graphical-abstract} and \ref{fig:full-sae-interp-most-interp}.

NN provided guidance on research goals and prioritization, as well as writing, throughout the entire project; proposed the initial idea of finding supervised feature dictionaries for the IOI task, and using them to evaluate (unsupervised) feature dictionaries; refined the search for algorithms to compute the supervised dictionaries.

\bibliography{example_paper}
\bibliographystyle{arxiv_template}

\newpage
\appendix
\onecolumn
\section{Appendix}

\subsection{Additional details on the IOI circuit}
\label{app:ioi-circuit-details}

\textbf{Circuit structure.} 
To refer to individual token positions within the sentence, we use the notation
of \citet{wang2022interpretability}: IO denotes the position of the \textbf{IO}
name, S1 and S2 denote respectively the positions of the first and second 
occurrences of the \textbf{S} name (with S1+1 being the token position after
S1), and END denotes the last token in the sentence (at the word `to').

\citet{wang2022interpretability} suggest the
model uses the algorithm `Find the two names in the sentence, detect the
repeated name, and predict the non-repeated name' to do this task. Specifically,
they discover several classes of heads in the model, each of which performs a
specific subtask of this overall algorithm. A simplified version of the circuit
involves the following three classes of heads and proceeds as follows:
\begin{itemize}
\item \textbf{Duplicate token heads}: these heads detect the repeated name in
the sentence (the \textbf{S} name) and output information about both its
position and identity to the residual stream\footnote{We follow the conventions of
\citet{elhage2021mathematical} when describing internals of transformer models.
The residual stream at layer $k$ is the sum of the output of all layers up to
$k-1$, and is the input into layer $k$.}
\item \textbf{S-Inhibition heads}: these heads read the identity and position of
the \textbf{S} name from the residual stream, and output a signal to the effect
of `do not attend to this position / this token identity' to the residual stream
\item \textbf{Name Mover heads}: these are heads that attend to names in the
sentence. Because the signal from the S-Inhibition heads effectively removes the
\textbf{S} name from the attention of these heads, they read the identity of the
\textbf{IO} name from the input prompt, and copy it to the last token position
in the residual stream.
\end{itemize}
In reality, the circuit is more nuanced, with several other classes of heads
participating: previous token heads, induction heads \citep{olsson2022context},
backup name mover heads, and negative name mover heads. 
In particular, the
circuit exhibits \emph{backup behavior} \citep{mcgrath2023hydra} which poses
challenges for interpretability methods that intervene only on single model
components at a time. We refer the reader to Figure \ref{fig:ioi-circuit} for a
schematic of the full circuit, and to \citet{wang2022interpretability} for a
more complete discussion.

\begin{figure}
    \centering
    \includegraphics[width=0.9\textwidth]{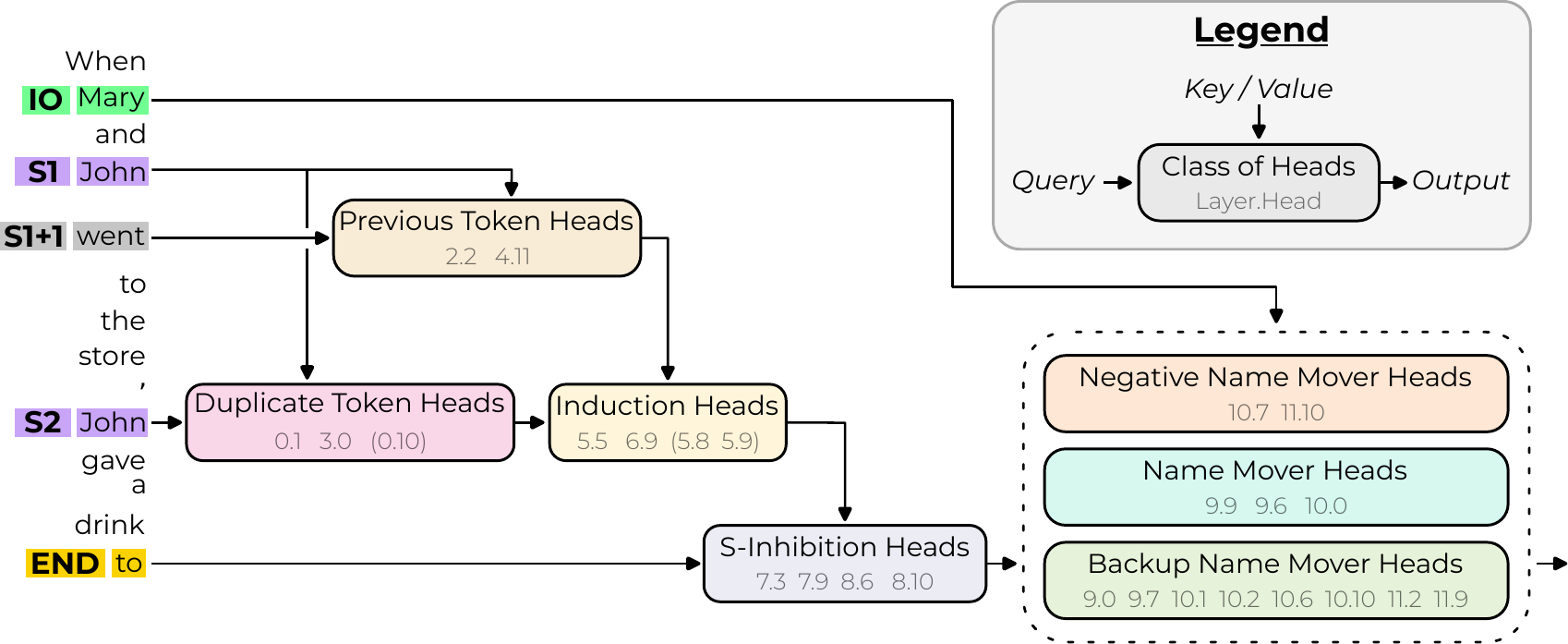}
    \caption{A reproduction of Figure 2 from \citet{wang2022interpretability},
    showing the internal structure of the IOI circuit. Original caption:
    \emph{The input tokens on the left are passed into the residual stream.
    Attention heads move information between residual streams: the query and
    output arrows show which residual streams they write to, and the key/value
    arrows show which residual streams they read from.}}
    \label{fig:ioi-circuit}
\end{figure}

\subsection{Additional details for Section \ref{section:methods}}
\label{app:methods-details}

\textbf{Why do we use mean ablation as a baseline for necessity of
reconstructions?} Using the mean instead of simply a zero value is intended to
not take the model away from the task distribution. The information that is
constant across the task distribution -- such as grammar and syntax -- will
remain in the mean ablation, while task-specific information will be averaged
out. If the residual $\mathbf{a} - \widehat{\mathbf{a}}$ is not task-relevant,
we expect that the model will perform the task as well as with the mean
ablation; conversely, if the reconstructions $\widehat{\mathbf{a}}$ leave out
task-relevant information, we expect that the model will perform better with our
intervention than with the mean ablation.

\textbf{Editing in multiple model locations at once.} We do this by
pre-computing edited activations for each location, then patching all of them
into the model's forward pass at once. This means that we are only editing
computational cross-sections of the model; however, applying the edits
sequentially as the model is run is prone to taking activations
off-distribution, since an edit will propagate to downstream activations, where
making the same edit again may be the wrong intervention; we have observed this
in practice, in the interaction of the name mover heads in layers 9 and 10 in
the IOI circuit.

\textbf{Evaluating edits.} Our edits are intended to only change the values of
specific attributes, and leave all other information in the activations
unchanged. A ground-truth baseline for the effect of an edit would be to take
the model's activation on a \emph{counterfactual} prompt: one which differs from
the original prompt only in the value of the attribute being edited, in the
precise way that the edit changes it. The existence of counterfactual prompts in
the support of $\mathcal{D}$ is not guaranteed in general, but it holds in our
IOI distribution.

\textbf{Precision, recall, and the $F_1$-score.}
Given a set of examples $S$ used for evaluation, a learned feature $f$ active on
a subset $F\subset S$ of examples, and a binary attribute of a prompt which is
true for the subset $A\subset S$, we define
$\operatorname{recall}(F,A)=\left|A\cap F\right| / \left|A\right|$ and
$\operatorname{precision}(F,A)=\left|A\cap F\right| / \left|F\right|$. Following
\citet{bricken2023monosemanticity}, we consider a feature meaningful for a given
property if it has both high recall and high precision for that property, and we
combine them into a single number using the F-score:
\begin{align*}
    F_1(F,A) =
    \frac{2\operatorname{precision}(F,A)\operatorname{recall}(F,A)}{\operatorname{precision}(F,A)+\operatorname{recall}(F,A)}.
\end{align*}
An $F_1$-score of $\alpha$ guarantees that both precision and recall are at least
$\frac{\alpha}{2-\alpha}$. For example, when $\alpha=0.8$ (the value we use in
most evaluations), both precision and recall are at least $0.8/1.2\approx 0.67$.
Requiring a sufficiently high $F_1$ value is important in order to avoid
labeling a trivial feature as meaningful for attributes where $\left|A\right|$
is large, because then a feature active for all examples can have a high
$F_1$-score. 

The $F_1$ score has some limitations in the context of our work:
\begin{itemize}
\item it does not take into account the magnitude of the feature activations;
for instance, a feature that is active for all examples in $S$ but only has high
activation values on the examples in $A$ may have a low $F_1$ score, even though
it is in some sense highly informative for the attribute $A$.
\item it is a very conservative metric, in that it requires both high precision
and high recall to be high. For example, a feature with precision $0.5$ but
recall $0.02$ will have an $F_1$ score of $\sim0.04$, heavily skewed towards the
lower of the two metrics, even though it is in some sense informative for the
attribute $A$.
\end{itemize}
We hope to address these limitations in future work.

\subsection{Additional details for Section \ref{section:manual-sae}}
\label{app:ioi-supervised-details}

\textbf{Why do we use fixed coefficients for our supervised activation
reconstructions?}
Note that the formulation from Equation \ref{eq:manual-sparse-codes} is
\emph{less expressive} than the reconstruction provided by an SAE (Equation
\ref{eq:sae-reconstruction}), as it requires the coefficients of decoder vectors
to be fixed (similar to \citet{tamkin2023codebook}, and unlike in the SAE, where
they are computed from the input via a linear function and a ReLU). This is a
reasonable assumption in settings where we expect the relevant features to
behave as binary, on/off switches (as opposed to having continuous degrees of
activation). The IOI task is an example of such a setting, as we expect that
there is no `degree' to which a given name in the sentence is present or not, or
to which a given name is repeated or not. See also the discussion of `features
as directions' vs `features as points' in \citet{tamkin2023codebook}.

\textbf{Computing and evaluating supervised feature dictionaries.} For each
parametrization and each method to compute feature dictionaries, we use 20,000 prompts sampled
from our IOI distribution (see Appendix \ref{app:ioi-dataset-details}) to
compute feature dictionaries for the query, key, value, and attention output (i.e.,
attention-weighted values) of the relevant token positions of all 26 heads
identified in \citet{wang2022interpretability} (recall Figure
\ref{fig:ioi-circuit}). We use another sample of 5,000 prompts to validate the
quality of the feature dictionaries. 

\textbf{Cross-sections of the circuit.} Based on the understanding of the IOI
circuit from \citet{wang2022interpretability}, we identify several 
cross-sections of the computational graph of the IOI circuit where feature
editing is expected to have effects meaningful for the task:
\begin{itemize}
\item \emph{outputs of (backup) name mover heads at END (\textbf{(B)NM out})}: these activations
encode the \textbf{IO} name and write it to the END token of the residual
stream. We expect that editing the \textbf{IO} name in these activations will
directly affect the model's prediction, while editing other attributes will not
have a significant effect.
\item \emph{queries+keys of (backup) name movers at END (\textbf{(B)NM qk})}: the queries represent the
\textbf{S} name and \textbf{Pos} information, but they are mainly used as
\emph{inhibitory} signals for the model, decreasing the attention to the
\textbf{S} token\footnote{In addition, we later find that the queries of the L10H0 name mover
head also represent the \textbf{IO} attribute, and serve an \emph{inhibitory}
role for it as well, decreasing the attention to the \textbf{IO} token.}. The
keys represent information about the \textbf{IO} and \textbf{S} names: in
particular, the \textbf{S} information combines with the query to inhibit
attention to the \textbf{S} token.

We expect that editing the \textbf{S} and \textbf{Pos} attributes in \emph{both}
the keys and queries will not significantly hurt model performance, because as a
result attention to the \textbf{S} token will again be inhibited. By contrast,
it is unclear what editing the \textbf{IO} name is expected to do, since its
role in the attention computation is not fully described in
\citet{wang2022interpretability}.
\item \emph{outputs of S-Inhibition heads at END (\textbf{S-I out}), values of
S-Inhibition heads at S2 (\textbf{S-I v}), and outputs of duplicate token and
induction heads at S2 (\textbf{Ind+DT out})}: these activations transmit the
inhibitory signal to the name mover heads through the residual stream. We expect
that editing \textbf{S} and \textbf{Pos} in these activations will lower the
model's logit difference by disrupting the inhibitory signal, while editing
\textbf{IO} will have no effect.
\end{itemize}

\textbf{Evaluating necessity of feature reconstructions}: When we intervene on the model by removing
reconstructions from activations in cross-sections of the circuit, model
performance on the IOI task (as measured by the logit difference) goes down from the clean value $\operatorname{logitdiff}_{\mathbf{clean}}$ to a lower value $\operatorname{logitdiff}_{\mathbf{intervention}}$. As
we describe in the main text, the ground-truth intervention for removing the
features from the activations is mean ablation of the corresponding
cross-section, which also results in a lower value of the logit difference, $\operatorname{logitdiff}_{\mathbf{mean\ ablation}}$. We want to measure the degree to which $\operatorname{logitdiff}_{\mathbf{intervention}}$ approximates $\operatorname{logitdiff}_{\mathbf{mean\ ablation}}$, in a way that normalizes for different values of $\operatorname{logitdiff}_{\mathbf{mean\ ablation}}$ across cross-sections of the circuit. We use the following metric to do this:

\begin{align*}
    \operatorname{necessity\ score} = 1 -
    \frac{\left|\operatorname{logitdiff}_{\mathbf{mean\ ablation}} -
    \operatorname{logitdiff}_{\mathbf{intervention}}\right|}{\left|\operatorname{logitdiff}_{\mathbf{mean\ ablation}}
    - \operatorname{logitdiff}_{\mathbf{clean}}\right|}.
\end{align*}

\textbf{Evaluating accuracy of attribute edits.} In our figures on attribute
editing (e.g., Figure \ref{fig:edit-accuracy}), we report the proportion of
examples (in a test set not used to compute feature dictionaries) for which the
model's prediction when intervening via a given edit equals the model prediction
when we intervene by the ground-truth editing intervention described in
\ref{subsection:test-2-sparse-controllability}. This metric's ideal value is
$1$, and its worst value is zero. In many cases, simply not intervening on the
model already achieves a nontrivial (and sometimes very high) value of this
score; this is why we also report the value in the absence of intervention.

\subsection{Dataset, Model and Evaluation Details for the IOI Task}
\label{app:ioi-dataset-details}
We use GPT2-Small for the IOI task, with a dataset that spans 216 single-token names, 144 single-token objects and 75 single-token places, which are split $1:1$ across a training and test set. 
Every example in the data distribution includes (i) an initial clause introducing the indirect object (\textbf{IO}, here `Mary') and the subject (\textbf{S}, here `John'),
and (ii) a main clause that refers to the subject a second time.
Beyond that, the dataset varies in the two names, the initial clause content, and the main clause content.
Specifically, use three templates as shown below:
\begin{center}
    \text{Then, [ ] and [ ] had a long and really crazy argument. Afterwards, [ ] said to}
    \\
    \text{Then, [ ] and [ ] had lots of fun at the [place]. Afterwards, [ ] gave a [object] to}
    \\
    \text{Then, [ ] and [ ] were working at the [place]. [ ] decided to give a [object] to}
\end{center}
and we use the first two in training and the last in the test set. Thus, the test set relies on unseen templates, names, objects and places. We used fewer templates than the IOI paper \cite{wang2022interpretability} in order to simplify tokenization (so that the token positions of our names always align), but our results also hold with shifted templates like in the IOI paper.

On the test partition of this dataset, GPT2-Small achieves an accuracy of
$\approx 91\%$. The average difference of logits between the correct and
incorrect name is $\approx 3.3$, and the logit of the correct name is greater
than that of the incorrect name in $\approx 99\%$ of examples. Note that, while
the logit difference is closely related to the model's correctness, it being
$>0$ does not imply that the model makes the correct prediction, because there
could be a third token with a greater logit than both names.

\subsection{Properties of mean feature dictionaries}
\label{app:mean-codes-properties}

Mean feature dictionaries enjoy several convenient properties:
\begin{itemize}
\item The vectors $\mathbf{u}_{iv}$ for an attribute $a_i$ do not depend on which
other attributes $a_l\neq a_i$ we have chosen to describe the prompt $p$ with. 
\item If an attribute $i$ is not linearly represented in the activations, the
mean code features $\mathbf{v}_{iv}\to 0$ in the limit of infinite data (see
below). In particular, this also holds if the attribute is not represented
\emph{at all} in the activations.
\end{itemize}
This suggests that mean feature dictionaries are robust to the inclusion of irrelevant or
non(-linearly)-represented attributes, which is a desirable property in real
settings where we may not know the exact attributes present in each activation.
However, mean feature dictionaries are \emph{not} robust to the inclusion of redundant
attributes, as the lack of interaction between the attributes means that
redundant attributes cannot `coordinate' to reduce the reconstruction error
$\l\|\mathbf{a}-\widehat{\mathbf{a}}\r\|_2^2$.

\subsubsection{Mean features are zero for non-linearly-represented attributes.}
\label{subsubsection:}
Suppose we have a random vector $\mathbf{x}$ for a $k$-way classification task
with one-hot labels $\mathbf{z}\in\mathcal{Z}=\{\mathbf{z}\in\{0,1\}^k\text{ s.t. }
\l\|\mathbf{z}\r\|_1=1\}$. In Section 3 of \citet{Belrose2023LEACEPL}, it is
shown that the following are equivalent:
\begin{itemize}
\item the expected cross-entropy loss of a linear predictor
$\widehat{\mathbf{z}}=\mathbf{w}^\top \mathbf{x} + \mathbf{b}$ for $\mathbf{z}$
is minimized at a \emph{constant} linear predictor. In other words, the optimal
logistic regression classifier (in the limit of infinite data) is no better than
the optimal constant predictor (which, at best, always predicts the majority
class).  
\item the class-conditional mean vectors
$\mathbb{E}\l[\mathbf{x}|\mathbf{z}=e_i\r]$ are all equal to the overall mean
$\mathbb{E}\l[\mathbf{x}\r]$ of the data.
\end{itemize}

If we translate this to the context of mean feature dictionaries from Subsection
\ref{section:manual-sae}, we have that logistic regression for the value
of an attribute $a_i$ will degenerate to the majority class predictor if and
only if the mean feature dictionaries for all values of this attribute are zero. In the finite
data regime, this gives us some theoretical grounds to expect that the mean
feature dictionaries will be significantly away from zero if and only if the
attribute's values can be non-trivially recovered by a (logistic) linear probe.
As a special case, if an attribute is not represented in the data at all, we
expect the mean feature dictionaries for this attribute to be zero.

\subsection{Definition and Properties of MSE Feature Dictionaries}
\label{app:mse-math}

MSE feature dictionaries compute $\mathbf{u}_{a_i(\cdot)=v}$ by directly minimizing
the $\ell_2$ reconstruction error over the centered activations:
\begin{equation}
\label{eq:mse-codes}
    \{\mathbf{u}_{a_i(\cdot)=v}\}_{i\in I, v\in S_i} = \argmin_{\mathbf{u}_{a_i(\cdot)=v}} \frac{1}{N}\sum_{k=1}^{N} \left\|\l(\mathbf{a}(p_k)- \overline{\mathbf{a}}\r) - \sum_{i\in I}\mathbf{u}_{a_i(\cdot)=a_i(p_k)}\right\|_2^2
\end{equation}
This objective is convex, and is equivalent to a least-squares regression
problem; in fact, the optimal solutions take a form very similar to the mean
feature dictionaries (see below). Furthermore, this objective closely mimics the
SAE objective: here, the sparsity is hard-coded, leaving only the $\ell_2$
objective.

We next discuss some properties of the MSE feature dictionaries. For brevity, in
the remainder of this section we write $\mathbf{u}_{iv}$ instead of
$\mathbf{u}_{a_i(\cdot)=v}$.

\subsubsection{MSE feature dictionaries as a multivariate least-squares regression problem.}
\label{app:mse-math-regression}

Let $S = \sum_{i=1}^{N_A} \left|S_i\right|$ be the total number of possible values for
all attributes. For each attribute $i$, consider the characteristic matrix
$C_i\in \mathbb{R}^{N\times S_i}$ of the dataset for this attribute, where 
\begin{align*}
    C_{kj} = \begin{cases}
        1 & \text{if } a_i(p^{(k)}) = v_j \\
        0 & \text{otherwise}
    \end{cases}
\end{align*}
for some ordering $(v_1, \ldots, v_{\left|S_i\right|})$ of the values in $S_i$,
and let $ C = \begin{bmatrix} C_1 & C_2 & \cdots & C_{N_A} \end{bmatrix} \in 
\mathbb{R}^{N\times S}$ be the concatenation of all characteristic matrices.
Also, let $A\in \mathbb{R}^{N\times d}$ be the matrix of activations with rows 
$\mathbf{a}^{(k)}$. Then the objective function for the MSE feature dictionaries can be written
as the multivariate least-squares regression problem
\begin{align*}
    \min_{U\in \mathbb{R}^{S\times d}} \frac{1}{N}\left\|A - CU\right\|_F^2
\end{align*}
where the rows of $U$ are the vectors $\mathbf{u}_{iv}$ across all $i$ and
$v\in S_i$, with the optimal solution given by 
\begin{equation}
\label{eq:mse-codes-closed-form}
    U^* = \l(C^\top C\r)^{+}C^\top A
\end{equation}

\subsubsection{MSE feature dictionaries as averaging over examples.}
\label{app:mse-math-averaging}
Using the special structure of the
objective, we can also derive some information about the optimal solutions
$\mathbf{u}_{iv}^*$. Namely, at optimality we should not be able to decrease
the value of the objective by changing a given $\mathbf{u}_{iv}^*$ away from
its optimal value. The terms containing $\mathbf{u}_{iv}^*$ in the objective
are
\begin{align*}
    \frac{1}{N}\sum_{k\in P_{iv}}^{}\l\|\mathbf{a}^{(k)} - \sum_{l\neq i}^{}\mathbf{u}_{lv_l^{(k)}}^* - \mathbf{u}_{iv}^* \r\|_2^2 &= 
    \frac{1}{N}\sum_{k\in P_{iv}}^{}\l\|\l(\mathbf{a}^{(k)} - \sum_{l\neq i}^{}\mathbf{u}_{lv_l^{(k)}}^*\r) - \mathbf{u}_{iv}^* \r\|_2^2 
    \\
    &= 
    \frac{1}{N}\sum_{k\in P_{iv}}^{} \l\|\overline{\mathbf{a}}^{(k)} - \mathbf{u}_{iv}^*\r\|_2^2
\end{align*}
where recall that $P_{iv} = \{k \,|\, a_i(p^{(k)}) = v\}$, and
$\overline{\mathbf{a}}^{(k)}$ is the residual of $\mathbf{a}^{(k)}$ after
subtracting the reconstruction using all other attributes $l\neq i$. Since this
value cannot be decreased by changing $\mathbf{u}_{iv}^*$, we have that it
equals the minimizer of this term (holding $\overline{a}^{(k)}$ fixed). In other
words, if we define
\begin{align*}
    f \l(\mathbf{u}\r) = \frac{1}{N}\sum_{k\in P_{iv}}^{}
    \l\|\overline{\mathbf{a}}^{(k)} - \mathbf{u}\r\|_2^2 
\end{align*}
we have that $\mathbf{u}_{iv}^* = \argmin_{\mathbf{u}} f\l(\mathbf{u}\r)$. Since
$f$ is a sum of convex functions, it is itself convex, and so the first-order
optimality condition is also sufficient for optimality. We have
\begin{align*}
    \nabla f\l(\mathbf{u}\r) \propto \sum_{k\in
    P_{iv}}^{}\l(\overline{\mathbf{a}}^{(k)} - \mathbf{u}\r) \propto
    \frac{1}{\left|P_{iv}\right|}\sum_{k\in
    P_{iv}}^{}\overline{\mathbf{a}}^{(k)} - \mathbf{u}
\end{align*}
and so
\begin{equation}
\label{eq:mean-codes-optimality}
    \mathbf{u}_{iv}^* = \frac{1}{\left|P_{iv}\right|}\sum_{k\in P_{iv}}^{}
    \overline{\mathbf{a}}^{(k)}
\end{equation}
Note that this is very similar to the definition of mean feature dictionaries,
but also importantly different, because the optimal $\mathbf{u}_{iv}^*$ depends
on the optimal values of the feature dictionaries for the other attributes.

\subsubsection{MSE feature dictionaries with independent attributes.}
\label{app:mse-math-independent}
Finally, we can prove that,
under certain conditions, attributes for which $\mathbb{E}\left[\mathbf{a}|
a_i(p) = v_i\right] = \mathbb{E}\left[\mathbf{a}\right]$, i.e. the conditional
mean of activations over values of the attribute is the same as the overall mean
(assuming both means exist), will have (approximately) constant MSE feature dictionaries
$\mathbf{u}_{iv}=\mathbf{u}_i \forall v$. This is a counterpart to the result
from Appendix \ref{app:mean-codes-properties} for MSE feature dictionaries:

\begin{lemma}
\label{lemma:mse-codes-constant}
Suppose that all conditional means $\mathbb{E}_{p\sim \mathcal{D}}\left[\mathbf{a}| a_i(p) =
v\right]$ exist for all $i, v\in S_i$. Let $a_i$ be an attribute such its values
appear independently from the values of all other attributes, i.e.
\begin{align*}
    \mathbb{P}_{p\sim \mathcal{D}}\left[a_i(p) = v_i, a_l(p) = v_l\right] =
    \mathbb{P}_{p\sim \mathcal{D}}\left[a_i(p) = v_i\right]\mathbb{P}_{p\sim
    \mathcal{D}}\left[a_l(p) = v_l\right]\quad \forall v_i\in S_i, v_l\in S_l, l\neq i
\end{align*}
Then, in the limit of infinite training data, the conditional means
$\mathbb{E}\left[\mathbf{a}| a_i(p) = v\right]$ are all equal to the overall
mean $\mathbb{E}\left[\mathbf{a}\right]$ if and only if the optimal MSE feature dictionaries
$\mathbf{u}_{iv}^*$ for this attribute are constant with respect to the value
$v$ of the attribute, i.e. $\mathbf{u}_{iv}^* = \mathbf{u}_i$ for all $v\in
S_i$.
\end{lemma}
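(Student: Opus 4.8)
The plan is to work in the infinite-data limit, where the objective of \eqref{eq:mse-codes} becomes the population objective $\mathbb{E}_{p\sim\mathcal{D}}\bigl\|(\mathbf{a}(p)-\mathbb{E}[\mathbf{a}]) - \sum_{i\in I}\mathbf{u}_{i,a_i(p)}\bigr\|_2^2$, to be minimized over the collection $\{\mathbf{u}_{iv}\}_{i,v}$; after recentering I may assume $\mathbb{E}[\mathbf{a}]=\mathbf{0}$. This functional is convex (an integral of convex quadratics in the $\mathbf{u}_{iv}$), so a collection $\{\mathbf{u}^*_{iv}\}$ is a minimizer if and only if it is stationary in each block, which is exactly the population version of the block-optimality computation already carried out in finite samples in Appendix \ref{app:mse-math-averaging}. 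Writing it out, for every $i$ and every $v\in S_i$,
\[
\mathbf{u}^*_{iv} \;=\; \mathbb{E}\!\left[\,\mathbf{a} - \sum_{l\neq i}\mathbf{u}^*_{l,a_l}\;\Big|\;a_i=v\right] \;=\; \mathbb{E}[\mathbf{a}\mid a_i=v] - \sum_{l\neq i}\mathbb{E}\!\left[\mathbf{u}^*_{l,a_l}\mid a_i=v\right].
\]

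The second step is to invoke the independence hypothesis. Because $a_i$ is independent of every $a_l$ with $l\neq i$, the random vector $\mathbf{u}^*_{l,a_l}$ is a function of $a_l$ alone and hence independent of the event $\{a_i=v\}$, so $\mathbb{E}[\mathbf{u}^*_{l,a_l}\mid a_i=v]=\mathbb{E}[\mathbf{u}^*_{l,a_l}]=:\mathbf{m}_l$ does not depend on $v$. Substituting, $\mathbf{u}^*_{iv}=\mathbb{E}[\mathbf{a}\mid a_i=v]-\sum_{l\neq i}\mathbf{m}_l$, and subtracting this identity for two values $v,v'\in S_i$ yields the key formula
\[
\mathbf{u}^*_{iv}-\mathbf{u}^*_{iv'} \;=\; \mathbb{E}[\mathbf{a}\mid a_i=v]-\mathbb{E}[\mathbf{a}\mid a_i=v'].
\]
In particular the left-hand side does not depend on the choice of minimizer (the objective only pins down the $\mathbf{u}^*_{iv}$ up to a shared additive shift across attributes), so the assertion ``$\mathbf{u}^*_{iv}$ is constant in $v$'' is well-defined.

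The two implications are then immediate. If all conditional means $\mathbb{E}[\mathbf{a}\mid a_i=v]$ equal the overall mean $\mathbb{E}[\mathbf{a}]$, the right-hand side above vanishes for every pair $v,v'$, so $\mathbf{u}^*_{iv}$ is independent of $v$. Conversely, if $\mathbf{u}^*_{iv}=\mathbf{u}_i$ for all $v\in S_i$, the formula forces all the conditional means $\mathbb{E}[\mathbf{a}\mid a_i=v]$ to coincide; averaging them against the weights $\mathbb{P}_{p\sim\mathcal{D}}[a_i=v]$ and using the tower property $\mathbb{E}[\mathbf{a}]=\sum_v\mathbb{P}[a_i=v]\,\mathbb{E}[\mathbf{a}\mid a_i=v]$ shows the common value is exactly $\mathbb{E}[\mathbf{a}]$.

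I expect the only real friction to be the infinite-data bookkeeping rather than any algebraic step: one has to check that, under the stated hypothesis that the conditional means exist, the population objective (equivalently, the normal equations behind the closed form \eqref{eq:mse-codes-closed-form}) is well posed and that block-stationarity is both necessary and sufficient for optimality there — i.e., that the finite-$N$ argument of Appendix \ref{app:mse-math-averaging} passes to the limit verbatim, with sample averages replaced by conditional expectations. The independence hypothesis is precisely what makes the cross-attribute terms in the stationarity condition collapse to $v$-independent constants, and that collapse is the entire content of the lemma; the gauge/non-uniqueness subtlety is dispatched for free by the difference formula above.
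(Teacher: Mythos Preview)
Your proposal is correct and follows essentially the same approach as the paper: both derive the block-stationarity condition $\mathbf{u}^*_{iv}=\mathbb{E}[\mathbf{a}\mid a_i=v]-\sum_{l\neq i}\mathbb{E}[\mathbf{u}^*_{l,a_l}\mid a_i=v]$ and then use the independence hypothesis to show the second term is $v$-independent. Your version is slightly cleaner in that you work directly at the population level rather than passing the finite-$N$ computation to the limit via the closed form \eqref{eq:mse-codes-closed-form}, and your difference formula $\mathbf{u}^*_{iv}-\mathbf{u}^*_{iv'}=\mathbb{E}[\mathbf{a}\mid a_i=v]-\mathbb{E}[\mathbf{a}\mid a_i=v']$ handles the gauge non-uniqueness that the paper leaves implicit.
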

\begin{proof}
\label{proof:}
From Equation
\ref{eq:mean-codes-optimality}, we have
\begin{align*}
    \mathbf{u}_{iv}^* &= \frac{1}{\left|P_{iv}\right|}\sum_{k\in P_{iv}}^{}
    \overline{\mathbf{a}}^{(k)}  = 
    \frac{1}{\left|P_{iv}\right|}\sum_{k\in P_{iv}}^{}\l(\mathbf{a}^{(k)} -
    \sum_{l\neq i}^{}\mathbf{u}_{lv_l^{(k)}}^*\r)
    \\
    &= \frac{1}{\left|P_{iv}\right|}\sum_{k\in P_{iv}}^{}\mathbf{a}^{(k)} - 
    \frac{1}{\left|P_{iv}\right|}\sum_{k\in P_{iv}}^{}\sum_{l\neq i}^{} 
    \mathbf{u}_{lv_l^{(k)}}^*
\end{align*}
The first term converges to $\mathbb{E}\left[\mathbf{a}| a_i(p) = v\right]$.
The second term is a sum of terms of the form 
\begin{equation}
\label{eq:mean-codes-proof}
    \frac{1}{\left|P_{iv}\right|}\sum_{k\in P_{iv}}^{} \mathbf{u}_{l
    v_l^{(k)}}^* = \frac{1}{\left|P_{iv}\right|}\sum_{v_l\in
    S_l}^{}\mathbf{u}_{lv_l}^* \left|\{ k \text{ s.t. } a_i(p_k)=v, a_l(p_k)=v_l \}
    \right|
\end{equation}
for $l\neq i$. Since we are assuming $a_i$ is uncorrelated with $a_l$, in the
limit of the size $N$ of the dataset $\mathbf{a}^{(1)}, \mathbf{a}^{(2)},
\ldots, \mathbf{a}^{(N)}$ going to infinity, $\left|\{ k \text{ s.t. }
a_i(p_k)=v, a_l(p_k)=v_l \} \right|$ will approach $\left|P_{iv}\right|\mathbb{E}_{p\sim
\mathcal{D}}\left[\mathbf{1}_{a_l(p)=v_l}\right]$. Moreover, note that in the
closed-form solution $U^* = \l(C^\top C\r)^{+}C^\top A = \l(\frac{C^\top
C}{N}\r)^+ \frac{C^\top}{N}A$ from Equation \ref{eq:mse-codes-closed-form}, the
matrix $\frac{1}{N}C^TC$ converges to some limit $\Sigma\in \mathbb{R}^{S\times
S}$ as $N\to\infty$, and the matrix $\frac{1}{N}C^\top A$ similarly converges to
some limit $M\in \mathbb{R}^{S\times d}$ by the assumption that all conditional
means for all attributes exist. Thus, the optimal feature dictionaries $\mathbf{u}_{iv}^*$ will
also converge as $N\to\infty$. So we see that the sum in Equation
\ref{eq:mean-codes-proof} will converge to a value that is independent of the
value $v$ for the attribute $a_i$. 

Thus, if the conditional means $\mathbb{E}\left[\mathbf{a}| a_i(p) = v\right]$
are all equal to the overall mean $\mathbb{E}\left[\mathbf{a}\right]$, we get
that $\mathbf{u}_{iv}^*$ is independent of $v$; conversely, if
$\mathbf{u}_{iv}^*$ is independent of $v$, we get that the conditional means are
all equal to the overall mean. This completes the proof.

\end{proof}

\subsection{Feature-level mechanistic analyses for Section \ref{section:manual-sae}}
\label{app:feature-level-mech-methodology}

Since each activation is approximated as the sum of several vectors from a
finite set, it becomes possible to decompose the model's internal operations in
terms of elementary interactions between the learned vectors themselves. In the
current paper, we are particularly interested in attention heads, as they are
the building blocks of the IOI circuit. We consider the following subspace-level
analyses:
\begin{itemize}
\item \textbf{Attention scores}: The attention mechanism is considered to be a crucial
reason for the success of LLMs \citep{vaswani2017attention}, but a
subspace-level understanding of it is mostly lacking (but see
\citet{lieberum2023does}). How do the features in the keys and queries of
attention heads combine to produce the attention scores? Which feature pairs are
most important for the head's behavior?
\item \textbf{Head composition}: If we are to understand a circuit on the
subspace level, we need to develop a subspace-level account of how the outputs
of one attention head compose with the queries, keys and values of a downstream
head in the circuit. Each head adds its output to the residual stream, and
downstream heads' query/key/value matrices read from the residual stream. We can
thus examine the contribution, or \emph{direct effect}, of a head's output to
another head's queries/keys/values. We can decompose this direct effect in terms
of the features of the source head to calculate contributions of each feature to
the direct effect. 
\end{itemize}
Implementation details for these analyses follow.

\textbf{Attention scores.} Given feature dictionary reconstructions for the keys
and queries of an attention head at certain positions
\begin{align*}
    \mathbf{k}\approx \sum_{i\in I}\mathbf{u}_{a_i(\cdot)=a_i(p)},
    \quad \mathbf{q}\approx \sum_{i\in I}\mathbf{v}_{a_i(\cdot)=a_i(p)}
\end{align*}
we can decompose the attention scores as a sum of pairwise dot products between
the dictionary features
\begin{align*}
    \mathbf{q}^T \mathbf{k}/\sqrt{d_{head}} \approx
    \sum_{i,j\in I}\mathbf{v}_{a_i(\cdot)=a_i(p)}^T\mathbf{u}_{a_j(\cdot)=a_j(p)}/\sqrt{d_{head}}
\end{align*}
where $d_{head}$ is the dimension of the attention head.  This allows us to
examine which feature combinations are most important for the head's attention
according to the learned dictionaries. Variants of this decomposition can also
be applied to e.g. the difference in attention scores at two different token
positions.

\textbf{Head composition.} Following the terminology and results from \citet{elhage2021mathematical}, the
residual stream $\mathbf{r}_{l,t}$ of a transformer at a given layer $l$ and
token position $t$ is the sum of the input embedding and the outputs of all
earlier MLP and attention layers at this position. The residual stream is in
turn the input to the next attention layer; so, for example, we can write the
query vector for the $h$-th head at layer $l$ and token $t$ as
\begin{align*}
    \mathbf{q}_{l, t, h} = W_{l, h}^Q \operatorname{LayerNorm}\l(\mathbf{r}_{l, t}\r)
    = W_{l, h}^Q \operatorname{LayerNorm}\l(\overline{\mathbf{r}}_{l,t} +
    W^O_{l',h'}\mathbf{z}_{l',t, h'}\r)
\end{align*}
where $\mathbf{z}_{l',t, h'}$ is the attention-weighted sum of values of the $h'$-th
head at layer $l'<l$ and token $t$, $\overline{\mathbf{r}}_{l,t}$ is the remainder
of the residual stream after removing the contribution of this head, and LayerNorm is
the model's layer normalization operation \citep{Ba2016LayerN} before the
attention block in layer $l$. By treating the layer normalization as an approximately linear
operation (taking
the scale from an average over the dataset\footnote{This is justified by the
empirical observation that the layer normalization scales across the dataset are
well concentrated around their mean.}), we can derive an approximation of
the \emph{(counterfactual) direct effect} of the output of the $h'$-th head at
layer $l'$ and token $t$ on the query vector of the $h$-th head at layer $l$ and
token $t$: 
\begin{align*}
    \mathbf{q}_{l,t,h} \approx W_{l,h}^Q \l(\gamma_{l}\odot\frac{\mathbf{r}_{l, t} - \mu_{l,t}}{\sqrt{\widehat{\sigma}_l^2+\eps}} + \beta_l\r)
\end{align*}
where $\gamma_l,\beta_l$ are the learned scale and shift parameters of the LN
operation, $\mu_{l,t}$ is the average of the vector $\mathbf{r}_{l, t}$ over its
coordinates, and $\widehat{\sigma}_l$ is an average over the dataset of the
standard deviation of the residual stream at this position. Alternatively, we
can use the exact layernorm scale from the forward pass over a large sample to
compute the statistics of the exact direct effect over observed data.  

With either approach, we obtain a decomposition
\begin{align*}
    \mathbf{q}_{l,t,h} \approx \sum_{l'<l, h'}^{}\mathbf{u}_{t,(l',h')\to(l,h)} + \overline{r}_{l,t}
\end{align*}
of direct contributions from the outputs of earlier heads at this position, plus
some residual terms $\overline{r}_{l,t}$ (which are the contributions of all
previous MLP layers and the input embedding to the query vector). We can then
further decompose $\mathbf{u}_{t,(l',h')\to(l,h)}$ by replacing it with its
reconstruction from our feature dictionary.

For either way to treat the layer normalization, we can use the learned feature
dictionaries for the outputs, keys, queries and values of attention heads in a
number of ways to decompose the direct effect further. Here, we consider \textbf{head-and-feature attribution}: fixing the head $(l, h)$, we can vary the head $(l',
h')$ and break down the direct effects (projected on the query vector) by
feature.

\textbf{Results.} An interesting location to examine is the attention of
the name mover heads from END to the IO and S1 positions, where (according to
the analysis in \citet{wang2022interpretability}) the signal from the
S-Inhibition heads effectively removes the \textbf{S} name from the attention of
these heads. 

We show the results for the head L10H0 in Figure
\ref{fig:attention-decomposition}. Crucially, we observe that most interactions
are tightly clustered around zero, which suggests that these feature
dictionaries provide a sparse and interpretable account of the attention
mechanism. The only significantly nonzero interactions are (1) between the \textbf{S}
features in the query and the key at the S1 position; (2) between the \textbf{Pos}
features in the query and the key at both positions; and (3) between the \textbf{IO}
features in the query and the key at the IO position. The first two interactions
are expected given the findings of \citet{wang2022interpretability}. More
interesting is the third interaction, which is negative, suggesting that the
L10H0 head inhibits both the \textbf{S} and \textbf{IO} name tokens, and
effectively relies only on the \textbf{Pos} attribute to distinguish between the
two names. Notably, this is in contrast with the other two name movers L9H6 and
L9H9 (for which analogous plots are shown in Figure
\ref{fig:attention-decomposition-L9H6} and
\ref{fig:attention-decomposition-L9H9}), where the inhibition of the \textbf{IO}
attribute is absent. We found that using other methods to compute feature
dictionaries result in less sparse and interpretable patterns.

\begin{figure}[ht]
    \centering
    \begin{subfigure}{.48\textwidth}
        \centering
        \includegraphics[width=\linewidth]{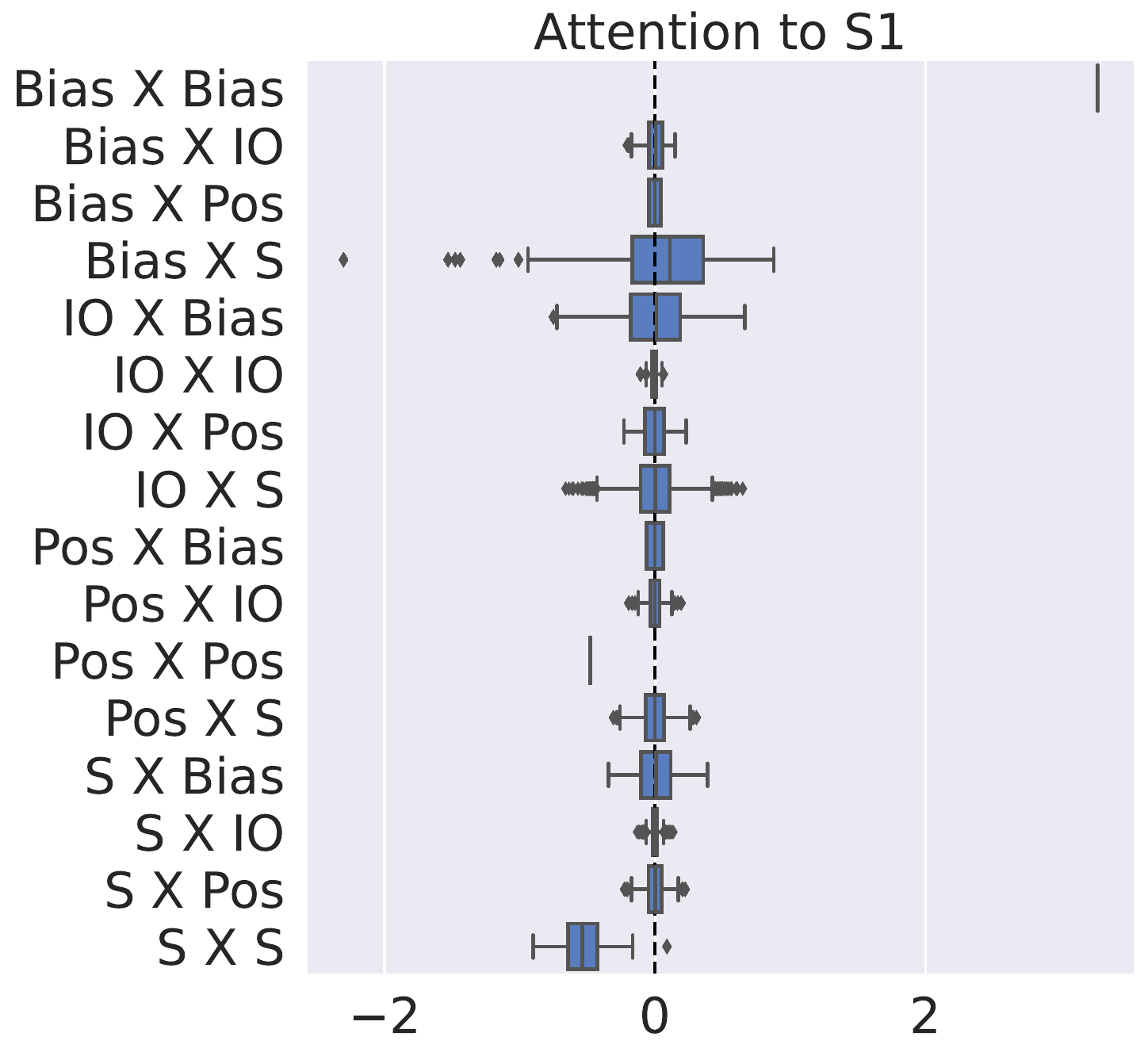}
        \label{subfig:attention-decomposition}
    \end{subfigure}%
    \begin{subfigure}{.36\textwidth}
        \centering
        \includegraphics[width=\linewidth]{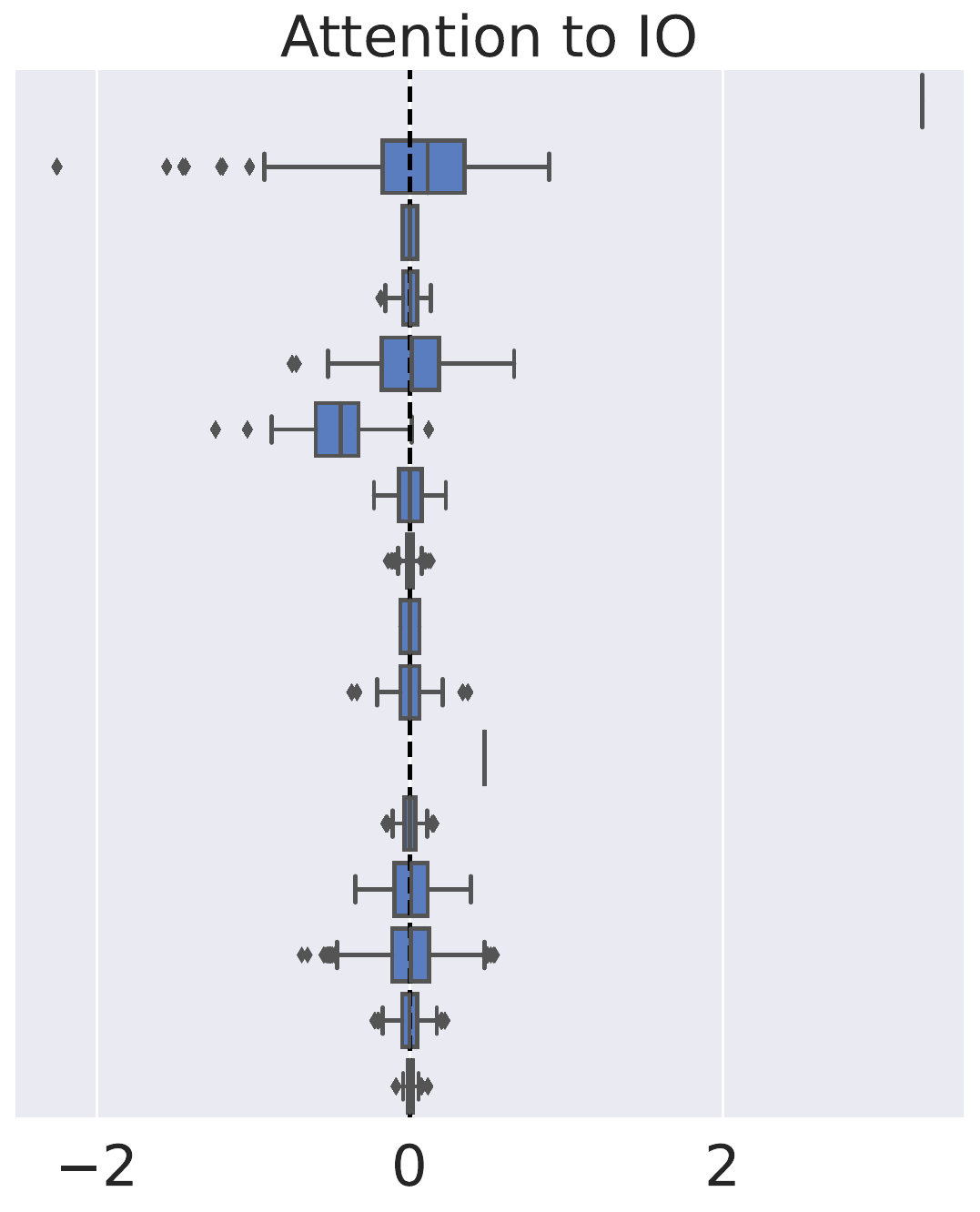}
        \label{subfig:head-feature-composition}
    \end{subfigure}%
    \caption{Decomposing the attention scores of the name mover head L10H0 from
    END to the S1 (left) and IO (right) positions. The y-axis ranges over the
    combinations of features from the query (first element) and the key (second
    element). The boxplots show the distribution of dot products between the
    corresponding feature vectors. The interaction between the bias terms (i.e.,
    the means of the respective queries/keys) provides a sense of the scale of
    the effects.}
    \label{fig:attention-decomposition}
\end{figure}

We further investigated the queries of the L10H0 head, by looking at which
features from upstream head outputs at the END token have a large direct effect
on these queries. Following the methodology detailed in Appendix
\ref{app:feature-level-mech-methodology}, we plot the direct effect from the
outputs of the S-Inhibition heads, as well as the two name mover heads L9H6 and
L9H9 in layer 9 in Figure \ref{fig:head-feature-attribution}. We find that the
S-Inhibition heads' \textbf{IO} features have no significant contribution to the
queries, but the \textbf{IO} features from the two name mover heads in layer 9
have a significant direct effect (aligned with the overall centered query
vector). This suggests that, having already computed a representation of the
\textbf{IO} attribute, these heads transmit it to the next layer, where it gets
picked up by the L10H0 head's query. Conversely, the S-Inhibition heads
contribute significantly with their \textbf{Pos} and \textbf{S} features,
whereas the name mover heads in layer 9 do not.

\begin{figure}[ht]
    \centering
    \includegraphics[width=\linewidth]{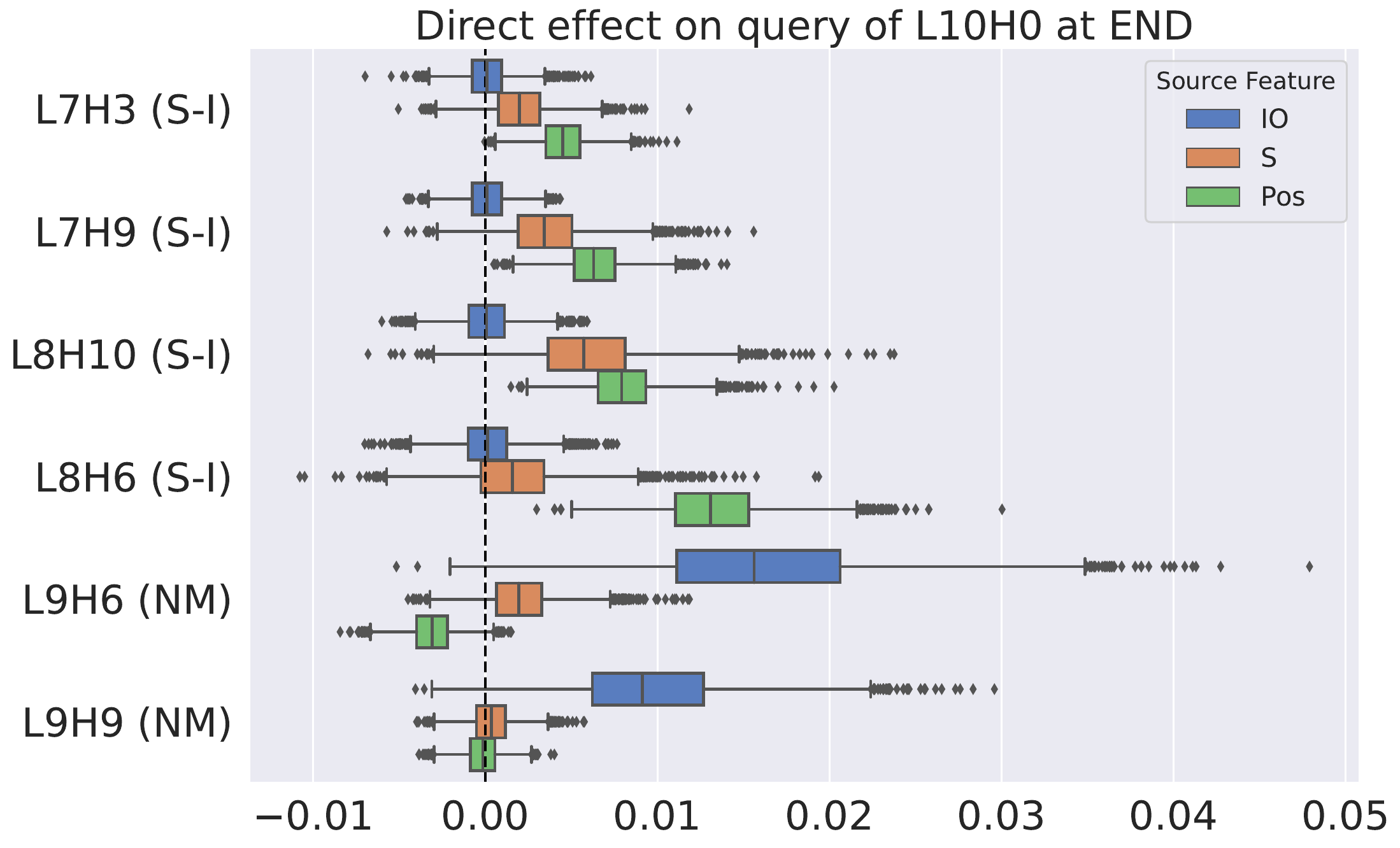}
    \caption{Direct effect of supervised features in the output of S-Inhibition
    heads, and Name Mover heads in layer 9, on the queries of the L10H0 name
    mover head at the END token.}
    \label{fig:head-feature-attribution}
\end{figure}

\subsection{Alternative parametrizations for the IOI task}
\label{app:alternative-ioi-parametrizations}
We mostly experimented with two possible parametrizations of prompts via
attributes:
\begin{itemize}
\item \textbf{independent parametrization}: we use the three independently
varying attributes -- \textbf{S}, \textbf{IO} and \textbf{Pos} -- to describe
each prompt. This is the parametrization used in the main text.
\item \textbf{coupled parametrization}: we couple position with name, and use
the two attributes (\textbf{S}, \textbf{Pos}) and (\textbf{IO}, \textbf{Pos}) to
describe each prompt. This parametrization is more expressive than the 
independent one, as it allows for different features for the same name depending
on whether it comes first or second in the sentence. At the same time, the
coupled parametrization can express the independent one as a special case
(Appendix \ref{app:coupled-vs-independent}).
\end{itemize}

We find that these parametrizations arrive at highly similar activation
reconstructions $\widehat{\mathbf{a}}$. In fact, we find an even stronger
property: the coupled parametrization essentially simulates the features in the
independent one; details are given in Appendix
\ref{app:coupled-simulates-independent}. 

Finally, we note that the fact that we find parametrizations that result in good
approximation is not trivial. Not every `natural-seeming' parametrization will
lead to a good approximation of model behavior; we show an example of this with
a `names' parametrization in Appendix Figure
\ref{fig:reconstructed-logitdiff-recovered-all}, where we instead use an
attribute for the 1st, 2nd and 3rd name in the sentence.

\paragraph{What about other parametrizations?}
Note that there are combinatorially many possible ways to pick attributes to
disentangle the activations into, and a priori any specific choice is arbitrary.
We justify our choice of parametrizations in several ways: (1) they pass our
tests for model approximation, control and interpretability given later in this
section; (2) they correspond to the internal states of the IOI circuit
identified in \citet{wang2022interpretability}; (3) we experimented and/or
considered other parametrizations, but found they either perform the same or
worse on our tests. In Appendix \ref{app:possible-decompositions}, we provide a
more detailed discussion of different possible parametrizations in the IOI task
and their relative strengths and weaknesses.

\subsection{Comparing the coupled and independent parametrizations}
\label{app:coupled-vs-independent}

\subsubsection{The coupled parametrization captures the independent one}
\label{app:coupled-simulates-independent}
\textbf{Idealized model.} We first
note that the coupled parametrization can express all reconstructions
expressible by the independent parametrization.  Suppose we have an IOI
distribution using a set of available names $S_{names}$, and let
$\mathbf{pos}_{ABB}, \mathbf{pos}_{BAB}, \{\mathbf{io}_a\}_{a\in S_{names}},
\{\mathbf{s}_a\}_{a\in S_{names}}$ be feature dictionaries for the independent parametrization
at some model activation. Then, we can define the following feature dictionaries for the
coupled parametrization:
\begin{align*}
    \mathbf{io}_{a,ABB} &= \mathbf{io}_{a} + \frac{1}{2}\mathbf{pos}_{ABB}, \quad
    \mathbf{io}_{a,BAB} = \mathbf{io}_{a} + \frac{1}{2}\mathbf{pos}_{BAB}, \quad
    \\
    \mathbf{s}_{a,ABB} &= \mathbf{s}_{a} + \frac{1}{2}\mathbf{pos}_{ABB}, \quad
    \mathbf{s}_{a,BAB} = \mathbf{s}_{a} + \frac{1}{2}\mathbf{pos}_{BAB}
\end{align*}
Then for a prompt $p$ of the form ABB (the BAB case is analogous), with the
\textbf{IO} name being $a$ and the \textbf{S} name being $b$, we have that the
reconstruction of an activation $\mathbf{a}$ using the independent
parametrization is
\begin{align*}
    \widehat{\mathbf{a}}_{independent} = \mathbf{io}_{a} + \mathbf{s}_{b} + \mathbf{pos}_{ABB}
\end{align*}
and the reconstruction using our coupled parametrization is
\begin{align*}
    \widehat{\mathbf{a}}_{coupled} &= \mathbf{io}_{a,ABB} + \mathbf{s}_{b,ABB} = \mathbf{io}_{a} + \mathbf{s}_{b} + \frac{1}{2}\mathbf{pos}_{ABB} + \frac{1}{2}\mathbf{pos}_{ABB} 
    \\
    &= \mathbf{io}_{a} + \mathbf{s}_{b} + \mathbf{pos}_{ABB} = \widehat{\mathbf{a}}_{independent}
\end{align*}

\textbf{Empirical evaluation.}
We evaluated whether this occurs empirically with the MSE features for the two
parametrizations. First, we plot the fraction of variance explained by the
reconstructions using the independent parametrization in the reconstructions
using the coupled parametrization. We find very high agreement (Figure
\ref{fig:coupled-vs-independent-var-explained}); results in the other direction
are almost identical and are not shown here for brevity.  Next, we check if the
coupled parametrization essentially simulates the independent one as described
analytically above. We do this by measuring the cosine similarity between the
vector $\mathbf{pos}_{ABB} - \mathbf{pos}_{BAB}$ from the independent
parametrization and vectors $\mathbf{io}_{a,ABB} - \mathbf{io}_{a,BAB}$ and
$\mathbf{s}_{a,ABB} - \mathbf{s}_{a,BAB}$ from the coupled parametrization. In
our idealized simulation of the independent parametrization using the coupled
one, these values would be exactly $1$ for all names. We find that in all
circuit locations that represent both \textbf{IO} and \textbf{Pos}, the
similarities w.r.t the \textbf{IO} differences are significant; similarly, in all
circuit locations that represent both \textbf{S} and \textbf{Pos}, the
similarities w.r.t the \textbf{S} differences are significant (Figure
\ref{fig:coupled-vs-independent-cosine-similarity}). For reference, in a space
of this dimensionality (64), the expected magnitude of the cosine similarity
between two random vectors is $1/8$.

\begin{figure}[h]
    \centering
    \includegraphics[width=0.75\textwidth]{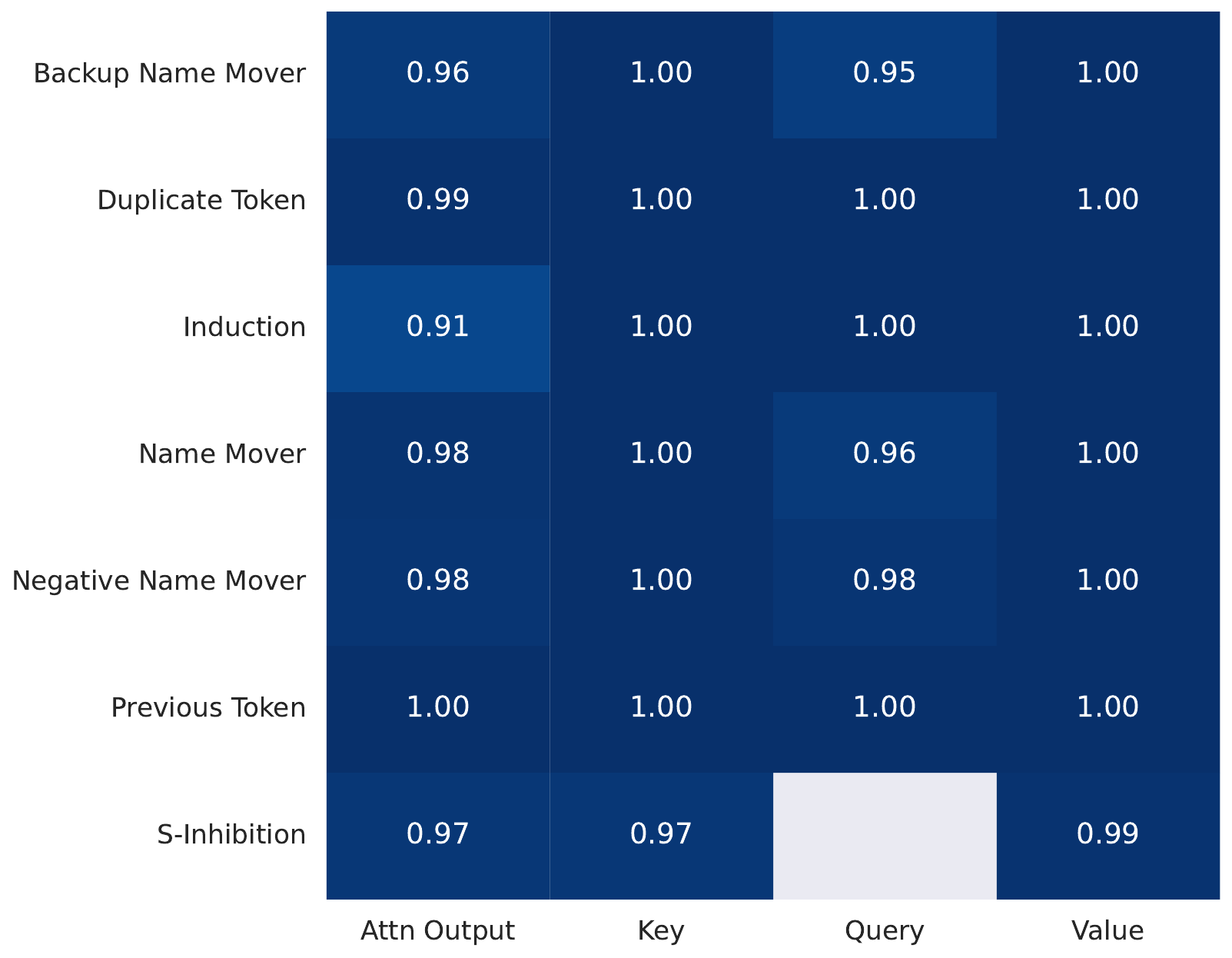}
    \caption{Variance explained by the reconstructions using the independent
    parametrization, with respect to the reconstructions using the coupled
    parametrization, averaged over combinations of class of heads in the circuit and activation types.}
    \label{fig:coupled-vs-independent-var-explained}
\end{figure}

\begin{figure}[h]
    \centering
    \includegraphics[width=0.65\textwidth]{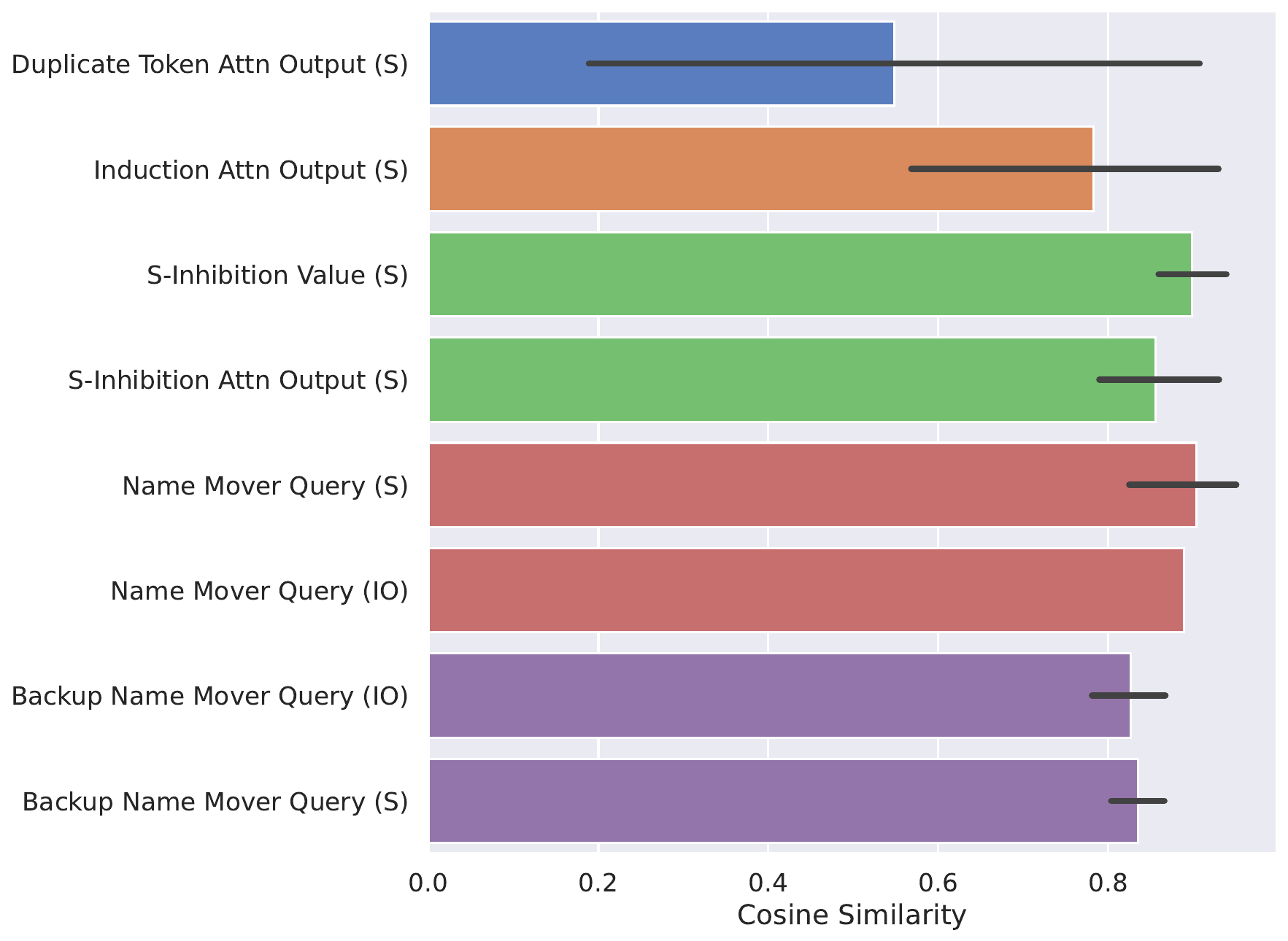}
    \caption{Cosine similarity between the vector $\mathbf{pos}_{ABB} -
    \mathbf{pos}_{BAB}$ from the independent parametrization and vectors
    $\mathbf{io}_{a,ABB} - \mathbf{io}_{a,BAB}$ and $\mathbf{s}_{a,ABB} -
    \mathbf{s}_{a,BAB}$ from the coupled parametrization, averaged over several
    classes of circuit locations. When evaluating similarity for the $\mathbf{io}_{a,ABB}-\mathbf{io}_{BAB}$ vectors, we only include locations where both the
    \textbf{IO} and \textbf{Pos} attributes are represented; and similarly for
    the $\mathbf{s}$-vectors.}
    \label{fig:coupled-vs-independent-cosine-similarity}
\end{figure}

\subsubsection{Other parametrizations expressible by the coupled parametrization.}
\label{app:coupled-simulates-other-parametrizations}
Consider the parametrization with attributes (\textbf{S}, \textbf{Pos}) and
\textbf{IO}. Again, let $\{\textbf{io}_a\}_{a\in S_{names}},
\{\textbf{s}_{a,ABB}\}_{a\in S_{names}}, \{\textbf{s}_{a,BAB}\}_{a\in
S_{names}}$ be feature dictionaries for this parametrization at some model activation.
Then, we can define the following feature dictionaries for the coupled parametrization:
\begin{align*}
    \mathbf{io}'_{a,ABB} = \mathbf{io}'_{a,BAB} = \mathbf{io}_a, \quad
    \mathbf{s}'_{a,ABB} = \mathbf{s}_{a,ABB}, \quad \mathbf{s}'_{a,BAB} = \mathbf{s}_{a,BAB}
\end{align*}
Then for a prompt $p$ of the form ABB (the BAB case is analogous), with the 
\textbf{IO} name being $a$ and the \textbf{S} name being $b$, we have that the
reconstruction of an activation $\mathbf{a}$ using the (\textbf{S},
\textbf{Pos}) + \textbf{IO} parametrization is
\begin{align*}
    \widehat{\mathbf{a}}_{(\textbf{S}, \textbf{Pos}) + \textbf{IO}} = \mathbf{io}_a + \mathbf{s}_{b,ABB} = \mathbf{io}'_{a,ABB} + \mathbf{s}'_{b,ABB} = \widehat{\mathbf{a}}_{coupled}
\end{align*}
and so the coupled parametrization can express all reconstructions expressible
by the (\textbf{S}, \textbf{Pos}) + \textbf{IO} parametrization. Analogously, it
can express all reconstructions expressible by the (\textbf{IO}, \textbf{Pos}) +
\textbf{S} parametrization.

\subsubsection{Editing methodology with the coupled parametrization.}
\label{app:coupled-editing-methodology}

For each activation, we may choose to edit one or several of the \textbf{IO},
\textbf{S} and \textbf{Pos} properties of the prompt. With the independent
parametrization, this is straightforward, since the attributes match these
properties. With the coupled parametrization, suppose we are given a prompt of
the form ABB (the BAB case is analosous) with the \textbf{IO} name being $a$ and
the \textbf{S} name being $b$. Given an activation $\mathbf{a}$ with
corresponding reconstruction under the coupled parametrization
\begin{align*}
    \widehat{\mathbf{a}} = \mathbf{io}_{a,ABB} + \mathbf{s}_{b,BAB}
\end{align*}
we can perform edits as follows:
\begin{itemize}
\item to change the \textbf{IO} name from $a$ to $a'$: $\mathbf{a}_{edit} =
\mathbf{a} - \mathbf{io}_{a,ABB} + \mathbf{io}_{a',ABB}$
\item to change the \textbf{S} name from $b$ to $b'$: $\mathbf{a}_{edit} = 
\mathbf{a} - \mathbf{s}_{b,ABB} + \mathbf{s}_{b',ABB}$
\item to change the \textbf{Pos} property from ABB to BAB: $\mathbf{a}_{edit} =
\mathbf{a} - \mathbf{io}_{a,ABB} - \mathbf{s}_{b,ABB} + \mathbf{io}_{a,BAB} +
\mathbf{s}_{b,BAB}$
\end{itemize}

\subsection{On Possible Feature Dictionaries for the IOI Task}
\label{app:possible-decompositions}

In this section, we compare the properties of several \emph{a priori} possible
ways in which the activations of the IOI circuit could be disentangled via
sparse feature dictionaries. The main goal is to illustrate that different
feature dictionaries can have similar usefulness in terms of model control and
interpretability, even if they fail natural tests that directly look for similar
features in the two dictionaries. This motivates evaluations that are agnostic
to the specific features in a dictionary, as long as the features can
parsimoniously disentangle the model's internal computations.

\textbf{The independent parametrization from the main text.} It
is worth explicitly describing the properties of the supervised feature
decomposition we constructed in Section
\ref{subsection:manual-ioi-implementation}, which uses the \textbf{IO},
\textbf{S} and \textbf{Pos} attributes to describe the prompts; it serves as an
idealized example against which to compare other possible decompositions. In
this decomposition, we can approximate an activation $\mathbf{a}$ for a prompt
$p$ where the \textbf{IO} name is $a$ and the \textbf{S} name is $b$, with the
\textbf{IO} name appearing first, as follows:
\begin{align*}
    \mathbf{a} \approx \mathbf{io}_a + \mathbf{s}_b + \mathbf{pos}_{ABB}
\end{align*}
where the vector $\mathbf{io}_a$ is the feature for the \textbf{IO} name $a$,
the vector $\mathbf{s}_b$ is the feature for the \textbf{S} name $b$, and the
vector $\mathbf{pos}_{ABB}$ is the feature for the \textbf{Pos} attribute when
the \textbf{IO} name appears first in the sentence (and analogously for
$\mathbf{pos}_{BAB}$). 

Imagine now that we are given an `unlabeled' feature dictionary that corresponds
to this decomposition (i.e., we don't know which attribute each feature
corresponds to). We want to evaluate the usefulness of this decomposition for
model control and interpretability relative to the `human-legible' attributes
\textbf{IO}, \textbf{S} and \textbf{Pos}. This will be tautologically successful:
\begin{itemize}
\item the reconstructions are a faithful and complete representation of the
model's internal computations;
\item the dictionary can (by definition) express edits to the \textbf{IO},
\textbf{S} and \textbf{Pos} attributes very efficiently, as we only need to
change a single feature vector to change the corresponding attribute's value.  
\item the features are fairly interpretable: we can understand the meaning of
each feature in terms of the attribute it represents. 
\item moreover, using metrics such as recall and precision (following the
evaluation methodology of \citet{bricken2023monosemanticity}) will readily
surface the features that are most important for each attribute.
\end{itemize}

\textbf{Using per-gender vectors to describe names.} Another possible
decomposition is
\begin{align*}
    \mathbf{a} \approx \mathbf{io}_a + \mathbf{io\_gender}_{g(a)} + \mathbf{s}_b
    + \mathbf{s\_gender}_{g(b)} + \mathbf{pos}_{ABB}
\end{align*}
where $g:\mathbf{Names}\to \{M, F\}$ is some labeling function that roughly
classifies names according to how they are typically gendered\footnote{We
experimented with this decomposition in our supervised framework, but did not
find it to confer additional benefits for the purposes of our tests.}.
Here, we hypothesize that the model may use features of high norm that sort
names into genders (which may be useful to the model for various reasons), and
then add a small-norm per-name correction to obtain a name-specific representation.
In particular we imagine that $\mathbf{io}_a + \mathbf{io\_gender}_{g(a)}$ in
this representation would correspond to $\mathbf{io}_a$ in the supervised
decomposition, and similarly for the \textbf{S} name.
\begin{itemize}
\item This decomposition is also fairly sparse and interpretable, and it can
express edits almost as parsimoniously as the supervised decomposition (we only
need to change \emph{two} feature vectors to edit a name, and one to edit
\textbf{Pos}).
\item Moreover, metrics such as recall and precision will pick up on the 
per-name features;
\item If we compare the prompts for which a feature activates for this
dictionary and our `independent parametrization' feature dictionary (discussed
above), we will easily discover the per-name features that correspond to the
\textbf{IO} and \textbf{S} names.
\item \textbf{However}, if we instead directly use cosine similarity to the
supervised features $\mathbf{io}_a, \mathbf{s}_b$ as a metric, we may be misled,
because if the per-gender features have sufficiently higher norm than the
per-name corrections, the cosine similarity may be low.
\end{itemize}

\textbf{Features for small, overlapping subsets of names.} Going further, we can
imagine a decomposition where we have features that correspond to pairs of
names, such that each name is in exactly two pairs (this can be achieved by
partitioning all names into pairs along a cycle). We can express a name as a sum
of the features for the two pairs it is in, with some superposition. Note that
more sophisticated constructions with more features per name / more names per
feature are possible by e.g.  picking subsets at random or using expander graphs
\citep{hoory2006expander}, as they will `spread out' the superposition more
evenly. 
\begin{itemize}
\item This decomposition is \emph{somewhat} sparse and interpretable, and can
likely be used for feature editing in a reasonable way, as long as the sets of
features associated with each name are not too large. Even though we would need
to change several feature vectors to edit a name, there should also be a fair
amount of disentanglement so that we don't also need to throw away all the 
features active in an example to change a single attribute.
\item \textbf{However}, comparing our supervised decomposition against this one using
geometric metrics such as cosine similarity may be misleading, because while a
sum of a few feature vectors associated with the same name may point in the same
direction as our supervised feature, any individual feature may not.
\item \textbf{Furthermore}, it also has significantly reduced precision for the
features, because each of the few features associated with a name will also
activate for several other names. This can make directly looking for features
whose activation patterns resemble the ones in our supervised decomposition
misleading.
\end{itemize}

Our experiments suggest that both task-specific and full-distribution SAEs
trained on IOI circuit activations learn a decomposition resembling this
abstract construction more than any other decomposition discussed here.

\textbf{Overfitting dictionaries.} Finally, a worst-case decomposition would be
to have a single feature for each possible set of values of the \textbf{S},
\textbf{IO} and \textbf{Pos} attributes.

\begin{itemize}
\item This decomposition is not interpretable, and it is not editable in any
non-trivial way: to change a single attribute, the entire decomposition must be
replaced;
\item Features of this form will have maximum precision, but very low
recall for the attributes.
\end{itemize}

\subsection{Details for training Sparse Autoencoders}
\label{app:sae-training-details}

\textbf{Task SAEs.}
We followed the methodology of \citet{bricken2023monosemanticity}, with the
exception that our neuron re-initialization method is not as sophisticated as
theirs: we simply re-initialize the encoder bias, encoder weights and decoder
weights for the dead neurons every 500 training epochs.

Training SAEs on the IOI distribution alone allows us to do a more extensive
hyperparameter search. Importantly, we normalized SAE inputs across attention
heads so that activations have an $\ell_2$ norm of $1$ on average in order to
make it easier for the same set of hyperparameters to work well across different
heads. In our main experiments, we use SAEs that were trained with a $16\times$
hidden expansion factor, (effective) $\ell_1$ regularization coefficient in
$(0.01, 0.05, 0.1, 0.3)$, batch size of $1024$, and learning rate of $0.001$.

We evaluated two key test-set metrics across training epochs: the average number
of active features per example (i.e. the average $\ell_0$ norm of activations),
and the fraction of the logit difference recovered when using the SAE
reconstructions at the given model location instead of the original activations,
scaled against a mean-ablation baseline (which is more stringent than the
zero-ablation baseline employed in most other work). We chose the regularization
coefficient and training checkpoint for each node that provided a good trade-off
between these two metrics; in particular, for almost all nodes, we recover logit
difference to within $20\%$ with respect to the mean-ablation baseline, and
there are $<25$ active features per example. We provide the results of this
sweep in Figure \ref{fig:sae-sweep}. While most SAEs seemed adequate, some
still have poor approximation as measured by the reconstructed logit difference.

We use a training set of 20,000 examples and an evaluation set of 8,000 examples
(for the purposes of automatic interpretability scoring, we need a large enough
evaluation sample so that each property in the distribution appears a
significant number of times). Since our training regime is significantly
distinct from that of \citet{bricken2023monosemanticity} (we use a much smaller
dataset), we first experimented extensively with different hyperparameters,
focusing on training SAEs on the queries of the name mover heads. We observed
that the most important hyperparameters are the dictionary size and the
effective $\ell_1$ regularization coefficient. We found that the batch size did
not influence the eventual quality of the learned features, only the speed of
convergence, and that a learning rate of $10^{-3}$ (as in
\citet{bricken2023monosemanticity}) was a good choice throughout. The runs
reported here used a dictionary size of $1024$ (a $16\times$ increase over the
dimensionality of attention head activations in GPT-2 Small), an effective
$\ell_1$ regularization between $0.05$ and $0.3$, and a batch size of $1024$.

We normalized activations across the circuit to make it
easier for the same range of hyperparameters to give good results, and ran a
sweep over $\ell_1$ regularization coefficients in $(0.01, 0.05, 0.1, 0.3)$. 

\textbf{Full-distribution SAEs.}
We trained full-distribution SAEs on every IOI component using
\textsc{OpenWebText} as training data. We mostly followed the method outlined in
\citet{bricken2023monosemanticity}. We added a
standardization procedure to be able to train SAEs on components with different
activation scales using the same l1-coefficient. Before training, we calculated
the mean and the mean l2-norm over 10 million activations. These values were
then frozen and used to standardize all activations as a preprocessing step and
to rescale the SAE reconstructions to match the original scale as a
post-processing step. We generated the training dataset by extracting a buffer
of 10 million activation vectors from the shuffled \textsc{OpenWebText} dataset
at a time with a maximal context window of $512$ tokens. We then trained the SAEs
for 250 million activation vectors and resampled dead neurons after 50000 steps
(around 100 million activations) as outlined in
\cite{bricken2023monosemanticity}. We used a batch size of 2048 and 8192
features per SAE. Post-training, we excluded dead and ultra-low frequency
neurons that we define as neurons who activate less than once per million
activations. The amount of dead neurons varies across SAEs between 20 and 90\%. 
We plot the fraction of dead neurons versus $\ell_0$ loss in Figure
\ref{fig:l0-vs-dead-neurons}, and the loss recovered versus $\ell_0$ loss in
Figure \ref{fig:l0-vs-loss-recovered}.

We used an $\ell_1$ coefficient of $0.006$ initially for all SAEs, and retrained SAEs
with a different $\ell_1$ coefficient for crosssections whose SAE metrics were
undesired (e.g. very low $\ell_0$ / bad reconstruction or very high $\ell_0$).
For the name mover outputs, we used $0.025$, and for S-Inhibition keys we used
$0.005$. The test $\ell_0$ and loss-recovered metrics were calculated on $81920$
unseen activation vectors.

We trained fewer SAEs on the full pre-training distribution compared to the IOI
distribution, as the computational cost is higher. We observe that SAEs with a
lower number of active features per example generally perform better for
IOI-related tests, even if their other metrics (such as loss recovered on
\textsc{OpenWebText}) are worse.

\textbf{SAE training loss metrics.} The most important loss metrics to track
during SAE training are the $\ell_0$ loss (measuring the average number of
active features per activation) and the language model loss recovered when using
the learned features to reconstruct the model's logits
\citep{bricken2023monosemanticity}. To turn the loss recovered into a meaningful
quantity, it is rescaled against a baseline; both zero ablation and mean
ablation have been used for this purpose in the literature
\citep{bricken2023monosemanticity,gpt2_attention_saes}. In this work, we used
mean ablation, as it is a more strict test of the quality of approximation.

\subsection{Additional notes on methodology for SAE interpretability}
\label{app:sae-interp-methodology}

\textbf{Interpretations considered.} Let \textbf{Names} be
the set of names in our IOI dataset.  We consider the following binary
predicates over prompts as possible interpretations for SAE features in the
activations at the S2 and END tokens:
\begin{itemize}
\item \textbf{IO is 2nd name}: the \textbf{Pos} attribute having
value corresponding to BAB-type prompts;
\item \textbf{IO is 1st name}: the \textbf{Pos} attribute having value
corresponding to ABB-type prompts;
\item \textbf{S is $<$name$>$}: the \textbf{S} attribute has a certain value in \textbf{Names};
\item \textbf{S is $<$name$>$ and at 1st position}: same as above, but also the
\textbf{S} name is at 1st position in the prompt (i.e., this is a BAB-type
prompt);
\item \textbf{S is $<$name$>$ and at 2nd position}: same as above, but for
ABB-type prompts;
\item \textbf{IO is $<$name$>$}: the \textbf{IO} attribute has a certain value in \textbf{Names};
\item \textbf{IO is $<$name$>$ and at 1st position}: same as above, but also the
\textbf{IO} name is at 1st position in the prompt (i.e., this is a ABB-type
prompt);
\item \textbf{IO is $<$name$>$ and at 2nd position}: same as above, but for
BAB-type prompts;
\item \textbf{S is male}: the \textbf{S} name is labeled as a male name under
our labeling of \textbf{Names} provided by GPT-4;
\item \textbf{S is female}: same as above for female names;
\item \textbf{$<$name$>$ is in sentence}: a certain name in \textbf{Names};
\item \textbf{$<$name$>$ is at 1st position}: same as above, but the name is the
first name in the sentence;
\item \textbf{$<$name$>$ is at 2nd position}: same as above, but the name is the
second name in the sentence;
\end{itemize}

The next several interpretations are only defined for the keys and values of the
name mover heads. We collect together activations for the keys according to
their role in the IOI circuit as opposed to absolute position: we group together
all activations at the IO token position (these are the \textbf{IO}
keys/values), even though they come from different absolute positions across IOI
prompts, because in ABB prompts the \textbf{IO} name comes first, while in BAB
prompts it comes second. The same applies for gathering the \textbf{S}
keys/values.

The key/value activations described above have not yet `seen' the repeated name
in the sentence, so there is no meaningful concept of \textbf{IO} and \textbf{S}
for them. Instead,  the only potentially task-relevant information contained in
them is about the name(s) seen so far in the sentence, and the position (1st
name or 2nd name) where the activation is taken from. Accordingly, the
applicable interpretations for features contained in these activations are
different:
\begin{itemize}
\item \textbf{current token is $<$name$>$}: the token from which the activations
are taken holds a certain value in \textbf{Names}.
\item \textbf{token is $<$name$>$ and at 1st position}: the activation was taken
from a token with a certain value in \textbf{Names}, and in addition it comes
from the first name in the sentence;
\item \textbf{token is $<$name$>$ and at 2nd position}: same as above, but
activation is from the second name in the sentence;
\item \textbf{current token is at 1st position}: the activation is from the first name in the sentence;
\item \textbf{current token is at 2nd position}: same as above, but from second
name;
\item \textbf{current token is female}: the token the activation is from is
female under our labeling of \textbf{Names}.
\end{itemize}

\emph{Unions of interpretations}. In addition, for each type of predicate that
has a free parameter in \textbf{Names}, we considered unions of up to 10 such
predicates (recall that we have a total of 216 names in our dataset). We ordered
the individual predicates according to their $F_1$ score, and chose the union of
the first $k\leq 10$ predicates with the highest $F_1$ score as a possible
interpretation. Note that the $F_1$ score is not in general a monotone function of $k$ in this setup; indeed, we find that for many features the highest-$F_1$-score explanation uses $k<10$ features.

\textbf{Sufficiency/necessity of interpretable features.} We take the
interpretations of the features described above and their respective $F_1$
scores, and for each threshold $t\in [0,1]$ over $F_1$ scores consider two
interventions:
\begin{itemize}
\item \textbf{sufficiency}: we subtract from the respective activation all
active features with $F_1$ score $<t$;
\item \textbf{necessity}: we subtract from the respective activation all 
active features with $F_1$ score $\geq t$;
\end{itemize}

\textbf{Interpretation-aware sparse control.} The goal of this experiment is to
evaluate the usefulness of our feature interpretations for editing the
attributes \textbf{IO}, \textbf{S} and \textbf{Pos} we have chosen to describe
prompts with. In particular, this is a different goal from our exploratory
interpretability experiment, where we were concerned with assigning
interpretations to each feature in a way agnostic to whether the features
correspond 1-to-1 with the attributes. 

Correspondingly, for this experiment we use the $F_1$ score with respect to each
attribute as a guide for the potential relevance of a feature to this attribute.
Specifically, recall from Subsection \ref{subsection:test-3-interpretability}
that we can assign an $F_1$ score to each combination of a feature and a
possible value of one of the attributes \textbf{IO}, \textbf{S} and
\textbf{Pos}. 

To perform editing, let our SAE have a dictionary of decoder vectors
$\{\mathbf{u}_j\}_{j=1}^m$, and the original and counterfactual prompts
$p_s,p_t$ have respective activations $\mathbf{a}_s, \mathbf{a}_{t}$ with
reconstructions
\begin{align*}
    \widehat{\mathbf{a}}_s = \sum_{i\in S} \alpha_i\mathbf{u}_i + \mathbf{b}_{dec}, \quad 
    \widehat{\mathbf{a}}_{t} = \sum_{i\in T} \beta_i\mathbf{u}_i + \mathbf{b}_{dec}
\end{align*}
for $S, T \subset \{1, \ldots, m\}$ and $\alpha_i, \beta_i > 0$. Suppose the
original and counterfactual prompts differ only in the value of the attribute
$a$ we wish to edit, with $a(p_s) = v_s, a(p_t) = v_t$.
Using a test set, we estimate the $F_1$ score of each possible value $v$ of the
attribute $a$, and for each $v$ we order the SAE features in decreasing order of
the $F_1$ score, obtaining a list $\operatorname{top-features}\l(v\r) =
\left[j_1, j_2, \ldots\right]$ with $1\leq j_k\leq m$. 

Then, fixing some value of $k$, we compute the edited activation as
\begin{align*}
    \mathbf{a}_{edited} = \mathbf{a}_s - \sum_{j\in (S\cap\operatorname{top-features}(v_s))\left[:k\right]}^{}\alpha_j\mathbf{u}_j + \sum_{j\in (T\cap\operatorname{top-features}(v_t))[:k]}^{}\beta_j\mathbf{u}_j.
\end{align*}
Note that this is a somewhat \emph{less expressive} intervention than our
per-prompt agnostic feature editing, because here we require that there is some
fixed ordering from which we choose features, instead of being free to pick
features for each prompt independently.

\subsection{Additional observations on feature interpretations}
\label{app:interp-extra-observations}

\textbf{Interpretable features agree with head roles identified in the IOI
circuit by \citet{wang2022interpretability}.}
For example, duplicate token heads attend to a previous occurrence of the
previous token and write information about this to the residual stream.
Consistent with this, we found that SAEs trained on them contain features that
indicate the duplicated name, the subject in case of IOI. This information is
then used by the induction heads that determine whether the subject is the first
or second name. The subject position is subsequently copied to the END position
by the S-Inhibition heads, where it will query the name movers to \emph{not}
attend to the subject, and to copy and predict the IO name. Indeed, we find
features of the outputs of induction heads, in the outputs and values of the S-I
heads, and in the queries of the name movers that inform about the position of
the indirect object. Lastly, the name movers attend to the IO position and copy
the name to predict it. As anticipated, the name mover values and outputs
contain features that specify the concrete IO name. In summary, the type of
features detected informs well about the function of the head on a certain task.

\textbf{New insights from the feature dictionaries.} 

\begin{itemize}
    \item The first layer DT-heads almost exclusively contain \textbf{S}
    features but the third layer duplicate token head also has positional
    features, suggesting more sophisticated text processing happening there.
    \item Induction head encompass different positional features. L6H9 features
    inform about what name is at the second position, while L5H5 and L5H8
    activate when the \textbf{IO} is at the first position. L5H9 is comprised of
    different features including positional features that don't inform about the
    role (\textbf{IO} vs \textbf{S}) of the name.
    \item L7H3 is the only head that contains a significant amount of gender
    features.
    \item S-Inhibition heads include primarily features that are a combination
    of names and \textbf{S}, while the name mover queries seem to only contain
    \textbf{Pos} features.
    \item The keys and values of name movers both inform about the token at the
    current position, but there is an important difference in the type of
    features: while the values primarily contain features that contain the name
    (that later gets moved to the END position), the keys consist of positional
    features and combinations of position and name. This hints at an important
    mechanism where the name mover query contains positional information about
    the \textbf{S} name that gets matched with the corresponding key,
    effectively shifting the attention towards the IO token such that the value
    with the \textbf{IO} gets copied to the END position.
\end{itemize}

\textbf{Editing interpretable features}

While the feature descriptions generated through our automatic scoring predict well when a feature is active, it is still unclear whether they also have an interpretable causal role, i.e. whether activating or deactivating a certain feature leads to a change in output logits that would be expected from the feature's description. To test this, we propose two experiments to judge the faithfulness and completeness of our interpreted features that involve patching activations from a counterfactual prompt and calculating the effect on the model's output:

\begin{itemize}
    \item \textbf{Estimating Faithfulness:}
    To estimate faithfulness, we construct SAE activations where we fix features with a test F1-score smaller than a threshold and patch activations of features with a high F1-score from the counterfactual prompt. We then calculate reconstructions of this new SAE activation vector, patch cross-sections, and record whether the model successfully predicts the correct counterfactual name.
    \item \textbf{Estimating Completeness:}
    To estimate completeness, we propose a similar experiment where we fix features with a high test F1-score and patch all remaining features. This intervention should not change the output logits if our features are complete.
\end{itemize}

We run this experiment on cross-sections of name mover outputs and repeat this experiment for different thresholds. We observe that for a threshold F1-score of 0.6, the SAE features are both faithful and complete to a high degree. We observe that the faithfulness metric significantly decreases for higher F1-scores of 0.7 and 0.8 but remarkably, we also observe that only fixing features with a very high F1-score of $>0.8$ while patching all other features from the counterfactual prompt is sufficient to keep the model predicting the base prompt's output.

\textbf{Generalization of features to the full distribution.}
We sample prompts from OpenWebText and visualize prompts that highly activate a
feature. We do this for the name mover queries and outputs. For the outputs, we
calculate the direct-feature-attribution (DFA) metric 
first proposed by \citet{gpt2_attention_saes_3}
that calculates per position how large the contribution of its
values to activating the feature is. Thus, it informs what position led to the
feature being activated.

We find that features mostly generalize. For example, we investigated a feature
that activated if the IO name starts with the letter "E" and we found that on
full distribution, this feature fires at tokens preceeding words starting with
"E". DFA suggests that previous tokens starting with "E" activate this feature,
and calculating the unembed $W_{dec}[j] W_O W_U$, denoted with
"Positive Logits" and "Negative Logits" in Figure
\ref{fig:full-sae-example-prompts} shows that activating the feature increases
logits for words starting with "E". Together, this hints at a general name
moving mechanisms to predict words that previously occurred in the context that
drives in-context learning where the head's QK-circuit drives attention to the
position of the word to predict, and the OV circuit copies the token from the
previous to the current position to predict it.

\begin{figure}[ht]
    \centering
    \includegraphics[width=\textwidth]{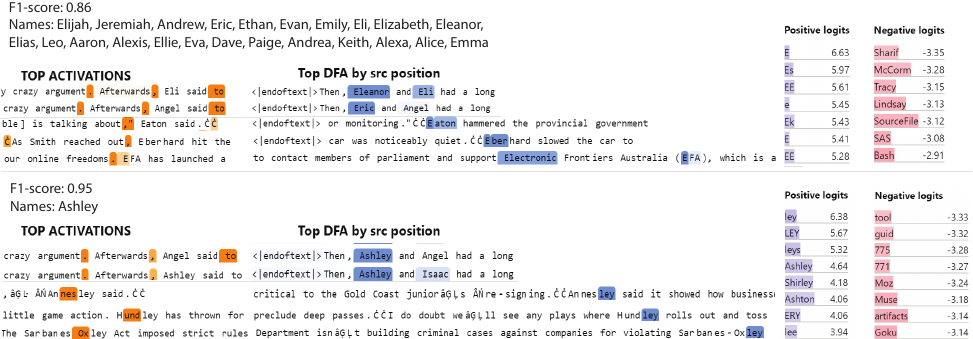}
    \caption{Two representative features discovered in the output name mover SAE
    L9H9 to illustrate the features behavior or webtext. Both features are IO
    name features, the upper one containing 23 names, the lower one only a
    single name. Left: Feature activation per position; Middle: Direct Feature
    Attribution (DFA) that tracks the position whose values contribute most to
    activating the given feature; Right: The output tokens whose logits get
    increased (positive) or decreased (negative) when the feature is active,
    calculated by $\textbf{v} \textbf{W}_O \textbf{W}_U$ with $\textbf{v}$ being
    the decoder weight vector of the feature of interest, $\textbf{W}_O$ the
    output weight of the head and $\textbf{W}_U$ the unembed}
    \label{fig:full-sae-example-prompts}
\end{figure}

\subsection{Feature occlusion details}
\label{app:occlusion}

\textbf{Training an exhaustive set of SAEs.} Focusing on the L10H0 queries, we
found that our SAEs consistently find a single feature for almost all
\textbf{IO} names, but fail to find a significant number of features for
individual \textbf{S} names. 

To investigate further whether this is a result of poor hyperparameter
choices, we trained SAEs on the queries of L10H0 over a wide grid of
hyperparameters, so that performance deteriorates/plateaus at the edges of the
grid.  We pushed the dictionary size, training dataset size, $\ell_1$
regularization coefficient and number of training epochs significantly beyond
the values used in our other experiments. Specifically, we used a training set
of 100,000 examples (this is more datapoints than all $\sim93,000$ possible
combinations of \textbf{S}, \textbf{IO} and \textbf{Pos} in our data); we
trained dictionaries of up to $128*64=8192$ features (our supervised feature
dictionaries contain $\sim500$ features); we varied the effective $\ell_1$
regularization coefficient across two orders of magnitude; and we trained for up
to $6,000$ epochs. 

Results on the number of \textbf{IO}/\textbf{S} features with $F_1$ score $>0.5$
are reported in Figure \ref{fig:occlusion-main} (left). We observe that
across hyperparameters, we often find as many \textbf{IO} features as there are
names in our dataset; however, the number of \textbf{S} features is consistently
low, never exceeding $22$.

\textbf{Magnitudes of the IO and S features.} As a first step, we investigate 
the distribution of the norms of the features for \textbf{IO} and \textbf{S}
across the names in the IOI dataset in the L10H0 queries in our supervised
feature dictionaries from Section \ref{section:manual-sae}; results are shown in
Figure \ref{fig:occlusion-extra} (left). We observe that the norms of the
\textbf{IO} features are significantly (but not by much) higher than those of
the \textbf{S} features.

\begin{figure}[ht]
    \centering
    \begin{subfigure}{.45\textwidth}
    \centering
    \includegraphics[width=\textwidth]{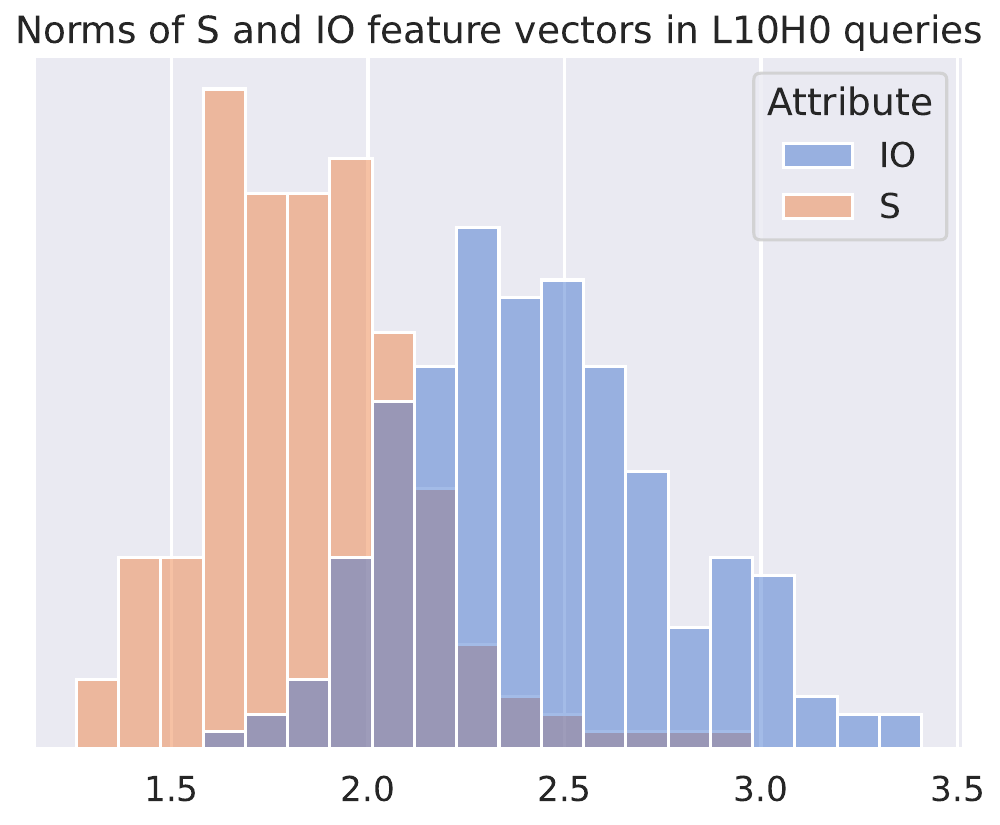}
    \end{subfigure}%
    \hfill
    \begin{subfigure}{.5\textwidth}
        \centering
        \includegraphics[width=\linewidth]{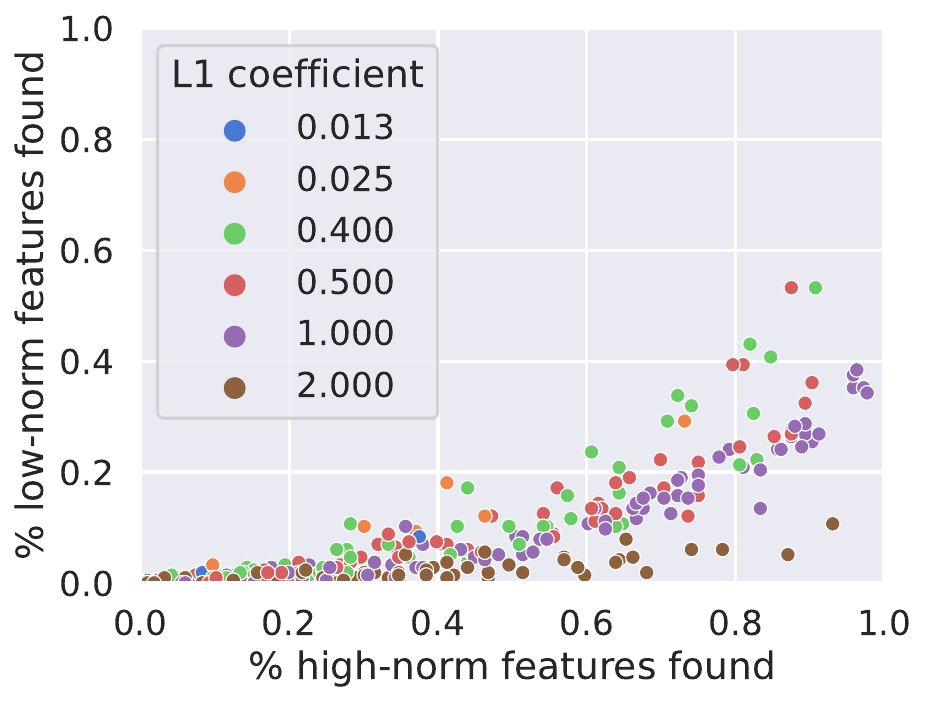}
    \end{subfigure}%
    \caption{
        \textbf{Left}: distributions of the $\ell_2$ norms of the feature
    vectors for the \textbf{IO} and \textbf{S} attributes from our supervised
    feature dictionary for the queries of the L10H0 name mover.
    \textbf{Right}: the results of the toy model experiment, where we
    investigate whether a disparity in feature magnitudes alone can lead to the
    occlusion phenomenon. The $x$-axis shows the fraction of high-magnitude
    ground-truth features for which we find an SAE feature with $F_1$ score
    $>0.9$; the $y$-axis shows the same for the low-magnitude ground-truth
    features.}
    \label{fig:occlusion-extra}
\end{figure}

\textbf{Surgically reducing the magnitude of \textbf{IO} features.} To examine
the role of feature magnitude in the occlusion phenomenon, we continuously
reduce the magnitude of \textbf{IO} features. Namely, given our supervised
feature dictionary with features $\{\mathbf{io}_a\}_{a\in \text{Names}}, 
\{\mathbf{s}_b\}_{b\in \text{Names}}, \{\mathbf{pos}_{v}\}_{v\in \{\text{ABB},
\text{BAB}\}}$ and an activation $\mathbf{a}(p)$ for a prompt $p$ where the 
\textbf{IO} name is $a$, we construct a new activation 
\begin{align*}
    \mathbf{a}_{\alpha}\l(p\r) = \mathbf{a}\l(p\r) - \alpha \mathbf{io}_a
\end{align*}
for $\alpha\in[0,1]$. We find that, applying this intervention without any
hyperparameter tuning (with a modest dictionary size of $1024$, and $\ell_1$
regularization coefficient of $0.2$), increasing $\alpha$ from $0$ to $1$
gradually makes the number of \textbf{IO} features with $F_1$ score $>0.5$
to decrease, while the number of \textbf{S} features with $F_1$ score $>0.5$
increases; results are shown in Figure \ref{fig:occlusion-main} (right).

\textbf{Reproducing the occlusion phenomenon in a toy model.} Finally, we wanted
to know if a disparity in feature magnitudes alone could lead to the occlusion
phenomenon. We constructed a simple toy model closely based on the empirical
setup in the queries of the L10H0 head to test this hypothesis. 

We form synthetic activations $\mathbf{a} = \mathbf{u}_i + \mathbf{v}_j$ for
random pairs $i, j \in \{1, \ldots, \left|\text{Names}\right|\}$, where
$\mathbf{u}_i, \mathbf{v}_j\in \mathbb{R}^{d_{head}}$ are sampled independently
from a standard normal distribution centered at zero, and then $\mathbf{u}_i$
are rescaled so that their mean norm matches the mean norm of \textbf{IO}
features in the L10H0 queries, and similarly for $\mathbf{v}_j$ and \textbf{S}
features. We train SAEs on these activations over a wide grid of hyperparmeters:
dictionary sizes in $(512, 1024, 2048)$, $\ell_1$ regularization in $(0.0125,
0.025, 0.4, 0.5, 1.0, 2.0)$, batch size in $(256, 1024)$ and learning rate in
$(0.001, 0.003, 0.0003)$. We trained for $1000$ epochs, saving checkpoints in a
geometric progression of epochs. Results for the number of high- and
low-magnitude features with $F_1$ score $\geq 0.9$ discovered are shown in
Figure \ref{fig:occlusion-extra} (right); we observe that we easily find one SAE
feature per each high-magnitude ground-truth feature, but it is more difficult
to find an SAE feature for each low-magnitude feature. 

However, we note that with lower $F_1$ thresholds, this effect is less
pronounced and eventually disappears.

\subsection{Feature over-splitting in a mixture of gaussians toy model}
\label{app:over-splitting}

\begin{figure}[ht]
    \centering
    \includegraphics[width=0.75\textwidth]{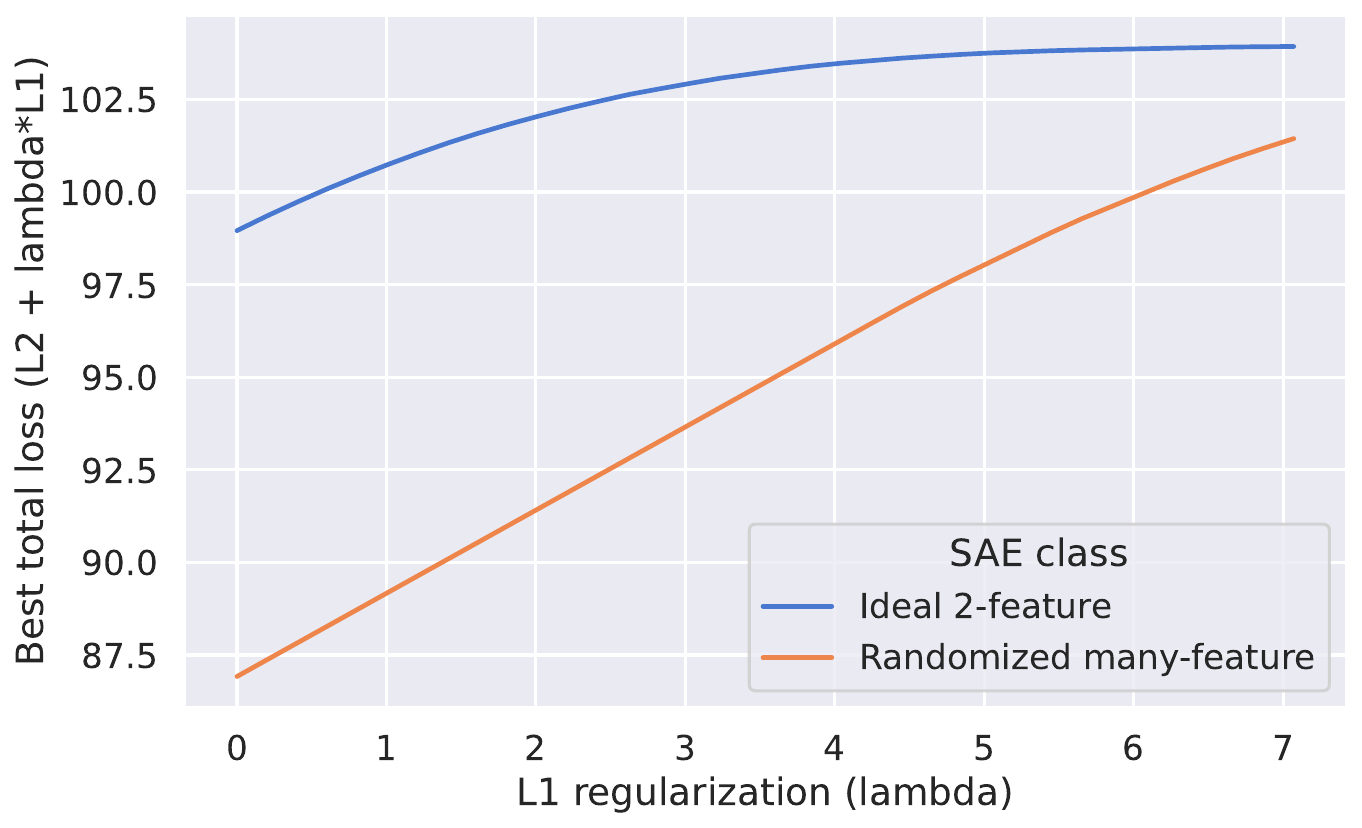}
    \caption{Main experiment for our toy model of feature oversplitting. The
    data distribution is a uniform mixture of two standard multivariate Gaussian
    random variables in $100$ dimensions. \textbf{Blue}: the (approximate) best
    possible total loss (in the infinite data limit) achieved by a class of
    `ideal' SAEs that use two features pointing towards the means of the two
    components of the mixture. \textbf{Orange}: an approximate upper bound on the 
    best possible total loss achieved by an SAE with $m=1,000$ hidden features
    (in the infinite data limit). The $x$-axis is the $\ell_1$ regularization
    coefficient $\lambda$. The cutoff on the $x$-axis is chosen so that the
    idealized solution activates only for a vanishing fraction ($<2\%$) of the
    examples in the mixture.} 
    \label{fig:full-sae-example-prompts}
\end{figure}

Our goal in this section is to demonstrate that there \emph{exist} setups where
an SAE with a large number of hidden features $m\gg 2$, when trained on a uniform
mixture of two isotropic gaussian variables, will prefer a solution with $\gg 2$
features (as opposed to the `ideal' solution with only two features, one per
component of the mixture), for \emph{any} value of the $\ell_1$ regularization
coefficient $\lambda$ and \emph{any} amount of training data.

\textbf{Setup.}
We consider a simple toy model where activations are
distributed according to a uniform mixture $\mathcal{D}_{toy}$ of two isotropic
gaussians $\pm \bm{\mu} + \mathcal{N}\l(0, \mathbf{I}_d\r)$ in $\mathbb{R}^d$
(i.e., we first flip a fair coin to determine the sign of $\bm{\mu}$, and then
sample an extra additive term from $\mathcal{N}(0, \mathbf{I}_d)$). We sample an
i.i.d.\ dataset from this mixture. We used $d=100, \l\|\bm{\mu}\r\|_2=2$ in the
experiments below; this choice guarantees that the two components of the mixture
are separable with high ($>95\%$) probability.

\textbf{The `ideal' solution with 2 features.}
We might hope that, with the right $\ell_1$ regularization, an SAE trained on
such a distribution will discover only two interpretable features, one for each
component in the mixture, with the encoder/decoder vectors aligned with
$\bm{\mu}$ and $-\bm{\mu}$ respectively, and each feature activating for
(approximately) the examples in its corresponding component. 

We find this is the case empirically when we train an SAE with only two hidden
features on this toy distribution. Specifically, there is a range of $\lambda$
values $1\leq \lambda \leq 3$ where the SAE reliably approximately recovers this
ideal solution, with the encoder bias controlling the trade-off between the two
loss terms: when $\lambda$ increases, the encoder bias changes so that fewer
examples in each component of the mixture are activated (and the active examples
have a lower $\ell_1$ loss). Beyond this range, the SAE often fails to activate
any feature on any examples ($\lambda > 3+\varepsilon$), or activates both
features on almost all examples ($\lambda < 1-\varepsilon$).

\textbf{Analyzing the ideal solution across $\ell_1$ coefficients.}
To study the properties of the `ideal' solution analytically, we make the
following assumptions (borne out empirically with 2-feature SAEs) using only
symmetries of the data distribution:
\begin{itemize}
\item the decoder vectors are $\pm \bm{\mu}$, normalized to have
unit $\ell_2$ norm (by symmetry of each component around its mean);
\item the respective encoder vectors are $\pm k\bm{\mu}$ for some $k>0$ (again
by symmetry of each component around its mean);
\item the decoder bias is zero (by symmetry of the mixture around zero);
\item both encoder biases are set to $-\gamma$ for some $\gamma>0$ (again by
symmetry of the mixture around zero).
\end{itemize}
This leaves only two parameters to tune: the encoder bias $\gamma$ and the
encoder scale $k$. We can thus use the following strategy to analytically
approximate the best loss of this class of solutions for a given $\lambda$:
\begin{itemize}
\item approximate the expected $\ell_1$ and $\ell_2$ losses over a fine grid of
values for $\gamma$ and $k$, for a large dataset of samples from the mixture;
\item given a $\lambda$ value, find the point in the grid that minimizes the
total loss $\ell_2 + \lambda \ell_1$.
\end{itemize}
We implemented this using $10^5$ samples, with a grid of 100 values for $\gamma$
in $[0, 5]$ and 20 values for $k$ in $[0, 2]$, over a grid of 100 values of
$\lambda$ in $[0, 20]$.  We verify that the best values chosen for each
$\lambda$ are not on the edges of the grid; the resulting curve of best total
loss values versus $\lambda$ is shown in Figure \ref{fig:full-sae-example-prompts} (blue), cut off at
$\lambda\approx 7$, beyond which the selected SAE activates for $<2\%$ of
examples in the components of the mixture.

\textbf{SAEs with $m\gg 2$ features prefer other solutions even with infinite data.} 
Next, we want to show that with enough features, SAEs will prefer solutions
different from the class of 2-feature solutions described above.
How can we give an \emph{empirical} argument that applies to \emph{any} amount
of training data and any $\lambda$? After all, a \emph{trained} SAE is a
function of the data it is trained on, so no experiment on datasets of bounded
size can establish properties of SAEs trained on arbitrarily large datasets. 

We get around this by defining a class of SAEs that is competitive with the
class of ideal 2-feature SAEs \emph{upfront}, independent of the training
sample, by using a randomized construction that works w.h.p., and then
estimating the expected loss of this SAE in the infinite data limit empirically.
To give evidence of our result for arbitrary $\lambda$, we consider a fine
enough grid of $\lambda$ values, and for each $\lambda$ we construct an SAE that
is competitive with the best ideal 2-feature SAE.

Our randomized SAE construction proceeds as follows:
\begin{itemize}
\item Sample $m$ encoder vectors $W_{enc}\in \mathbb{R}^{m\times d}$ from
$\beta * \mathcal{D}_{toy}$ (i.e. a version of $\mathcal{D}_{toy}$ scaled by
$\beta$) where $\beta>0$ is a hyperparameter that we will tune;
\item Set the decoder vectors $W_{dec}\in \mathbb{R}^{d\times m}$ to be the same
as the encoder vectors $W_{enc}^\top$, but normalized so that each column of
$W_{dec}$ has unit $\ell_2$ norm;
\item Set all encoder biases to $-\gamma$ (where $\gamma>0$ is a hyperparameter
that we will tune), and the decoder bias to $\mathbf{0}\in \mathbb{R}^d$.
\end{itemize}

We used $m=1,000$, a grid of 100 values for $\beta$ in $[0, 0.1]$ to search for
the best $\beta$ for each $\lambda$, and fixed $\gamma\approx 2.35$.

\textbf{Empirical confirmation.} Finally, we actually trained SAEs with many
hidden features on $\mathcal{D}_{toy}$, and observed that these SAEs reliably
learned solutions with many active features across $\lambda$ values.

\subsection{Additional details for Section \ref{section:sae-evaluation}}
\label{app:sae-evaluation-details}

\textbf{Feature weights are mostly in the interval $[0, 1]$.}
Recall that given a reconstruction $\widehat{\mathbf{a}} =
\sum_{i}^{}\mathbf{u}_i$, we defined the feature weight for the $i$-th feature as
\begin{align*}
    \operatorname{weight}\l(i\r) = \mathbf{u}_i^\top \widehat{\mathbf{a}} / \l\|\widehat{\mathbf{a}}\r\|_2^2.
\end{align*}
For our supervised feature dictionaries, we find that $10\%$ of all weights are
negative, and that the average value of all negative weights across all nodes in
the IOI circuit and all three attributes is $-0.037$.  Similarly, for our
task-specific SAE feature dictionaries, even though $31\%$ of all weights are
negative, the average value of all negative weights is $-0.002$. The number and
magnitude of weights higher than $1$ are even smaller.

\textbf{Causal evaluation using interpretability.} 
While the feature descriptions generated through our automatic scoring predict
well when a feature is active, it is still unclear whether they also have an
interpretable causal role, i.e. whether activating or deactivating a certain
feature leads to a change in output logits that would be expected from the
feature's description. To test this, we propose two experiments to judge the
sufficiency and necessity of our interpreted features that involve patching
activations from a counterfactual prompt and calculating the effect on the
model's output:

\begin{itemize}
    \item \textbf{Estimating sufficiency:}
    To estimate sufficiency, we construct SAE activations where we fix features with a test F1-score smaller than a threshold and patch activations of features with a high F1-score from the counterfactual prompt. We then calculate reconstructions of this new SAE activation vector, patch cross-sections, and record whether the model successfully predicts the correct counterfactual name.
    \item \textbf{Estimating necessity:}
    To estimate necessity, we propose a similar experiment where we fix features with a high test F1-score and patch all remaining features. This intervention should not change the output logits if our features are complete.
\end{itemize}

We run this experiment on cross-sections of name mover outputs and repeat this
experiment for different thresholds. We observe that for a threshold F1-score of
0.6, the SAE features are both faithful and complete to a high degree. We
observe that the faithfulness metric significantly decreases for higher
F1-scores of 0.7 and 0.8 but remarkably, we also observe that only fixing
features with a very high F1-score of >0.8 while patching all other features
from the counterfactual prompt is sufficient to keep the model predicting the
base prompt's output.

\subsection{Additional figures}
\label{app:additional-circuit-reconstruction}

\begin{figure}[ht]
    \centering
    \includegraphics[width=\linewidth]{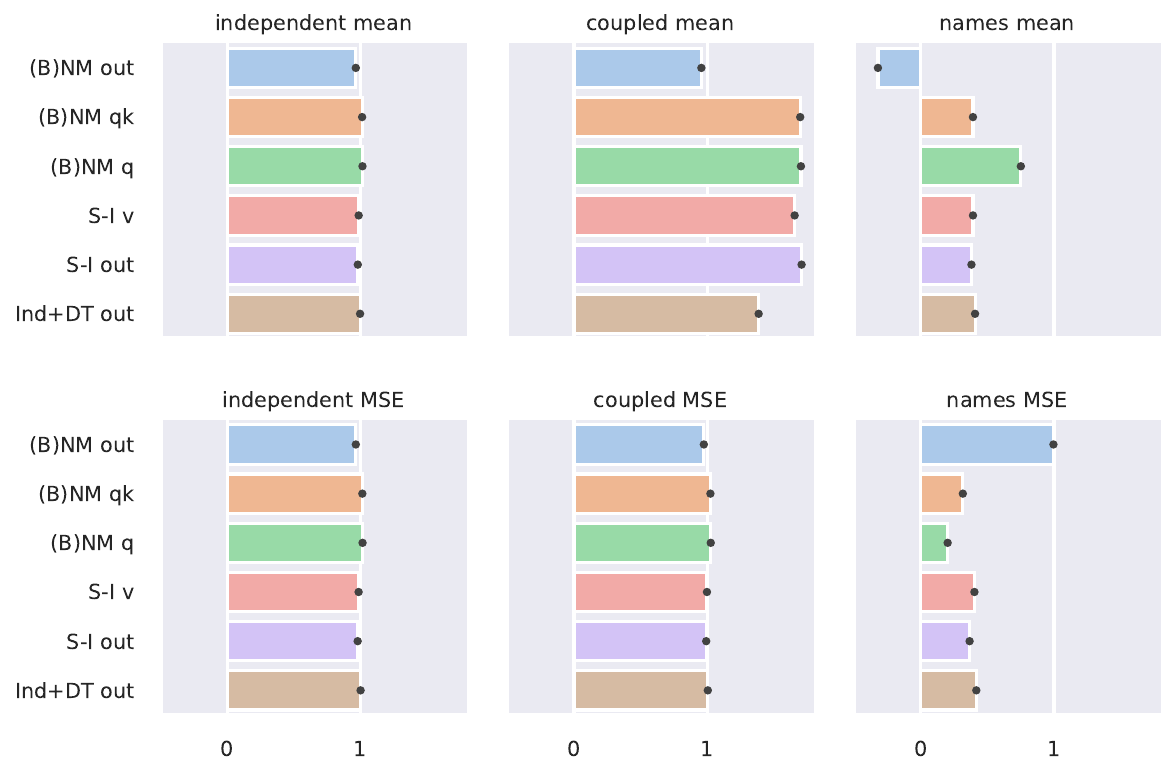}
    \caption{Fraction of recovered logit difference for several different
    methods to compute feature dictionaries, across cross-sections of the
    circuit. For a definition of the `names' parametrization, see Appendix
    \ref{app:possible-decompositions}.}
    \label{fig:reconstructed-logitdiff-recovered-all}
\end{figure}

\begin{figure}[ht]
    \centering
    \begin{subfigure}{.48\textwidth}
        \centering
        \includegraphics[width=\linewidth]{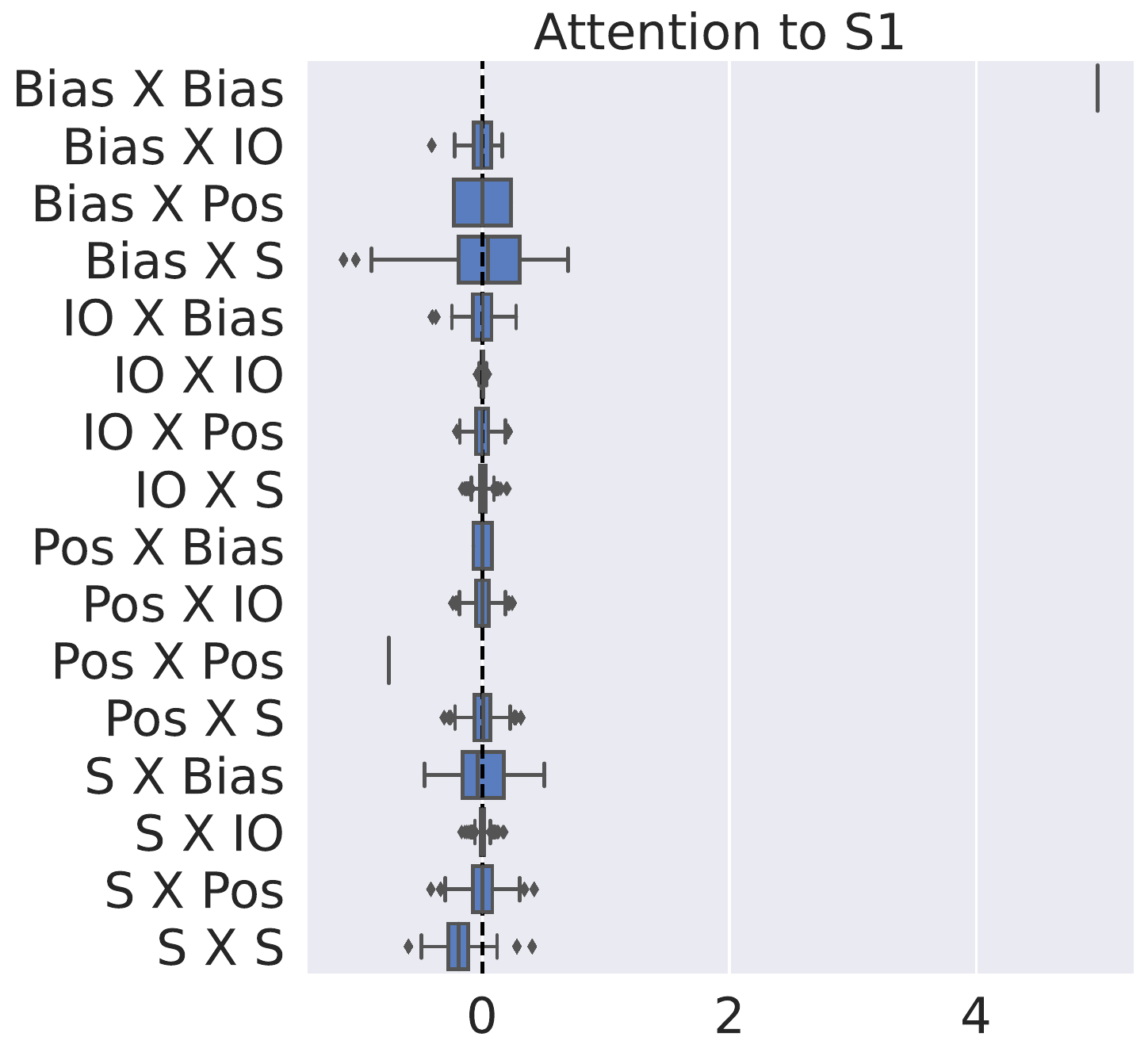}
    \end{subfigure}%
    \begin{subfigure}{.36\textwidth}
        \centering
        \includegraphics[width=\linewidth]{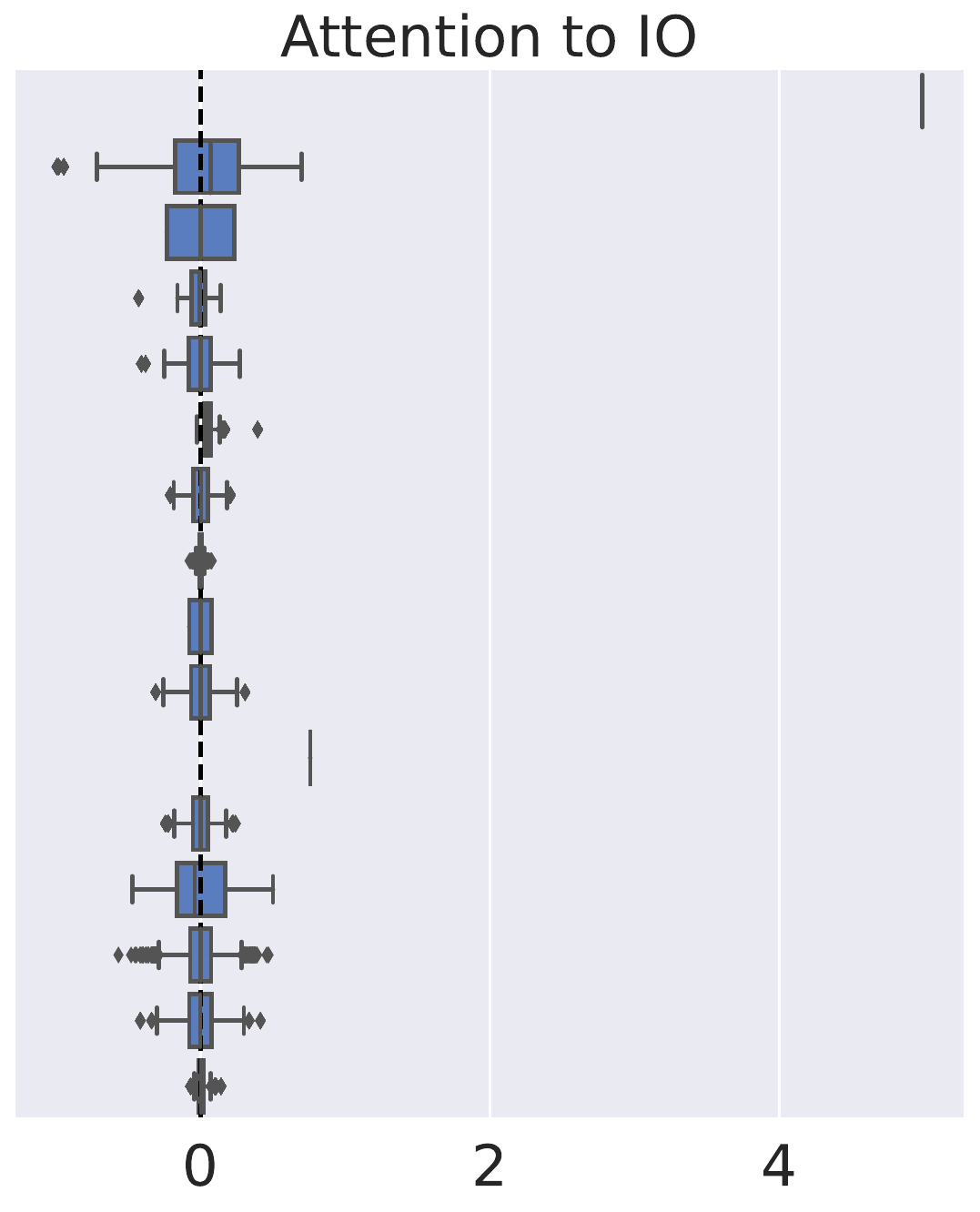}
    \end{subfigure}%
    \caption{Attention score decomposition for the L9H6 name mover (see Figure 
    \ref{fig:attention-decomposition} for explanation). Notice that, in contrast
    with L10H0 attention socres, there is no significant (inhibitory)
    interaction between the IO features in the query and key.}
    \label{fig:attention-decomposition-L9H6}
\end{figure}

\begin{figure}[ht]
    \centering
    \begin{subfigure}{.48\textwidth}
        \centering
        \includegraphics[width=\linewidth]{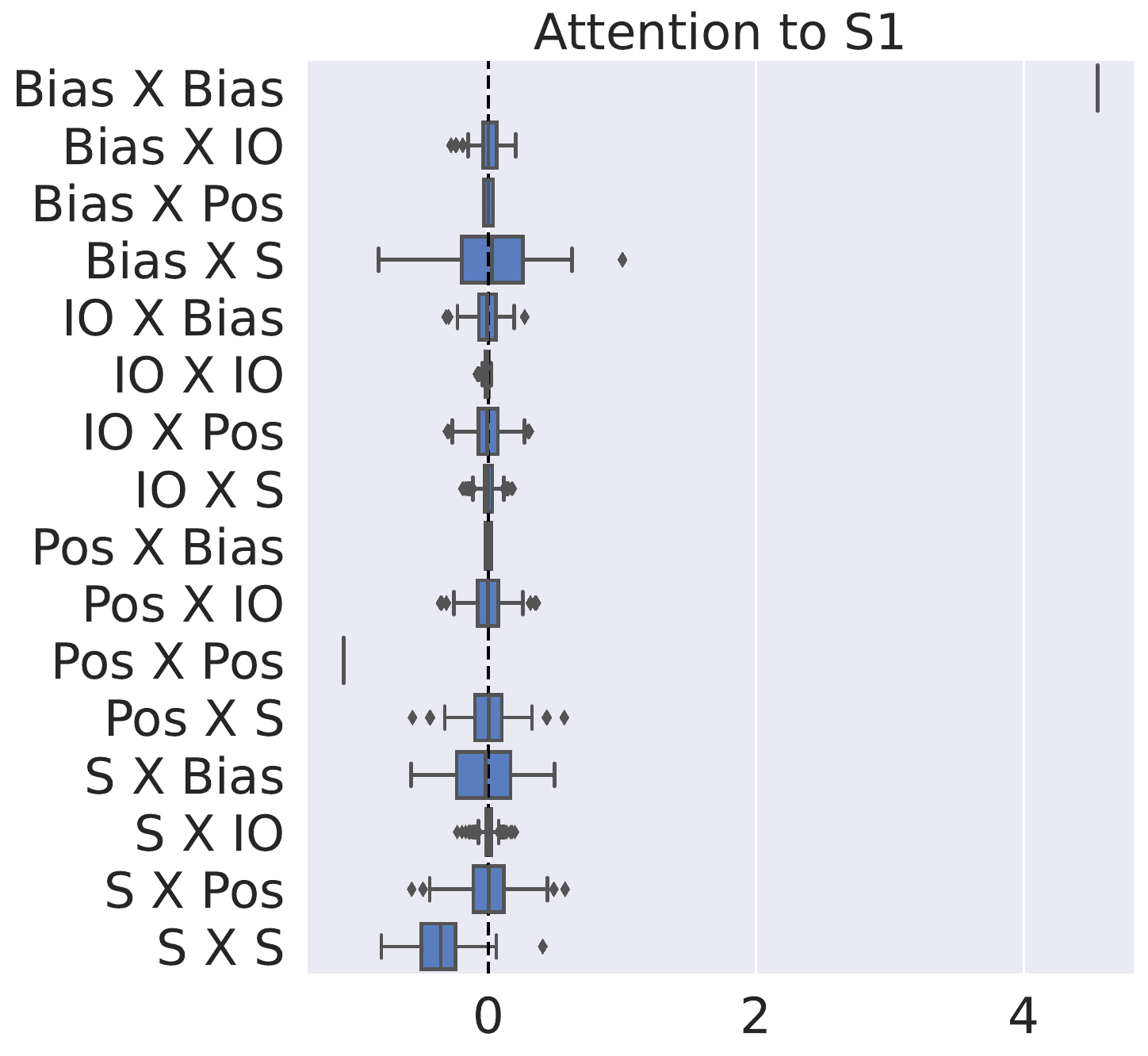}
    \end{subfigure}%
    \begin{subfigure}{.36\textwidth}
        \centering
        \includegraphics[width=\linewidth]{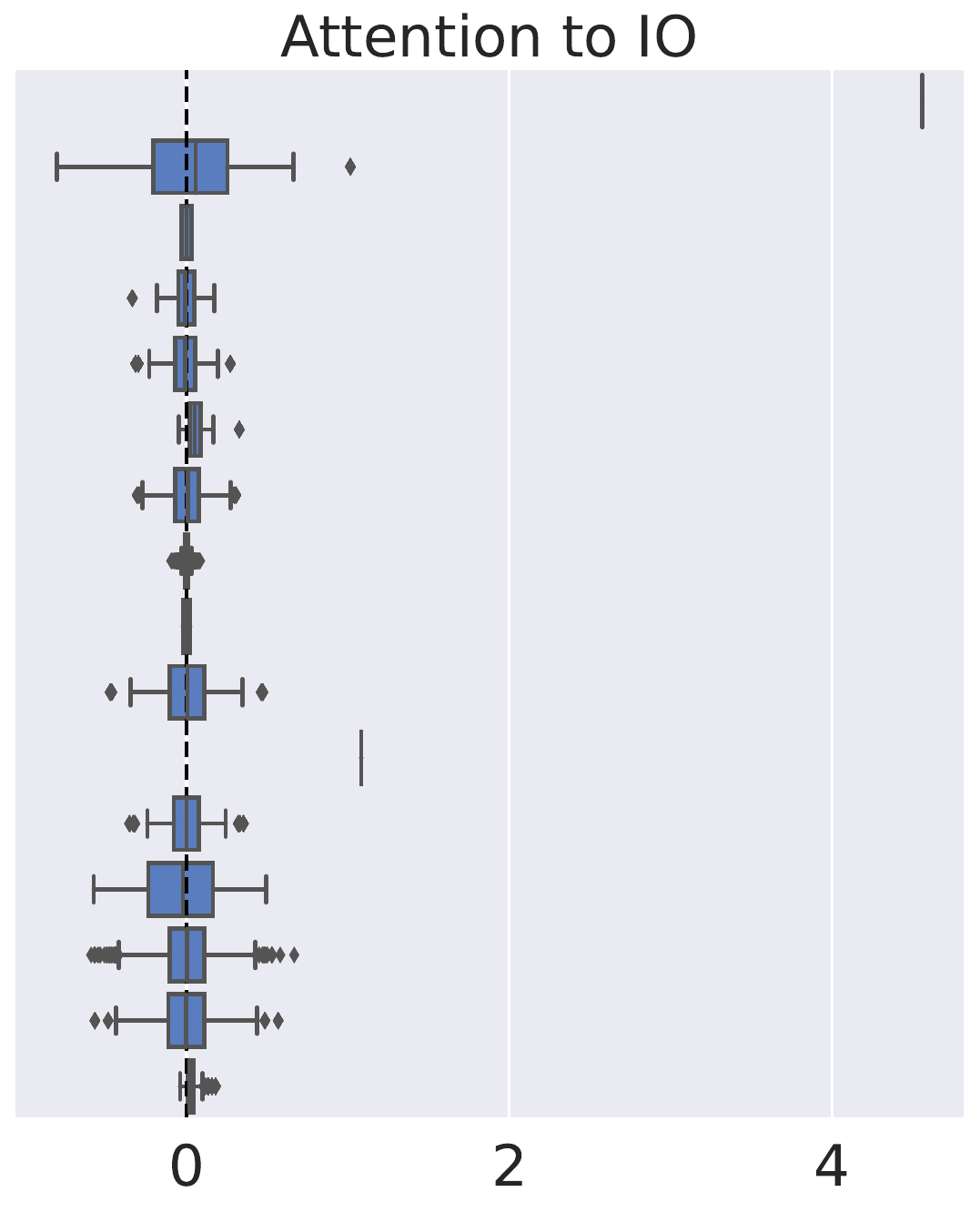}
    \end{subfigure}%
    \caption{Attention score decomposition for the L9H9 name mover (see Figure 
    \ref{fig:attention-decomposition} for explanation).
    }
    \label{fig:attention-decomposition-L9H9}
\end{figure}

\begin{figure}[ht]
    \centering
    \includegraphics[width=\linewidth]{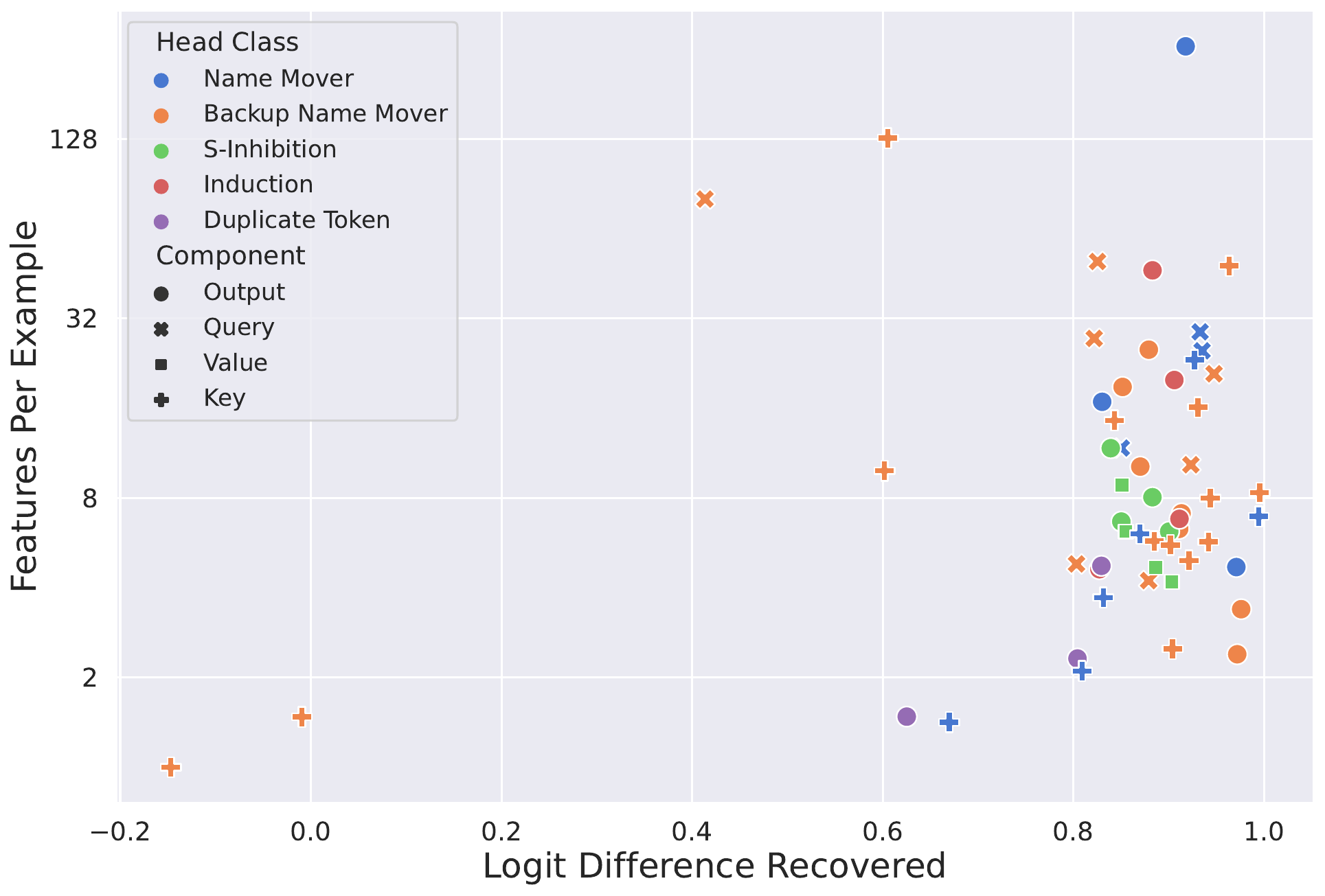}
    \caption{Metrics for our chosen task-specific SAEs for each relevant node in
    the IOI circuit. The x-axis shows the absolute value of the difference in
    logit differences between a clean run of the model, and a run where the
    activations at the given node are replaced by the SAE's reconstructions,
    normalized by the difference between the clean logit difference and the
    logit difference when the node is mean-ablated instead. The y-axis shows the
    average number of features active per prompt.}
    \label{fig:sae-sweep}
\end{figure}

\begin{figure}[ht]
    \centering
    \includegraphics[width=\linewidth]{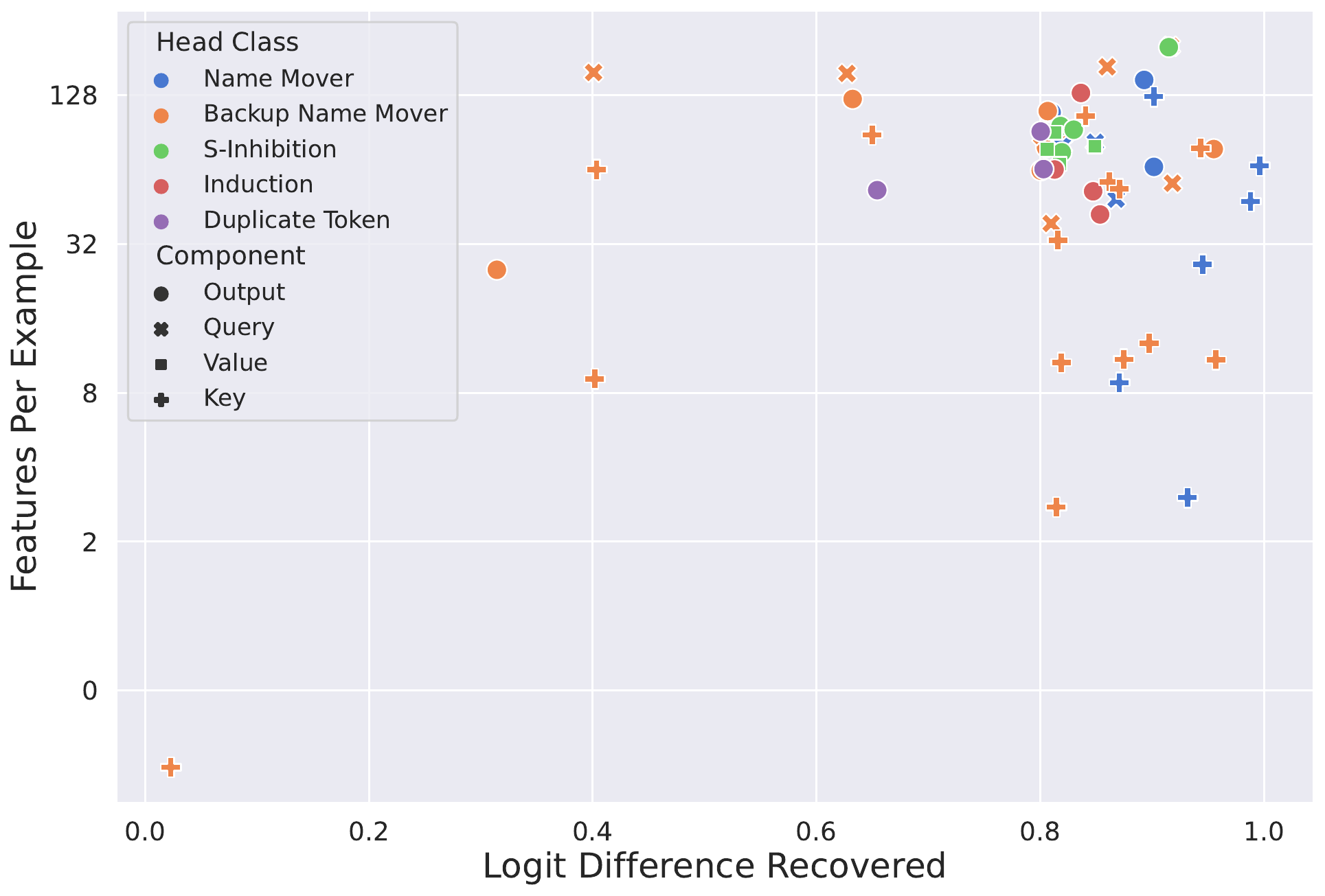}
    \caption{Counterpart to Figure \ref{fig:sae-sweep} where the decoder vectors are frozen during SAE training.}
    \label{fig:sae-sweep-freeze-decoder}
\end{figure}

\begin{figure}[ht]
    \centering
    \includegraphics[width=\linewidth]{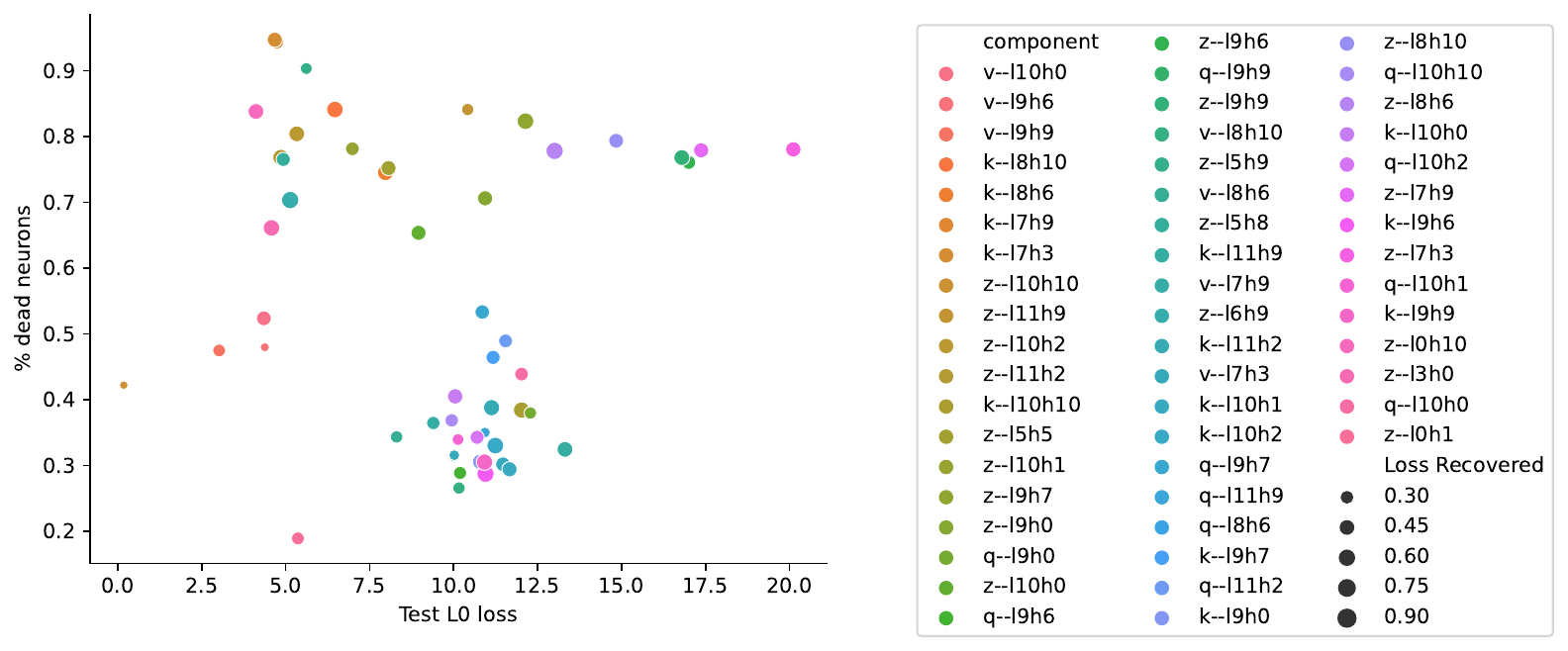}
    \caption{$\ell_0$ loss versus fraction of dead neurons for our full-distribution SAEs.}
    \label{fig:l0-vs-dead-neurons}
\end{figure}

\begin{figure}[ht]
    \centering
    \includegraphics[width=\linewidth]{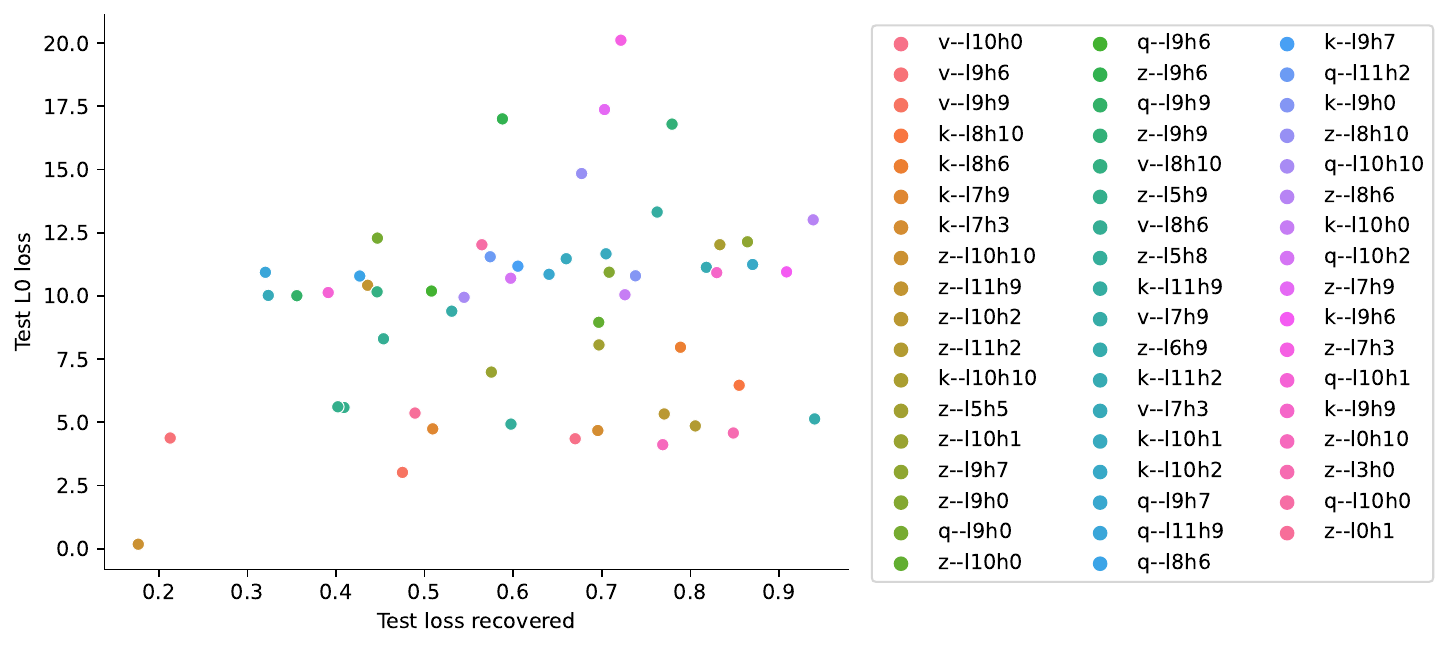}
    \caption{$\ell_0$ loss versus loss recovered (against a mean ablation) for
    our full-distribution SAEs.}
    \label{fig:l0-vs-loss-recovered}
\end{figure}

\begin{figure}[ht]
    \centering
    \includegraphics[width=\linewidth]{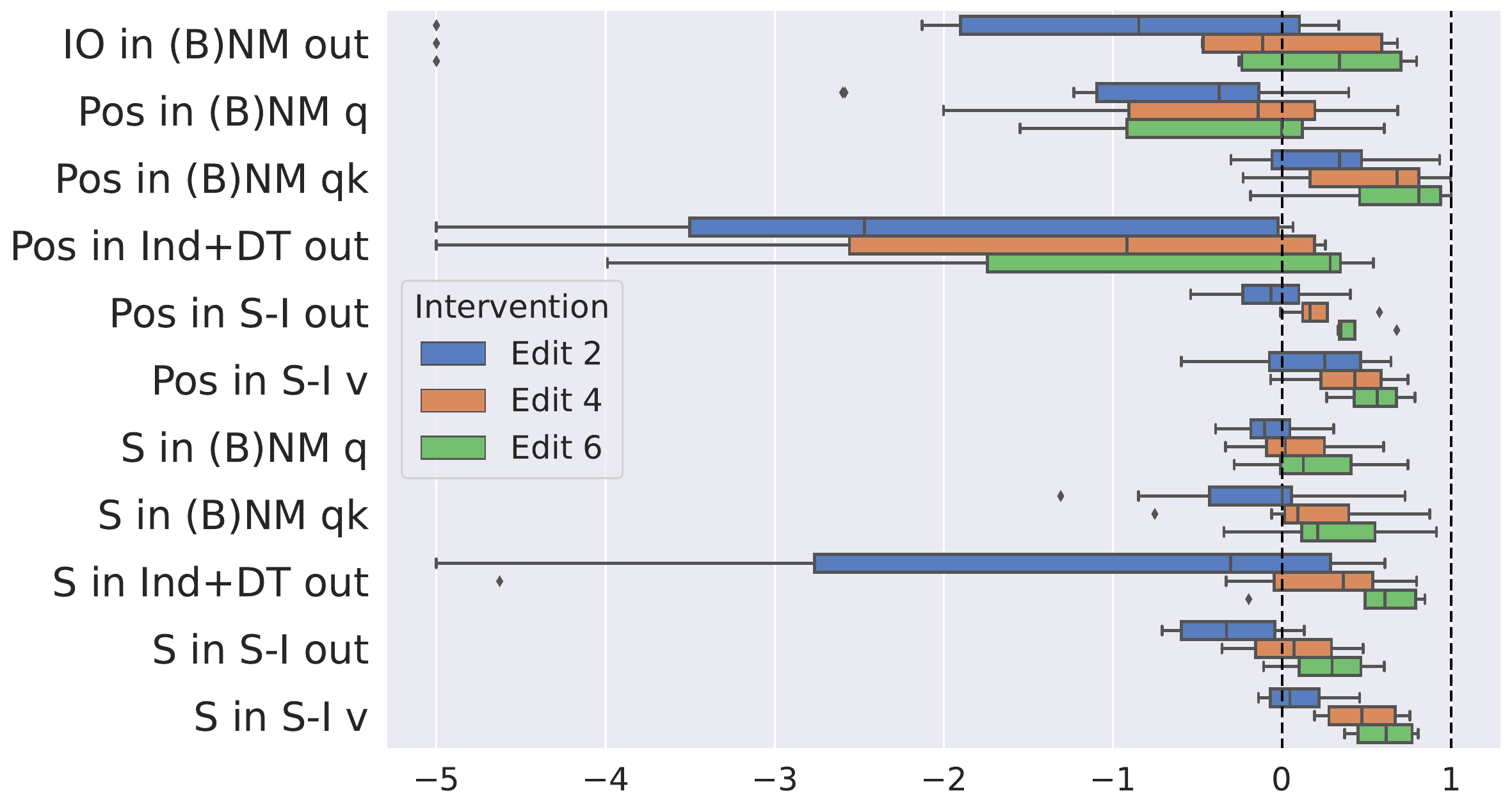}
    \caption{Distribution of the average feature weight removed by our
    interpretation-agnostic edits using SAE feature dictionaries, over all
    locations in a given cross-section, normalized by the corresponding weight
    for the supervised feature dictionaries. Weight removed values are
    transformed linearly so that a value of $0$ indicates that the weight
    removed by the edit equals the weight removed by the corresponding `ground
    truth' supervised edit; and a value of $1$ indicates that the edit removed a
    total weight of $1$, meaning that the edit essentially overwrites all SAE
    features present in the activation. Negative values are clipped at $-5$ to preserve readability.}
    \label{fig:removed-weight-sae}
\end{figure}

\begin{figure}[ht]
    \centering
    \includegraphics[width=\linewidth]{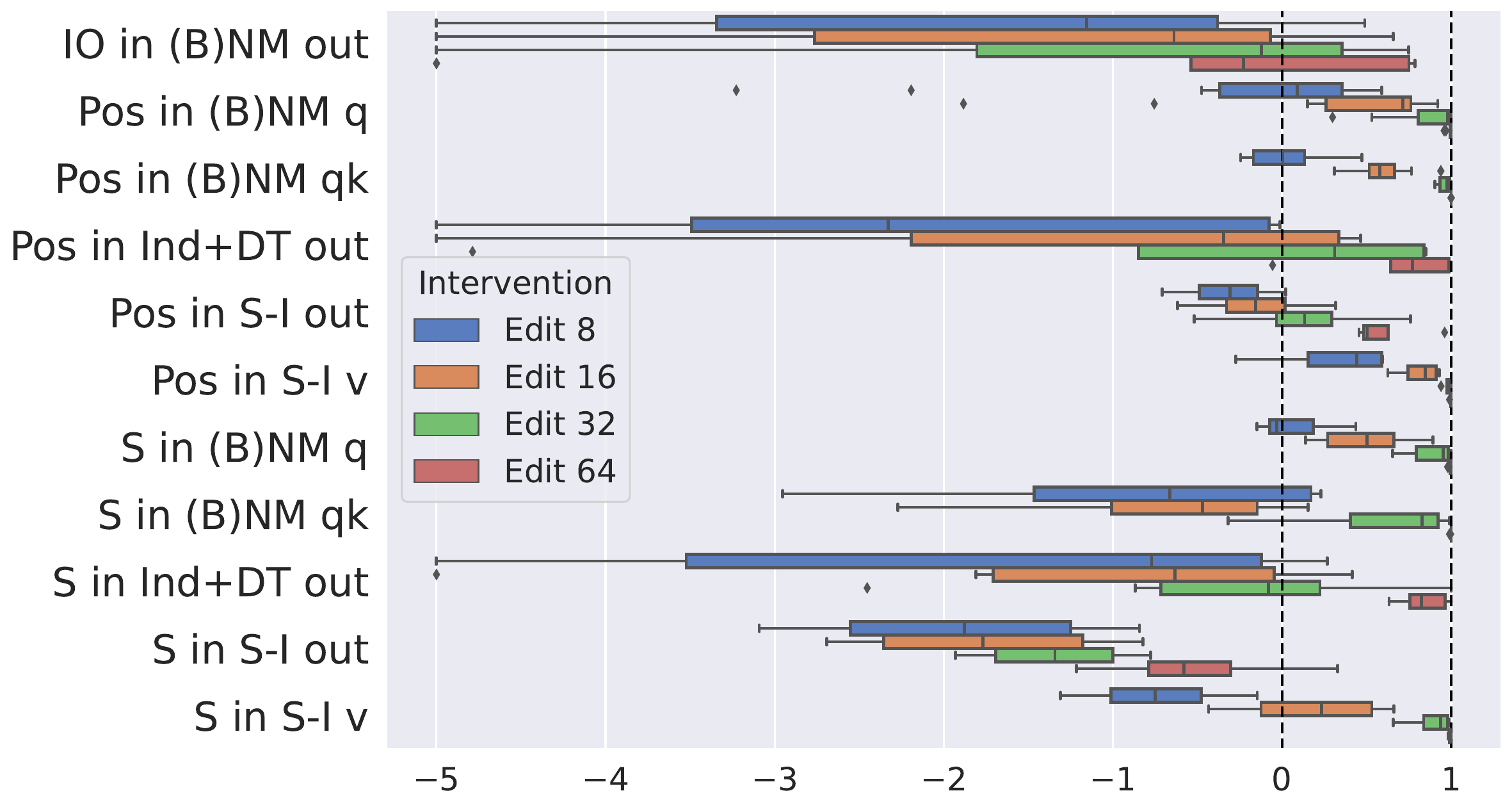}
    \caption{Counterpart of Figure \ref{fig:removed-weight-sae} for full-distribution SAEs, when editing using features with high F1 score for the attribute}
    \label{fig:removed-weight-sae-full}
\end{figure}

\begin{figure}[ht]
    \centering
    \includegraphics[width=\linewidth]{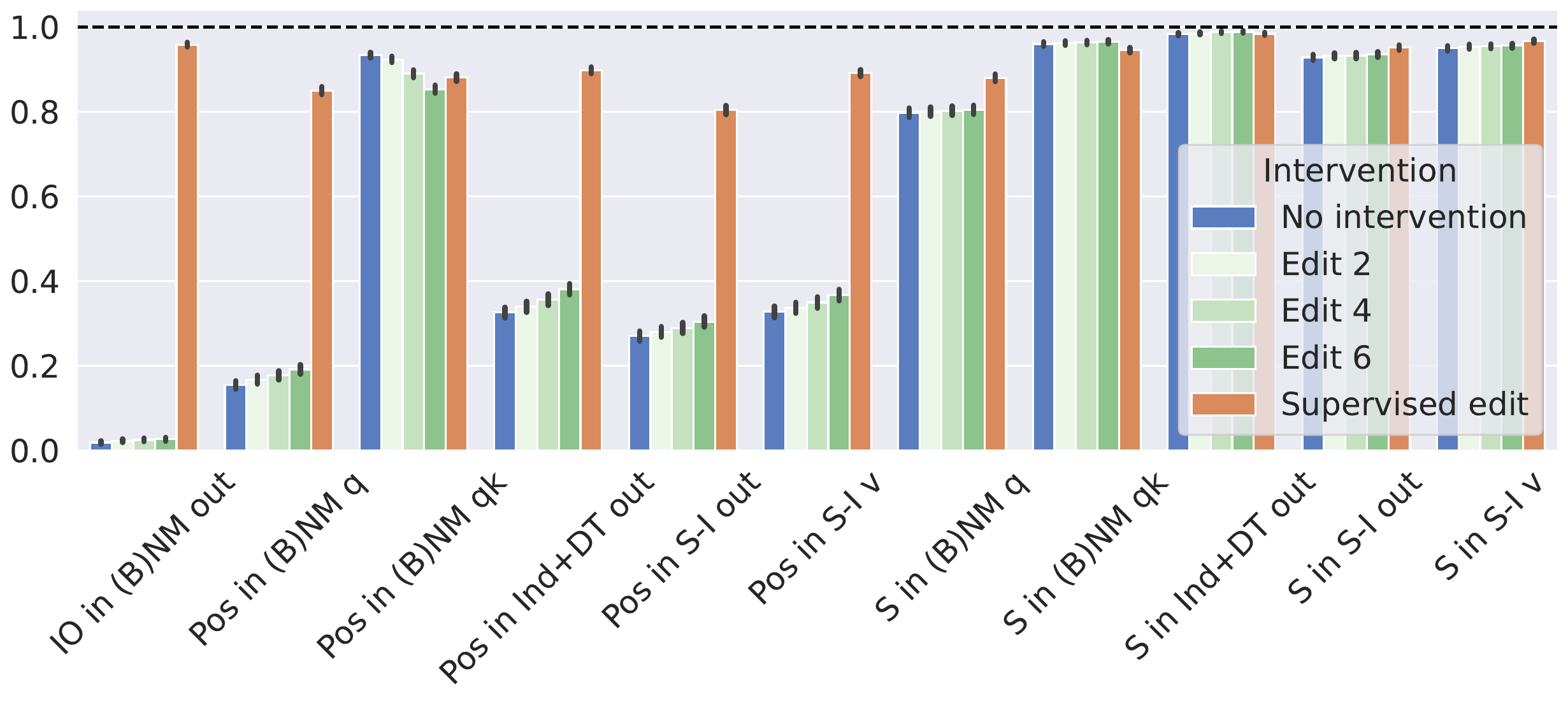}
    \caption{Counterpart to Figure \ref{fig:edit-accuracy}, where the
    decoder vectors are frozen during SAE training.}
    \label{fig:agnostic-editing-accuracy-freeze-decoder}
\end{figure}

\begin{figure}[ht]
    \centering
    \includegraphics[width=\linewidth]{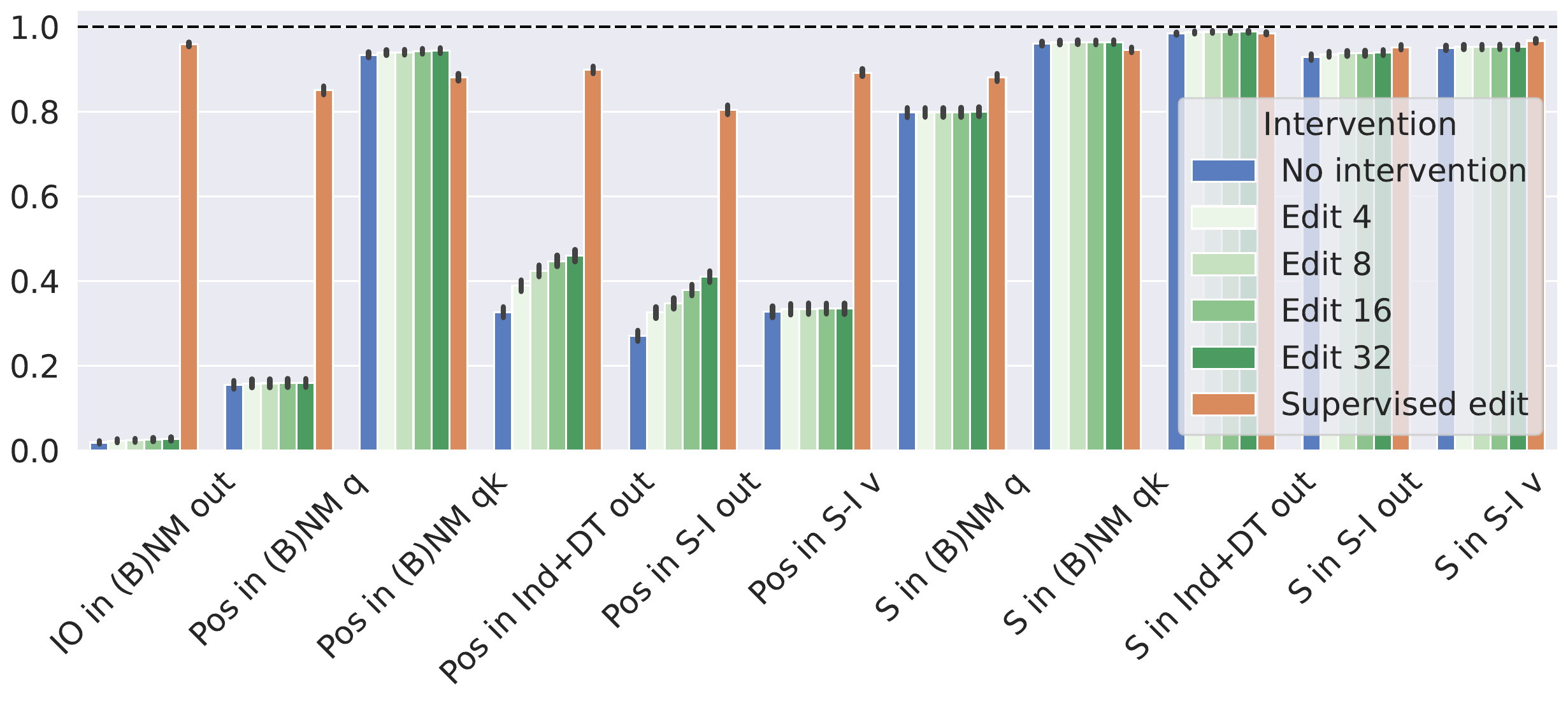}
    \caption{Counterpart to Figure \ref{fig:edit-accuracy}, where we use
    full-distribution SAEs instead. Here, we need to change a much higher number of features in order to have a noticeable effect (and sometimes editing even 32 features fails)}
    \label{fig:agnostic-editing-accuracy-webtext}
\end{figure}

\begin{figure}[ht]
    \centering
    \includegraphics[width=\textwidth]{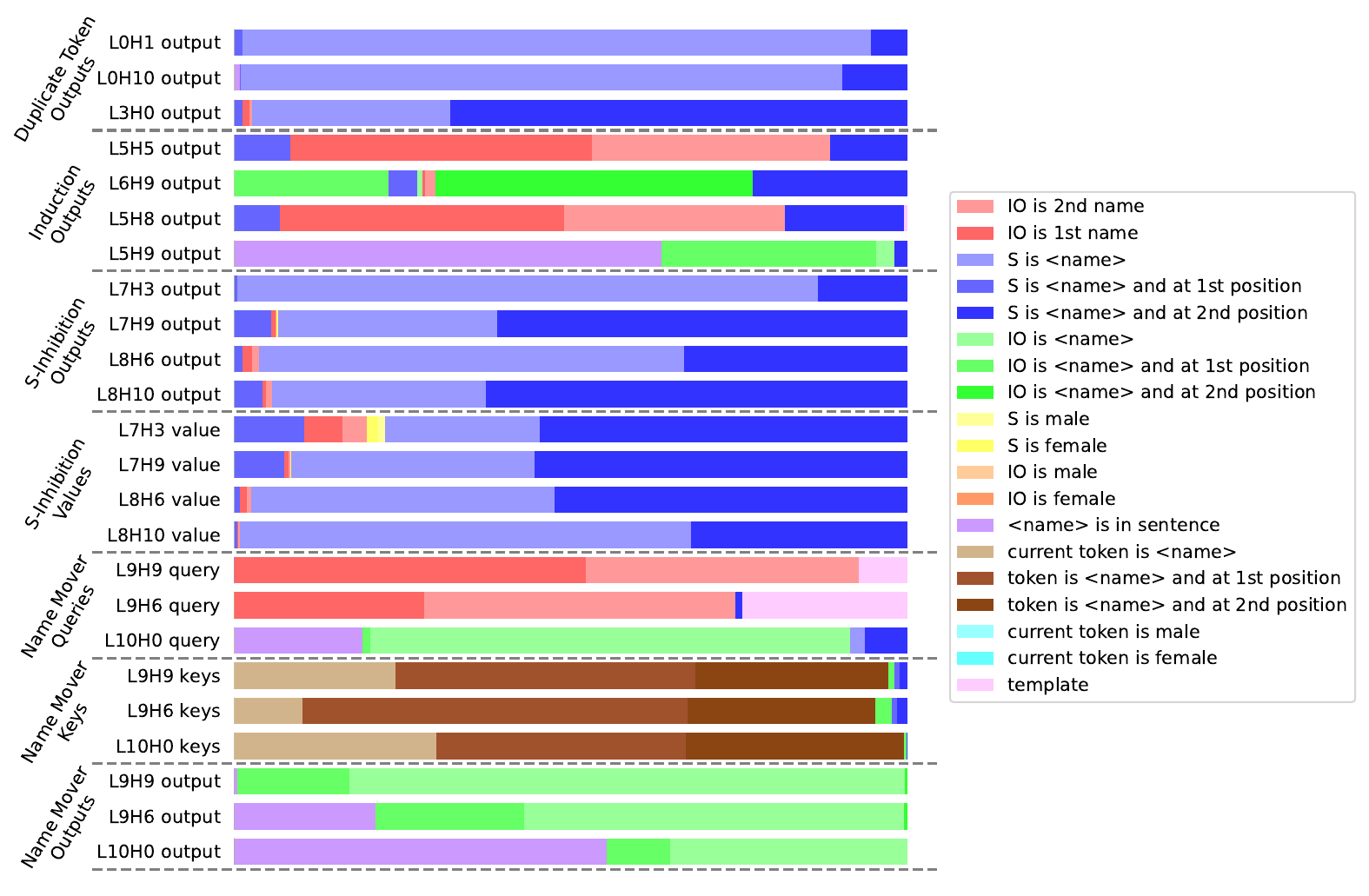}
    \caption{Interpreting the features learned by the task SAEs. For each node in the main IOI circuit (without backup/negative name movers), we show the distribution of the features which have an explanation with $F_1$ score above a threshold. The SAE chosen at each node is the one with the most interpretable features out of all SAEs trained on this node during our hyperparameter sweep.}
    \label{fig:task-sae-interp-most-interp}
\end{figure}

\begin{figure}[ht]
    \centering
    \includegraphics[width=\textwidth]{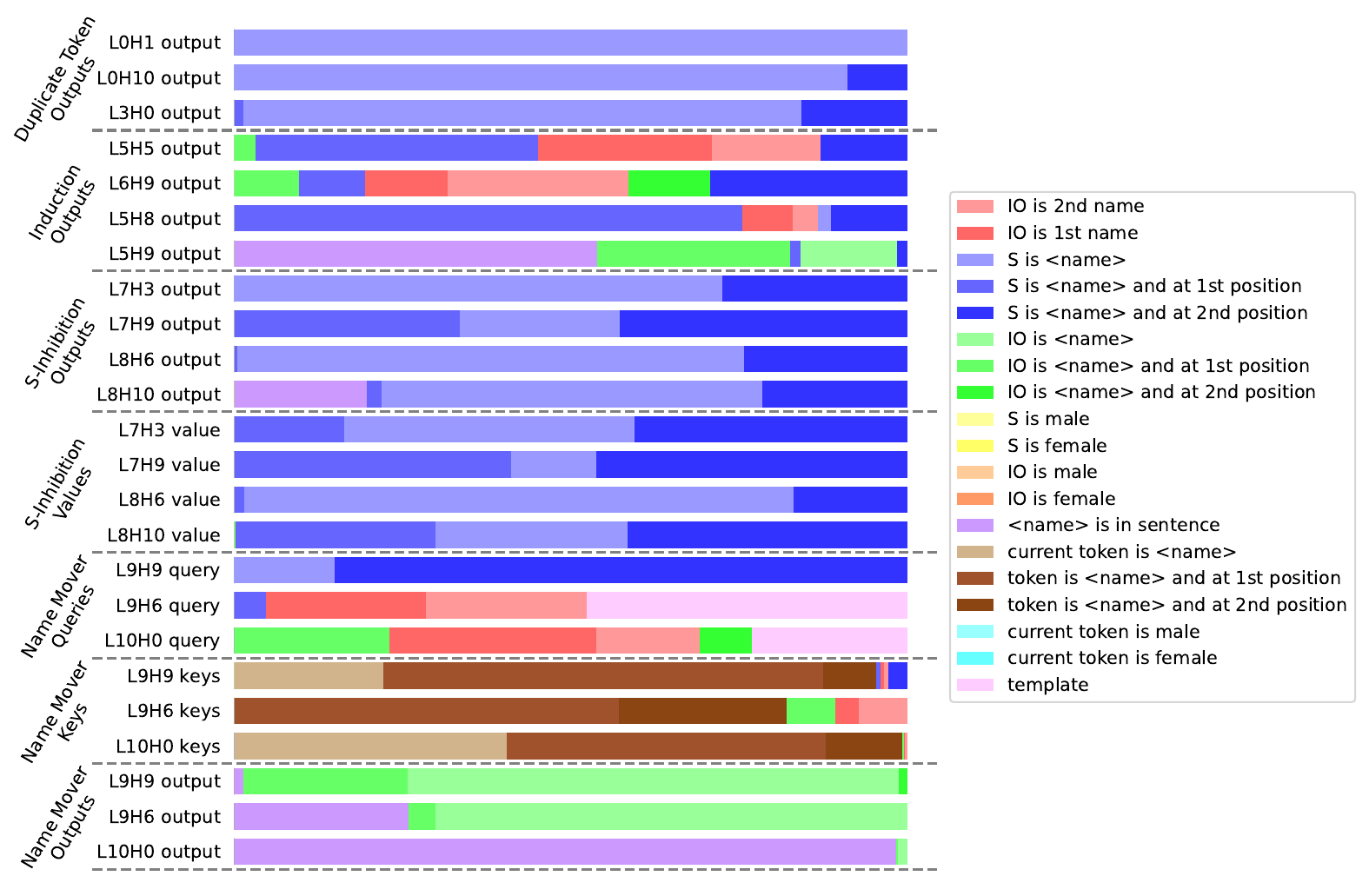}
    \caption{Interpreting the features learned by the task SAEs. This is a counterpart to Figure \ref{fig:task-sae-interp-most-interp} for the SAEs chosen based on the $\ell_0$ and logit difference recovered metrics.}
    \label{fig:task-sae-interp-best-metrics}
\end{figure}

\begin{figure}[ht]
    \centering
    \includegraphics[width=\textwidth]{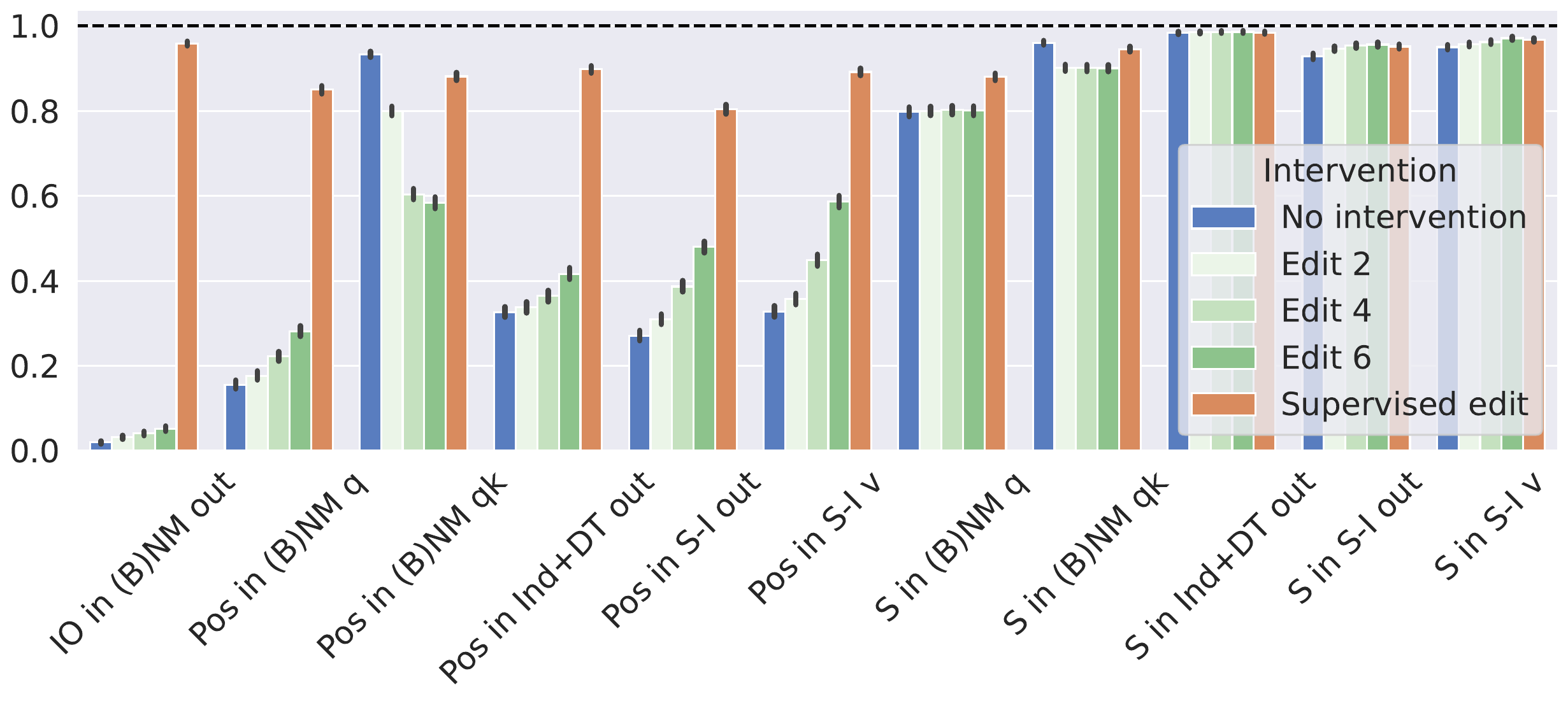}
    \caption{Interpretation-aware sparse control, using task SAE features with
    the highest $F_1$ score with respect to the given attribute.}
    \label{fig:editing-accuracy-sae-interp}
\end{figure}

\begin{figure}[ht]
    \centering
    \includegraphics[width=\textwidth]{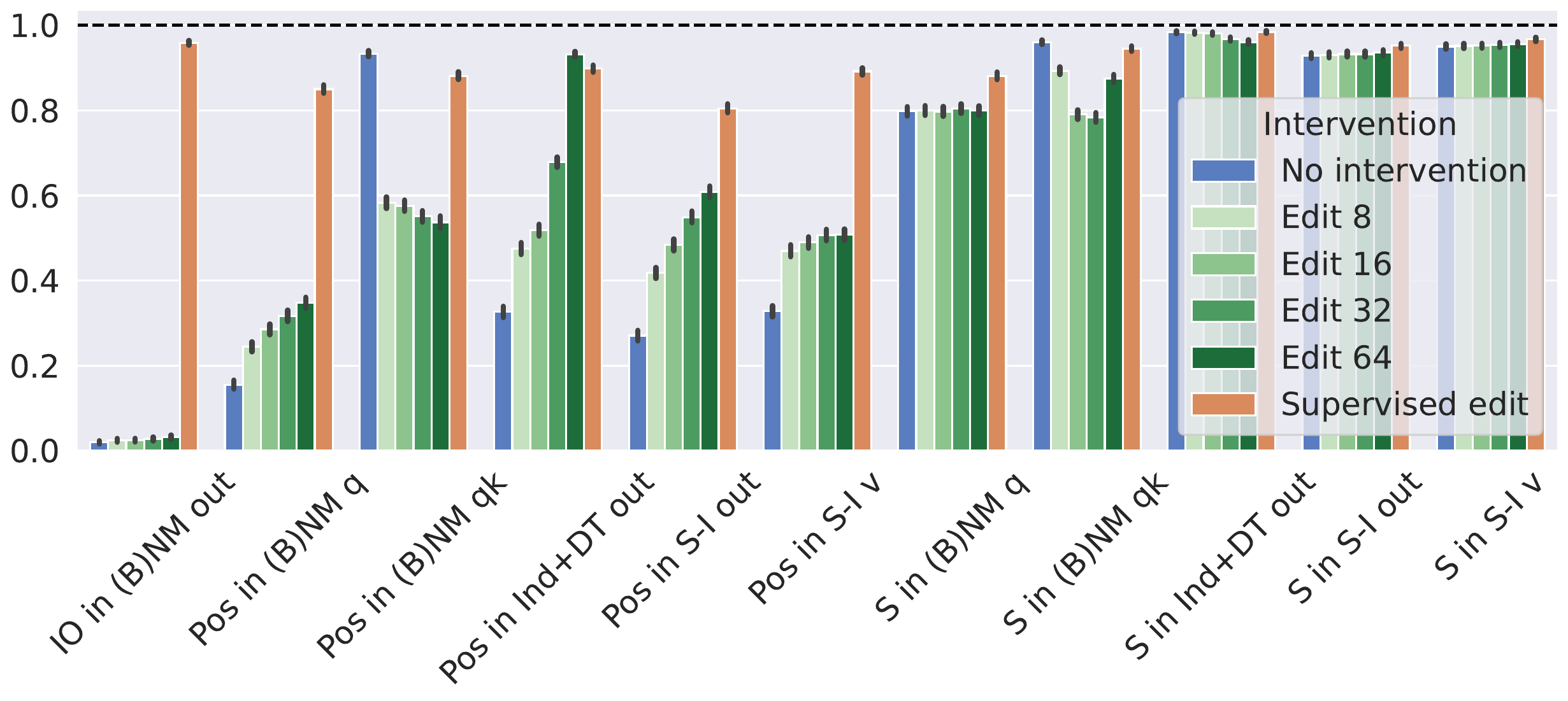}
    \caption{Counterpart of Figure \ref{fig:editing-accuracy-sae-interp} with full-distribution SAEs.}
    \label{fig:editing-accuracy-sae-interp-webtext}
\end{figure}

\begin{figure}[ht]
    \centering
    \includegraphics[width=\textwidth]{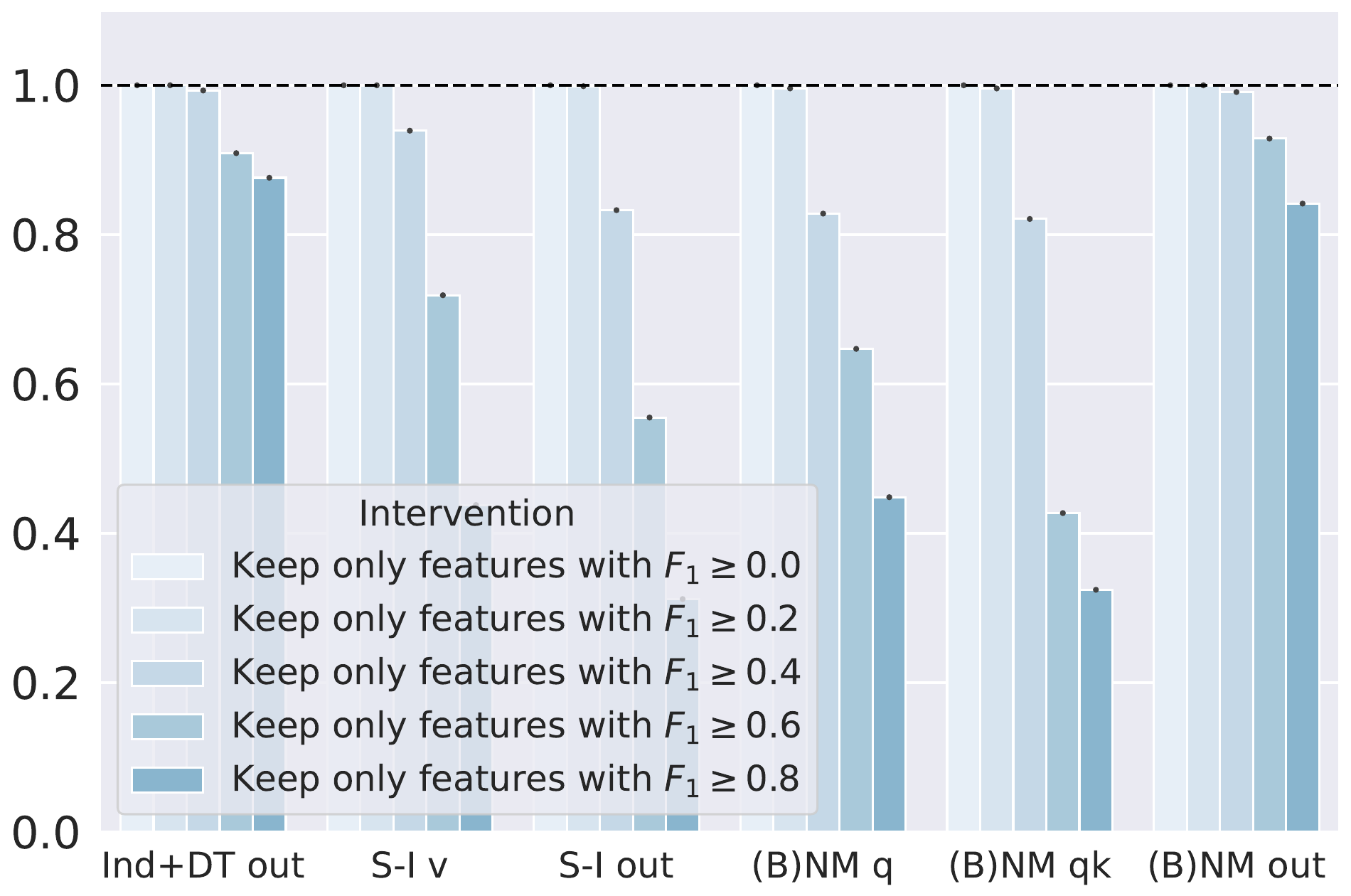}
    \caption{Measuring the \textbf{sufficiency} of interpretable features for task SAEs:
    effect of subtracting features with the lowest $F_1$ score
    from activations on logit difference. A value of 1 is best.}
    \label{fig:interp-sufficiency}
\end{figure}

\begin{figure}[ht]
    \centering
    \includegraphics[width=\textwidth]{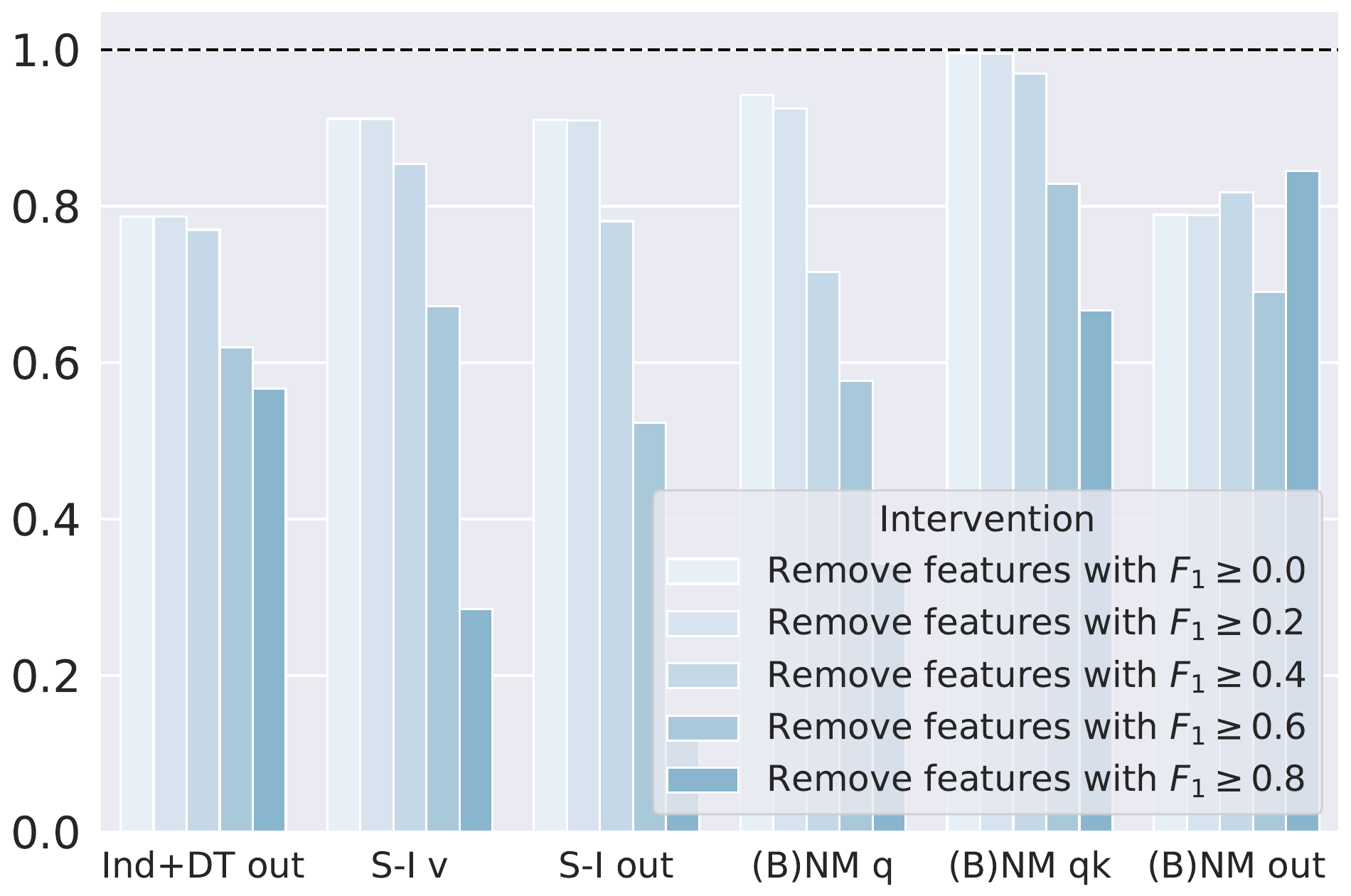}
    \caption{Measuring the \textbf{necessity} of interpretable features for task SAEs:
    effect of removing features with the highest $F_1$ score from
    activations on the logit difference. Values are rescaled linearly so that a value of
    1 corresponds to perfect recovery of the logit difference achieved by mean
    ablation (i.e., ideal intervention removing all features). A value of 1 is best.}
    \label{fig:interp-necessity}
\end{figure}

\end{document}